\def\eqref#1{equation~\ref{#1}}
\def\1{\bm{1}}
\def\eps{{\epsilon}}
\def\rx{{\textnormal{x}}}
\def\ry{{\textnormal{y}}}
\def\rz{{\textnormal{z}}}
\def\rvepsilon{{\mathbf{\epsilon}}}
\def\rvx{{\mathbf{x}}}
\def\ervx{{\textnormal{x}}}
\def\rmX{{\mathbf{X}}}
\def\ermX{{\textnormal{X}}}
\def\vx{{\bm{x}}}
\DeclareMathAlphabet{\mathsfit}{\encodingdefault}{\sfdefault}{m}{sl}
\SetMathAlphabet{\mathsfit}{bold}{\encodingdefault}{\sfdefault}{bx}{n}
\newcommand{\tens}[1]{\bm{\mathsfit{#1}}}
\def\tO{{\tens{O}}}
\def\tQ{{\tens{Q}}}
\def\tX{{\tens{X}}}
\def\gG{{\mathcal{G}}}
\def\sQ{{\mathbb{Q}}}
\def\sX{{\mathbb{X}}}
\newcommand{\E}{\mathbb{E}}
\newcommand{\R}{\mathbb{R}}
\DeclareMathOperator*{\argmin}{arg\,min}
\newtheorem{thm}{Theorem}[section]
\newtheorem{lemma}[thm]{Lemma}
\newtheorem{cor}[thm]{Corollary}
\newtheorem{prop}[thm]{Proposition}
\theoremstyle{definition}
\def\sel{e}
\def\norm{\bar{N}}
\def\maxcliques{\mathcal{C}}
\def\covernum{N}
\newcommand{\iid}{\overset{\text{iid}}{\sim}}
\def\tO{\widetilde{O}}
\def\tQ{\widetilde{Q}}
\def\tsQ{\widetilde{\sQ}}
\def\tf{\tilde{f}}
\def\tq{\tilde{q}}
\def\tw{\tilde{w}}
\def\tcF{\widetilde{\mathcal{F}}}
\def\hp{\hat{p}}
\def\hpsi{\widehat{\psi}}
\def\Vp{{V'}} % We are using V' a lot in subscrpts, would be nice to avoid _{}
\newcommand{\N}{\mathbb{N}}
\newcommand\numberthis{\addtocounter{equation}{1}\tag{\theequation}}
\title{Dimension-independent rates for structured neural density estimation}
\author{Robert A.~Vandermeulen$^{*}$ \and Wai Ming Tai$^{\dag}$ \and Bryon Aragam}
\begin{document}

\maketitle

\begingroup
{
\let\thefootnote\relax\footnote{$^{*}$Email: \href{mailto:robert.anton.vandermeulen@gmail.com}{robert.anton.vandermeulen@gmail.com}.
$^{\dag}$The work was done when the author was at Nanyang Technological University and was supported by Singapore AcRF Tier 2 grant MOE-T2EP20122-0001.}}
\endgroup

\begin{abstract}
We show that deep neural networks achieve dimension-independent rates of convergence for learning structured densities such as those arising in image, audio, video, and text applications. More precisely, we demonstrate that neural networks with a simple $L^2$-minimizing loss achieve a rate of $n^{-1/(4+r)}$ in nonparametric density estimation when the underlying density is Markov to a graph whose maximum clique size is at most $r$, and we provide evidence that in the aforementioned applications, this size is typically constant, i.e., $r=O(1)$. We then establish that the optimal rate in $L^1$ is $n^{-1/(2+r)}$ which, compared to the standard nonparametric rate of $n^{-1/(2+d)}$, reveals that the effective dimension of such problems is the size of the largest clique in the Markov random field. These rates are independent of the data's ambient dimension, making them applicable to realistic models of image, sound, video, and text data. Our results provide a novel justification for deep learning's ability to circumvent the curse of dimensionality, demonstrating dimension-independent convergence rates in these contexts.
\end{abstract}

\section{Introduction}
Deep learning has emerged as a remarkably effective technique for numerous statistical problems that were historically extremely challenging, especially in high-dimensional settings. In the realm of deep generative models, which can be framed as density estimation, deep methods have showcased the ability to learn density functions with thousands or millions of dimensions using merely a few million data points \citep{oussidi18,ho20,cao24}. This stands in stark contrast to standard density estimation theory, which would demand astronomical sample sizes due to the curse of dimensionality. The \emph{manifold hypothesis} is perhaps the most widely accepted explanation for deep learning's ability to circumvent this curse \citep{bengio13,brahma16}. This hypothesis posits that, despite a distribution's ambient space being high-dimensional, the mass of the density is heavily concentrated around a lower-dimensional subset of that space, such as an embedded manifold. As we will argue later, for complex data types of significant interest—images, video, sound, and text—this assumption is intimately linked to spatio-temporal locality. For instance, covariates that are nearby spatio-temporally, e.g., neighboring pixels, tend to be strongly dependent, suggesting they lie near a lower-dimensional subspace.
\begin{figure}
    \begin{subfigure}[b]{0.45\textwidth}
    \centering
    \setlength{\fboxsep}{0pt} 
    \begin{minipage}{0.45\textwidth}
        \fbox{\includegraphics[width=\linewidth]{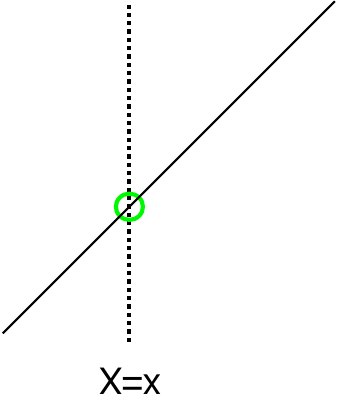}}
    \end{minipage}%
    \hfill
    \begin{minipage}{0.45\textwidth}
        \fbox{\includegraphics[width=\linewidth]{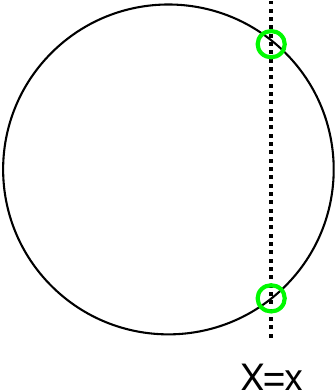}}
    \end{minipage}
    \caption{}
    \label{fig:side-by-side}
    \end{subfigure}
    \begin{subfigure}[b]{0.45\textwidth}
       \centering
    \begin{tikzpicture}[scale=0.5]
    % Draw the nodes
    \node (n1) at (0,1) [circle, draw] {};
    \node (n2) at (1,1) [circle, draw] {};
    \node (n3) at (2,1) [circle, draw] {};
    \node (n4) at (3,1) [circle, draw] {};

    % Draw the edges
    \draw (n1) -- (n2);
    \draw (n2) -- (n3);
    \draw (n3) -- (n4);

    % Label the nodes
    % \node at (n1) [below] {1};
    % \node at (n2) [below] {2};
    % \node at (n3) [below] {3};
    % \node at (n4) [below] {4};

    % Add the path label
    % \node at (1.5, -1) {path / $r=\log d$ / $n \gtrsim (1/\epsilon)^{\log d}$};
    \node at (1.5, -1)[align=center] {{\bf Path} ($L_4$)};
\end{tikzpicture}~
\begin{tikzpicture}[scale=0.5]
    % Draw the nodes
    \node (n1) at (0,0) [circle, draw] {};
    \node (n2) at (1,0) [circle, draw] {};
    \node (n3) at (2,0) [circle, draw] {};
    \node (n4) at (0,1) [circle, draw] {};
    \node (n5) at (1,1) [circle, draw] {};
    \node (n6) at (2,1) [circle, draw] {};
    \node (n7) at (0,2) [circle, draw] {};
    \node (n8) at (1,2) [circle, draw] {};
    \node (n9) at (2,2) [circle, draw] {};

    % Draw the edges
    \draw (n1) -- (n2);
    \draw (n2) -- (n3);
    \draw (n4) -- (n5);
    \draw (n5) -- (n6);
    \draw (n7) -- (n8);
    \draw (n8) -- (n9);
    \draw (n1) -- (n4);
    \draw (n4) -- (n7);
    \draw (n2) -- (n5);
    \draw (n5) -- (n8);
    \draw (n3) -- (n6);
    \draw (n6) -- (n9);

    % Add the grid label
    % \node at (1, -1) {grid / $r=\sqrt{d}$ / $n \gtrsim (1/\epsilon)^{\sqrt{d}}$};
    \node at (1, -1)[align=center] {{\bf Grid} ($L_{3\times 3}$)};
\end{tikzpicture}
\begin{tikzpicture}[scale=0.5]
    \node (n1) at (0,0) [circle, draw] {};
    \node (n2) at (1,0) [circle, draw] {};
    \node (n3) at (2,0) [circle, draw] {};
    \node (n4) at (0,1) [circle, draw] {};
    \node (n5) at (1,1) [circle, draw] {};
    \node (n6) at (2,1) [circle, draw] {};
    \node (n7) at (0,2) [circle, draw] {};
    \node (n8) at (1,2) [circle, draw] {};
    \node (n9) at (2,2) [circle, draw] {};
    
    \node (n10) at (3,2) [circle, draw] {};
    \node (n11) at (3,1) [circle, draw] {};
    \node (n12) at (3,0) [circle, draw] {};
    % Draw the edges
    \draw (n1) -- (n2) -- (n3) -- (n6) -- (n9) -- (n8) -- (n7) -- (n4) -- (n1);
    \draw (n4) -- (n5) -- (n6);
    \draw (n2) -- (n5) -- (n8);
    % Draw the diagonal edges
    \draw (n1) -- (n5) -- (n9);
    \draw (n3) -- (n5) -- (n7);
    \draw (n2) -- (n4);
    \draw (n2) -- (n6);
    \draw (n4) -- (n8);
    \draw (n6) -- (n8);

    %new additoins
    
    \draw (n10) -- (n11);
    \draw (n11) -- (n12);
    \draw (n9) -- (n10);
    \draw (n9) -- (n11);
    \draw (n6) -- (n10);
    \draw (n6) -- (n11);
    \draw (n6) -- (n12);
    \draw (n3) -- (n11);
    \draw (n3) -- (n12);
    % Add the grid label
    \node at (1, -1)[align=center] {{\bf Grid with Diagonals} ($L_{3\times 4}^+$)};
\end{tikzpicture} 
\caption{}
\label{fig:simple-graph-examples}
    \end{subfigure}
    \caption{(a): Examples of conditioning $X = x$ (dotted lines) when a density's support is a manifold (solid lines). (b): \emph{Highly simplified} examples of common MRF graphs. Paths correspond to sequential data and grids to spatial.}
\end{figure}

This paper investigates the benefits of leveraging the converse structure: The \emph{independence} of spatio-temporally distant covariates. Covariates that are spatio-temporally distant often exhibit near-independence, particularly when conditioned on intervening covariates. Consider a sound recording: two one-second segments separated by a minute might share common elements, such as the same speaker. However, given the intervening minute of audio, these segments become effectively independent. The minute-long interval contains sufficient information to render the separated segments mutually uninformative. This principle of conditional independence extends to various data types, including images, where pixels far apart tend to be independent when conditioned on the surrounding region. This sort of dependence structure is naturally described by a \emph{Markov random field} (MRF).

In this work, we show that, for a very general class of densities that are Markov to an undirected graph (a.k.a. MRF), density estimation can be achieved with a neural network and a simple $L^2$ minimizing loss at a rate of approximately $n^{-1/(4+r)}$, where $r$ is the size of the largest clique in the graph. To compare, without the MRF assumption, the same class of densities can be estimated at a rate no better than $n^{-1/(2+d)}$, implying that the effective dimension is approximately $r$. We argue and show evidence that the MRF assumption is valid for many data types where neural networks excel (images, sound, video, etc.; see Figure \ref{fig:simple-graph-examples}) and demonstrate that this approach to density estimation causes the effective dimension $r$ of these problems to remain constant or at least be orders of magnitude smaller than the ambient dimension $d$.

\section{Background and Related Work}
In this section, we lay the foundation for our main results by introducing key concepts and related work.
% in the field of high-dimensional density estimation. 
We begin by discussing traditional approaches to nonparametric density estimation and their limitations, particularly the curse of dimensionality. We then explore the manifold hypothesis, a widely accepted explanation for the success of deep learning in high-dimensional settings. Following this, we introduce Markov random fields (MRFs) and their applications in modeling various types of data, including images and sequential information. This background will provide the necessary context for understanding the novelty of our approach, which leverages MRF structures to achieve dimension-independent convergence rates in density estimation, offering an alternative perspective to the manifold hypothesis.

\subsection{Nonparametric Density Estimation}
Density estimation is the task of estimating a $d$-dimensional target probability density $p$ from observed data, $\rvx_1,\ldots,\rvx_n \iid p$. 
Of course, this is a classical problem for which we do not intend to provide a comprehensive overview, and instead refer readers to books such as \cite{devroye1985nonparametric,devroye01,tsybakov2009introduction} for additional background.
Historically, $p$ was assumed to belong to a specific class of distributions, such as Gaussian, and the estimator $\hp_n$ was selected accordingly. For more complex $p$, nonparametric density estimators like kernel density estimators or histograms are employed \citep{devroye1985nonparametric,devroye01}. These methods converge to $p$ for \emph{any} density given sufficient data, but notably suffer from the curse of dimensionality. For instance, when $p$ is Lipschitz continuous\footnote{A function $f:\R^d\to \R$ is \emph{Lipschitz continuous} if there exists $L\ge 0$ such that $|f(x)-f(y)| \le L \left\|x-y\right\|_2$ for all $x,y$.} and estimator parameters are optimally chosen, the $L^1$ error, $\int \left|p(x)- \hp_n(x)\right| dx = \left\|p-\hp_n\right\|_1$, converges at rate $O(n^{-1/(2+d)})$. This \emph{nonparametric rate} is known to be optimal for Lipschitz continuous densities, and numerous studies over the past decade have established that neural networks and generative models can achieve this optimal rate \citep{liang2017well,singh2018nonparametric, uppal2019nonparametric,oko2023diffusion,zhang2024minimax,kwon2024minimax}. 
% Examples of models achieving this rate include GANs, variational autoencoders, and diffusion models.
This rate implies that the sample complexity grows \emph{exponentially} in the dimension $d$, making
% Given this theoretical limitation, 
the success of deep neural networks for estimating densities with millions of dimensions all the more remarkable. 
% The efficacy of neural networks in high-dimensional settings 
This is often explained via the \emph{manifold hypothesis}.

\subsection{Manifold Hypothesis}
The manifold hypothesis posits that many high-dimensional real-world distributions concentrate around lower-dimensional spaces, such as submanifolds of the ambient space. This assumption underpins, either explicitly or implicitly, numerous machine learning methods. For instance, principal component analysis assumes that a distribution is concentrated around an affine subspace, while sparsity assumptions can be formulated as a union of manifolds, where the subsets of the union are axis-aligned subspaces.

The success of deep learning methods in handling high-dimensional data, such as images, videos, and audio, is frequently attributed to the manifold hypothesis. Experimental validation of the manifold hypothesis has been conducted for image datasets. In \cite{pope21}, the authors determined that the intrinsic dimension of the ImageNet dataset lies between 25 and 40 dimensions, significantly lower than its ambient dimension. This hypothesis is closely linked to correlation and dependency between covariates. In images, for example, adjacent pixels $\rx_i$ and $\rx_j$ typically have similar values, causing the dataset to concentrate towards the linear subspace $\rx_i=\rx_j$, which is a submanifold. Figure \ref{fig:scatterplots}(a) illustrates this concept, showing the values of pixels $(8,8)$ and $(8,9)$ for 100 randomly selected images from the grayscaled CIFAR-10 dataset, where a strong concentration along the diagonal is evident.

Further supporting this hypothesis, \cite{carlsson08} discovered that the set of $3\times3$ pixel patches from natural images concentrates around a 2-dimensional manifold. Theoretically, distributions concentrating around lower-dimensional subsets of the ambient space have been shown to yield improved estimation properties. 
For instance, \cite{weed19} demonstrated that while a dataset typically converges at rate $n^{-1/d}$ to the true distribution in Wasserstein distance, when the dataset exhibits a lower $d'$-dimensional structure, it converges at the faster rate of $n^{-1/d'}$. Similar results illustrating the manifold hypothesis and its benefits can be found in 
\citet{pelletier2005kernel,ozakin2009submanifold,jiang2017kde,schmidt2019manifold,nakada2020adaptive,berenfeld2022estimating,jiao2023deep,tang2024adaptivity}. Another line of related work uses Barron functions, which are a class of functions inspired by neural networks, to achieve dimension-free rates \citep{barron1993universal,klusowski2018approximation,ma2022barron,cole2024score}, in contrast to our use of weaker Lipschitz-type assumptions.

%%% Copied over from NeurIPS submission
% There is a long line of literature on understanding how and when the curse of dimensionality can be avoided. Common assumptions include 
% the manifold assumption \citep{pelletier2005kernel,ozakin2009submanifold,jiang2017kde,schmidt2019manifold,nakada2020adaptive,berenfeld2022estimating,jiao2023deep}, 
% additive structure \citep{stone1985additive,raskutti2012minimax},
% compositional structure \citep{horowitz2007rate,juditsky2009nonparametric,kohler2017nonparametric,schmidt2017nonparametric,bauer2019deep,kohler2021rate,shen2021deep}, 
% low-rank structure 
% \citep{hall2003,hall2005,song13,amiridi22, vandermeulen2021, vandermeulen2023sample}
% and sparsity \citep{liu2007sparse,lafferty2008rodeo,yang2015minimax}.
% \citet{bach2017breaking} showed that neural networks are adaptive to many of these underlying structures. 

While the manifold hypothesis explains local dependencies, it's worth considering scenarios that deviate from this model. For example, the manifold hypothesis \emph{cannot} be satisfied when covariates are independent. For example, if two covariates $\rx\sim p_{\rx}$ and $\ry \sim p_{\ry}$ are independent, their joint density $p_{\rx,\ry}(x,y) = p_\rx(x)p_\ry(y)$ fills a rectangle in their product space. As one may expect, pixels that are distant from one another tend to become more independent. This phenomenon is illustrated in Figure \ref{fig:scatterplots}(d), which plots the grayscale values of pixels $(8,8)$ and $(14,28)$, showing a more dispersed pattern. This observation naturally leads to modeling the space of images as a Markov random field, where local dependencies are captured while allowing for independence between distant pixels.

% \begin{remark}

% \end{remark}
    
\subsection{Markov Random Fields}
A \emph{Markov random field} (MRF) consists of a random vector $\rvx = (\ervx_1,\ldots,\ervx_d)$ and a graph $\gG=(V,E)$, where the graph's vertices correspond to the entries of the random vector, i.e., $V = \{\ervx_1,\ldots,\ervx_d\}$. The graph encodes information about the conditional independence of the vector's entries. For a set $A = \{a_1,\ldots,a_{d'}\} \subset \{1,\ldots,d\}$, let $\rvx_A = \left(\ervx_{a_1},\ldots,\ervx_{a_{d'}}\right)$. Given three disjoint subsets $A,B,C$ of $\{\ervx_1,\ldots,\ervx_d\}$, the graph $\gG$ indicates that the random vectors $\rvx_A$ and $\rvx_B$ are conditionally independent given $\rvx_C$ if there is no path from $A$ to $B$ that doesn't pass through $C$.

Consider a simple example with random variables $\rx$, $\ry$, and $\rz$, where $\rx = \ry + \epsilon_\rx$ and $\rz = \ry + \epsilon_\rz$, with $\epsilon_\rx$, $\epsilon_\rz$, and $\ry$ being jointly independent. In this scenario, the distributions of $\rx$ and $\rz$ are conditionally independent given $\ry$. Figure \ref{fig:simple-mrf} illustrates the corresponding MRF graph for this example.


\begin{wrapfigure}{R}{0.3\textwidth}
    \centering
    \begin{tikzpicture}
    % Draw the nodes
    \node (n1) at (0,1) [circle, draw] {$\rx$};
    \node (n2) at (1,1) [circle, draw] {$\ry$};
    \node (n3) at (2,1) [circle, draw] {$\rz$};
    % Draw the edges
    \draw (n1) -- (n2);
    \draw (n2) -- (n3);
\end{tikzpicture}
\caption{An example MRF. The random variables $\rx$ and $\ry$ are independent given $\rz$.}
\label{fig:simple-mrf}
\end{wrapfigure}

It's important to understand that while an MRF conveys information about \emph{conditional independence}, the absence of such information in the MRF does not necessarily imply dependence in the actual data. In other words, covariates can be conditionally independent in reality even if this independence is not explicitly represented in the MRF structure. The MRF provides a conservative model of independence relationships, capturing known or assumed conditional independencies without ruling out additional independencies that may exist in the data. Consequently, any random vector associated with a complete graph—where all vertices are adjacent to one another—is a valid MRF, since it provides no information about the independence of the covariates. This is because every vertex is connected to every other vertex, so removing any number of vertices will never separate the graph into multiple components. Because it conveys no information about the conditional independence of the covariates, it even applies to a random vector where all entries are independent.

One of the most well-known MRFs is the Markov chain. A Markov chain of length $d$ corresponds to the ``path'' graph $L_d$ of $d$ random variables. The above example with $\rx,\ry,\rz$ corresponds to the graph $L_3$ and the MRF corresponding to $L_4$ shown in Figure \ref{fig:simple-graph-examples}. In a Markov chain, the indices are often interpreted as a time parameter. A classic example is a gambling scenario: A person's money at time $t+1$ is conditionally independent of their total value at time $t-s$ (for $s>0$), given their value at time $t$. This property, known as the Markov property, encapsulates the idea that the future state depends only on the present state, not on past states. Markov chains have a long history of use for modeling sequential information, including audio and text data.

\begin{figure}[t]
    \centering
    % First image
    \begin{subfigure}[b]{0.24\textwidth}
        \centering
        \includegraphics[width=\textwidth]{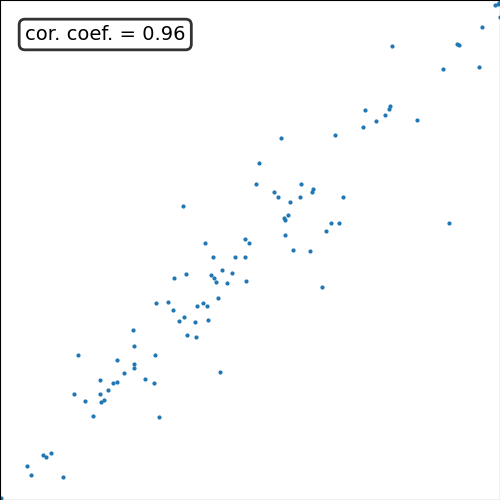}
        \caption{$(8,8)$ vs $(8,9)$}
        \label{fig:image1}
    \end{subfigure}
    \hfill
    % Second image
    \begin{subfigure}[b]{0.24\textwidth}
        \centering
        \includegraphics[width=\textwidth]{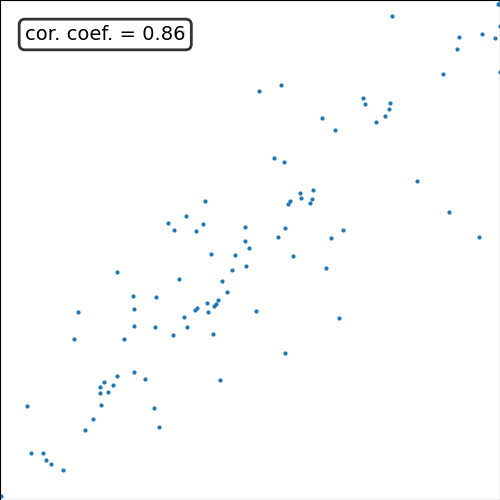}
        \caption{$(8,8)$ vs $(8,10)$}
        \label{fig:image2}
    \end{subfigure}
    \hfill
    % Third
    \begin{subfigure}[b]{0.24\textwidth}
        \centering
        \includegraphics[width=\textwidth]{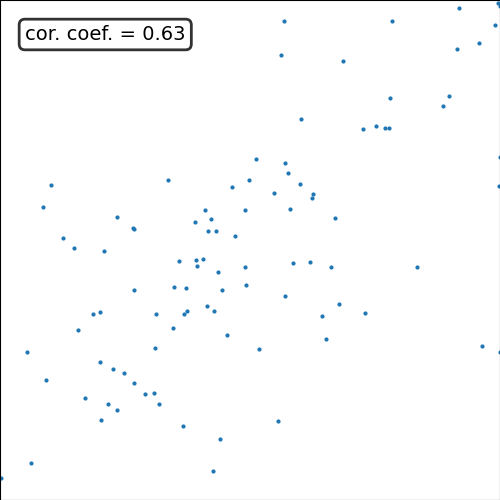}
        \caption{$(8,8)$ vs $(9,12)$}
        \label{fig:image3}
    \end{subfigure}
    \hfill
    % Fourth
    \begin{subfigure}[b]{0.24\textwidth}
        \centering
        \includegraphics[width=\textwidth]{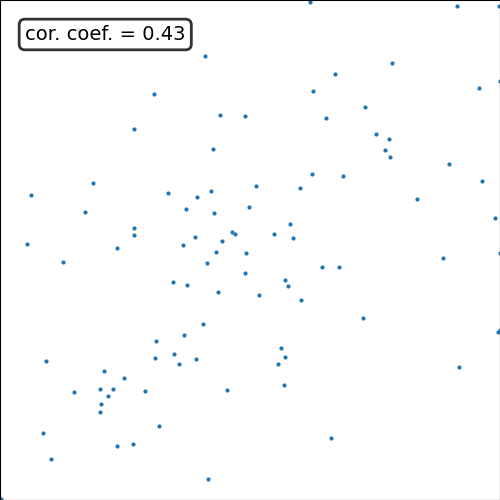}
        \caption{$(8,8)$ vs $(14,28)$}
        \label{fig:image4}
    \end{subfigure}

    %Second row
    \begin{subfigure}[b]{0.24\textwidth}
        \centering
        \includegraphics[width=\textwidth]{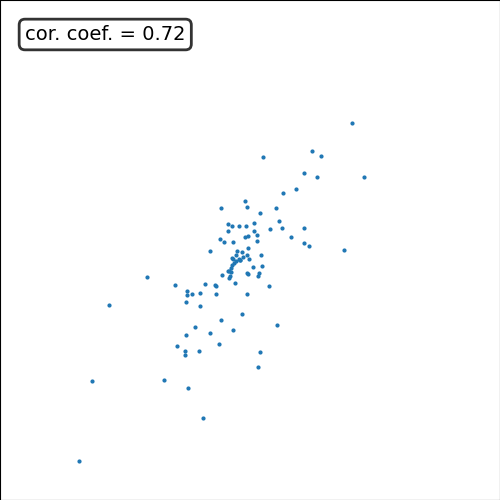}
        \caption{$(8,8)$ vs $(8,9)$ cond.}
        \label{fig:image1c}
    \end{subfigure}
    \hfill
    % Second image
    \begin{subfigure}[b]{0.24\textwidth}
        \centering
        \includegraphics[width=\textwidth]{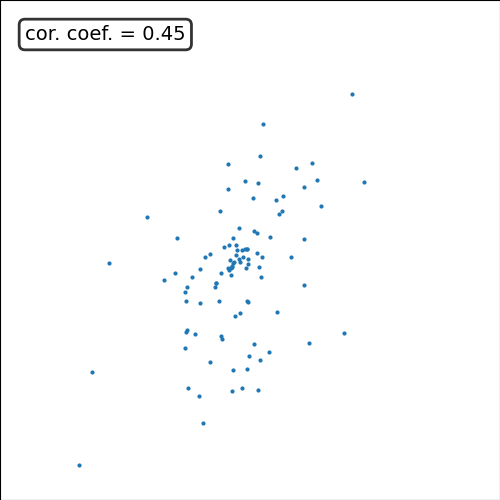}
        \caption{$(8,8)$ vs $(8,10)$ cond.}
        \label{fig:image2c}
    \end{subfigure}
    \hfill
    % Third
    \begin{subfigure}[b]{0.24\textwidth}
        \centering
        \includegraphics[width=\textwidth]{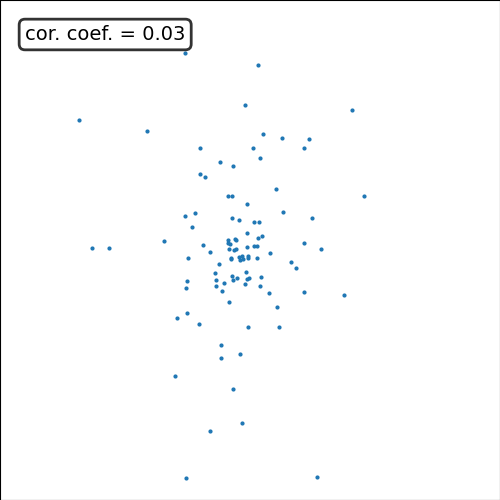}
        \caption{$(8,8)$ vs $(9,12)$ cond.}
        \label{fig:image3c}
    \end{subfigure}
    \hfill
    % Fourth
    \begin{subfigure}[b]{0.24\textwidth}
        \centering
        \includegraphics[width=\textwidth]{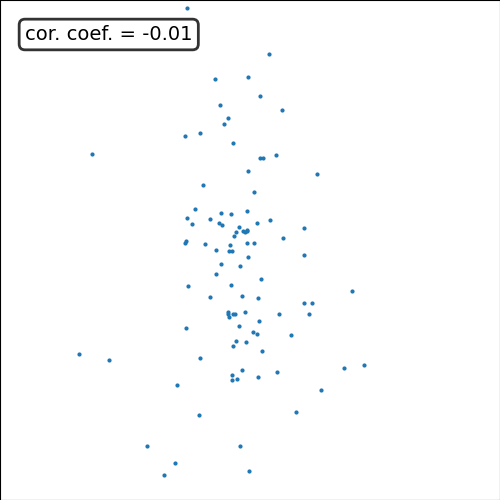}
        \caption{$(8,8)$vs$(14,28)$ cond.}
        \label{fig:image4c}
    \end{subfigure}
    \caption{\underline{Top row:} Scatterplots comparing the grayscale values of pixel (8,8) with various other pixels for 100 randomly selected images. The decreasing correlation between pixels as their distance increases is evident.\newline
\underline{Bottom row:} The same comparisons as the top row, but conditioned on pixel (9,8) having a value approximately equal to 0.48 (the median value for this pixel across the dataset). Note the increased concentration of points towards the center along the horizontal axis, indicating reduced correlation when conditioned on a neighboring pixel.\newline
These plots demonstrate how pixel correlations decrease with distance and how conditioning on a neighboring pixel can significantly reduce correlations, supporting the use of Markov Random Field models for image data. Similar plots for the COCO dataset can be found in Appendix \ref{appx:coco}.}
    \label{fig:scatterplots}
\end{figure}

Beyond sequential data, MRFs have seen significant use in image processing. In this work, we focus on grayscale images for simplicity, bearing in mind that the results extend to RGB/color images as well. For image processing, the classic MRF model consists of a random variable that is a 2-dimensional grid $\rmX = \left[\ermX_{i,j} \right]_{i,j}$ and a graph $\gG$ where all pixels adjacent in $\rmX$ are also adjacent in $\gG$. Figure \ref{fig:simple-graph-examples} contains one example from two different types of grid graphs: one standard ``grid'' graph $L_{3\times 3}$ and one ``grid with diagonals'' graph $L_{4\times 3}^+$, where the subscripts indicate the number of rows and columns of vertices, respectively.  For the remainder of this work, our references to ``grid'' graphs encompass both variants—those with and without diagonal connections—unless explicitly stated otherwise. Such models have seen wide use in image processing and computer vision \citep[see][for an overview]{eklundh94}. Denoising stands as perhaps the most common application of MRFs in image processing. This approach assumes that each pixel is best predicted using just its neighbors and ignoring the rest of the image. While this model proves effective for mitigating phenomena like additive white noise \citep{keener2010}, it falls short as a comprehensive image model. Similarly, the path graph, often used for sequential data, oversimplifies the complex dependencies in real-world sequential information. 

\section{Improving The Path and Grid Markov Random Field Models}
While standard path and grid MRF models may suffice for correcting extremely local or high-frequency noise in sequential or spatial data, they fall far short of capturing the true distribution of complex data types. Consider, for example, audio data consisting of 21-second clips where the middle second is missing and needs to be predicted. According to the path MRF model, this missing second would depend solely on the audio samples directly preceding and following it.
% (essentially two real values). 
Consequently, under a Markov chain (i.e. path MRF) model, the remaining $20$ seconds of audio (less two samples) would be deemed completely uninformative for predicting the middle second, given these two adjacent samples.

This simplistic model fails to capture the richer, longer-range dependencies present in real-world audio data. In practice, the content of the missing second is likely influenced by a broader context than just its immediate neighbors. For instance, the rhythm or theme established in the preceding few seconds, or the anticipation of what follows immediately after, could be crucial for predicting the missing segment. This moderately broader context is entirely discarded by the basic path MRF. Similarly, for image data, the standard grid MRF model suggests that a region of an image depends only on its immediate bordering pixels. However, realistic images often exhibit patterns and structures that span multiple pixels in various directions. For example, the edge of an object or a gradient in lighting might extend across several pixels, creating dependencies that the basic grid model fails to capture.  Figure \ref{fig:width-comparison} illustrates this concept concretely, demonstrating the effects of different MRF models on image inpainting tasks and highlighting the implications of varying levels of contextual information. These limitations motivate the need for more sophisticated MRF models where segments or regions are more extensively connected, allowing for the incorporation of relevant contextual information without necessarily spanning the entire dataset.

\begin{figure}[t]
    \centering
    \begin{subfigure}[b]{0.315\linewidth}
        \includegraphics[width=\linewidth]{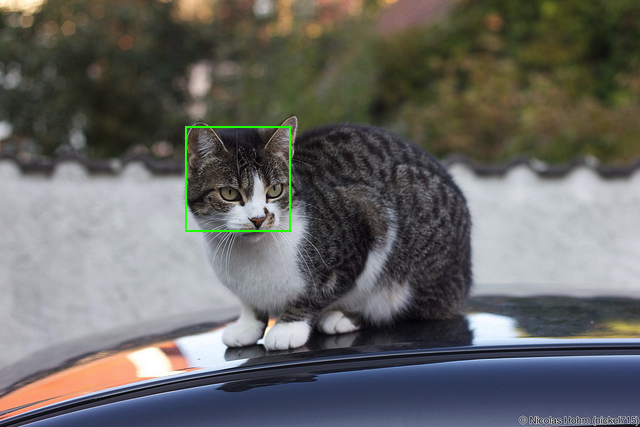}
        \caption{}
        \label{fig:region}
    \end{subfigure}%
    \hfill
    \begin{subfigure}[b]{0.21\linewidth}
        \includegraphics[width=\linewidth]{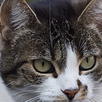}
        \caption{}
        \label{fig:cropped}
    \end{subfigure}%
    \hfill
    \begin{subfigure}[b]{0.21\linewidth}
        \includegraphics[width=\linewidth]{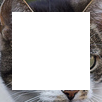}
        \caption{}
        \label{fig:12-surrounding}
    \end{subfigure}%
    \hfill
    \begin{subfigure}[b]{0.21\linewidth}
        \includegraphics[width=\linewidth]{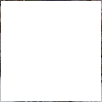}
        \caption{}
        \label{fig:1-surrounding}
    \end{subfigure}
    \caption{The leftmost image (a) is a $640 \times 427$ pixel photograph from the COCO 2014 dataset \citep{coco}. Image (b) shows an enlarged version of the $102 \times 102$ pixel region outlined in (a). Images (c) and (d) display the 12-pixel and 1-pixel width borders of that region, respectively. Modeling this image with an MRF graph $L_{640 \times 427}$ or $L^+_{640 \times 427}$ would imply that the distribution of the missing interior in (d) depends exclusively on its 1-pixel wide border, with the rest of the image in (a) being uninformative for predicting this interior region. In contrast, predicting the interior using the 12-pixel border in (c) is more reasonable. This scenario corresponds to models like $L_{640 \times 427}^6$ or $\left(L^+_{640 \times 427}\right)^6$, which capture more extensive local dependencies. It's important to note that for the MRF model to hold, the interior doesn't need to be \emph{deterministically} constructed from the surrounding pixels. Rather, the surrounding pixels need only provide sufficient information about the interior (e.g., that it's a cat's face) such that the rest of the image doesn't contribute any additional information for predicting the interior region.}
    \label{fig:width-comparison}
\end{figure}

To model sequential and spatial data more realistically, we propose using the ``power graph'' of the path and grid models. For a graph $\gG$, the power graph $\gG^t$ with $t \in \N$ is defined as the graph where an edge exists between every pair of vertices within $t$ steps of each other in $\gG$, with $\gG^1 = \gG$. Figures \ref{fig:path-power} and \ref{fig:grid-power} illustrate this concept using path graphs and grid graphs, respectively. This construction causes contiguous sections of sequences and patches of grids to become fully connected, as demonstrated in Figure \ref{fig:path-power}.

\begin{figure}[bht]
    \centering
    \begin{minipage}{0.33\textwidth}
        \begin{tikzpicture}[
            vertex/.style={circle, draw, minimum size=10pt, inner sep=0pt},
            edge/.style={-}
            ]
                % Define vertices
                \node[vertex] (1) at (0,0) {};
                \node[vertex] (2) at (1,0) {};
                \node[vertex] (3) at (2,0) {};
                \node[vertex] (4) at (3,0) {};
                \node[vertex] (5) at (4,0) {};
                
                % Draw edges
                \draw[edge] (1) -- (2);
                \draw[edge] (2) -- (3);
                \draw[edge] (3) -- (4);
                \draw[edge] (4) -- (5);
                
                % Add label
            \end{tikzpicture}
    \end{minipage}%
    \hfill
    \begin{minipage}{0.33\textwidth}
        \centering
        \begin{tikzpicture}[
            vertex/.style={circle, draw, minimum size=10pt, inner sep=0pt},
            edge/.style={-}
            ]
                % Define vertices
                \node[vertex] (1) at (0,0) {};
                \node[vertex] (2) at (1,0) {};
                \node[vertex] (3) at (2,0) {};
                \node[vertex] (4) at (3,0) {};
                \node[vertex] (5) at (4,0) {};
                % Draw edges
                \draw[edge] (1) -- (2);
                \draw[edge] (2) -- (3);
                \draw[edge] (3) -- (4);
                \draw[edge] (4) -- (5);
                \draw[edge] (1) to[out=30,in=150] (3);
                \draw[edge] (2) to[out=-30,in=-150] (4);
                \draw[edge] (3) to[out=30,in=150] (5);
            \end{tikzpicture}
    \end{minipage}
    \hfill
    \begin{minipage}{0.33\textwidth}
        \centering
        \begin{tikzpicture}[
            vertex/.style={circle, draw, minimum size=10pt, inner sep=0pt},
            edge/.style={-}
            ]
                % Define vertices
                \node[vertex] (1) at (0,0) {};
                \node[vertex] (2) at (1,0) {};
                \node[vertex] (3) at (2,0) {};
                \node[vertex] (4) at (3,0) {};
                \node[vertex] (5) at (4,0) {};
                % Draw edges
                \draw[edge] (1) -- (2);
                \draw[edge] (2) -- (3);
                \draw[edge] (3) -- (4);
                \draw[edge] (4) -- (5);
                \draw[edge] (1) to[out=30,in=150] (3);
                \draw[edge] (2) to[out=-30,in=-150] (4);
                \draw[edge] (3) to[out=30,in=150] (5);
                \draw[edge] (1) to[out=30,in=150] (4);
                \draw[edge] (2) to[out=-30,in=-150] (5);
            \end{tikzpicture}
    \end{minipage}
    \caption{Illustrations of a path graph and its powers. Left: The path graph $L_5$. Center: The power graph $L_5^2$. Right: The power graph $L_5^3$. In $L_5$, only immediately contiguous vertices are connected. In $L_5^2$, every group of three contiguous vertices forms a complete subgraph. In $L_5^3$, every group of four contiguous vertices forms a complete subgraph. This progression demonstrates increasing connectivity among nearby vertices in the graph.}
    \label{fig:path-power}
\end{figure}



\begin{figure}[htb]
    \centering
    \begin{subfigure}{0.24\textwidth}
    \centering
        \begin{tikzpicture}[
            vertex/.style={circle, draw, minimum size=10pt, inner sep=0pt},
            edge/.style={-}
            ]
                % Define vertices
                \node[vertex] (1) at (0,0) {};
                \node[vertex] (2) at (1,0) {};
                \node[vertex] (3) at (-1,0) {};
                \node[vertex] (4) at (0,1) {};
                \node[vertex] (5) at (0,-1) {};
                
                % Draw edges
                \draw[edge] (1) -- (2);
                \draw[edge] (1) -- (3);
                \draw[edge] (1) -- (4);
                \draw[edge] (1) -- (5);
                
                % Add label
            \end{tikzpicture}
            \caption{}\label{fig:grid-neighborhood}
    \end{subfigure}%
    \hfill
    \begin{subfigure}{0.24\textwidth}
        \centering
        \begin{tikzpicture}[
            vertex/.style={circle, draw, minimum size=7pt, inner sep=0pt},
            edge/.style={-}, scale = 0.5
            ]
                \node[vertex] (1) at (0,0) {};
    \node[vertex] (2) at (1,0) {};
    \node[vertex] (3) at (-1,0) {};
    \node[vertex] (4) at (0,1) {};
    \node[vertex] (5) at (0,-1) {};
    
    % Additional vertices for power graph
    \node[vertex] (6) at (1,1) {};
    \node[vertex] (7) at (-1,1) {};
    \node[vertex] (8) at (1,-1) {};
    \node[vertex] (9) at (-1,-1) {};
    \node[vertex] (10) at (2,0) {};
    \node[vertex] (11) at (-2,0) {};
    \node[vertex] (12) at (0,2) {};
    \node[vertex] (13) at (0,-2) {};
    
    % Draw original edges
    \draw[edge] (1) -- (2);
    \draw[edge] (1) -- (3);
    \draw[edge] (1) -- (4);
    \draw[edge] (1) -- (5);
    
    % Draw power graph edges
    \draw[edge] (1) -- (6);
    \draw[edge] (1) -- (7);
    \draw[edge] (1) -- (8);
    \draw[edge] (1) -- (9);
    \draw[edge] (1) to[bend left=30] (10);
    \draw[edge] (1) to[bend left=30] (11);
    \draw[edge] (1) to[bend left=30] (12);
    \draw[edge] (1) to[bend left=30] (13);
            \end{tikzpicture}
            \caption{} \label{fig:grid-neighborhood-power}
    \end{subfigure}
    \hfill
    \begin{subfigure}{0.24\textwidth}
        \centering
        \begin{tikzpicture}[
            vertex/.style={circle, draw, minimum size=10pt, inner sep=0pt},
            edge/.style={-}, scale = 0.75
            ]
                \node[vertex] (1) at (0,0) {};
    \node[vertex] (2) at (1,0) {};
    \node[vertex] (3) at (-1,0) {};
    \node[vertex] (4) at (0,1) {};
    \node[vertex] (5) at (0,-1) {};
    
    % Additional vertices for power graph
    \node[vertex] (6) at (1,1) {};
    \node[vertex] (7) at (-1,1) {};
    \node[vertex] (8) at (1,-1) {};
    \node[vertex] (9) at (-1,-1) {};
    
    % Draw original edges
    \draw[edge] (1) -- (2);
    \draw[edge] (1) -- (3);
    \draw[edge] (1) -- (4);
    \draw[edge] (1) -- (5);
    
    % Draw power graph edges
    \draw[edge] (1) -- (6);
    \draw[edge] (1) -- (7);
    \draw[edge] (1) -- (8);
    \draw[edge] (1) -- (9);
            \end{tikzpicture}
            \caption{}
            \label{fig:grid-with-diagonals-neighborhood}
    \end{subfigure}
    \hfill
    \begin{subfigure}{0.24\textwidth}
        \centering
        \begin{tikzpicture}[
            vertex/.style={circle, draw, minimum size=7pt, inner sep=0pt},
            edge/.style={-}, scale = 0.5
            ]
            % Define central vertex
    \node[vertex] (1) at (0,0) {};
    
    % Define first-order neighbors
    \node[vertex] (2) at (1,0) {};
    \node[vertex] (3) at (-1,0) {};
    \node[vertex] (4) at (0,1) {};
    \node[vertex] (5) at (0,-1) {};
    \node[vertex] (6) at (1,1) {};
    \node[vertex] (7) at (-1,1) {};
    \node[vertex] (8) at (1,-1) {};
    \node[vertex] (9) at (-1,-1) {};
    
    % Define second-order neighbors
    \node[vertex] (10) at (2,0) {};
    \node[vertex] (11) at (-2,0) {};
    \node[vertex] (12) at (0,2) {};
    \node[vertex] (13) at (0,-2) {};
    \node[vertex] (14) at (2,1) {};
    \node[vertex] (15) at (2,-1) {};
    \node[vertex] (16) at (-2,1) {};
    \node[vertex] (17) at (-2,-1) {};
    \node[vertex] (18) at (1,2) {};
    \node[vertex] (19) at (-1,2) {};
    \node[vertex] (20) at (1,-2) {};
    \node[vertex] (21) at (-1,-2) {};
    \node[vertex] (22) at (2,2) {};
    \node[vertex] (23) at (-2,2) {};
    \node[vertex] (24) at (2,-2) {};
    \node[vertex] (25) at (-2,-2) {};
    % Draw original edges
     \draw[edge] (1) -- (2);
    \draw[edge] (1) -- (3);
    \draw[edge] (1) -- (4);
    \draw[edge] (1) -- (5);
    \draw[edge] (1) -- (6);
    \draw[edge] (1) -- (7);
    \draw[edge] (1) -- (8);
    \draw[edge] (1) -- (9);
    \draw[edge] (1) to[bend right=30] (10);
    \draw[edge] (1) to[bend right=30] (11);
    \draw[edge] (1) to[bend right=30] (12);
    \draw[edge] (1) to[bend right=30] (13);
    \draw[edge] (1) to (14);
    \draw[edge] (1) to (15);
    \draw[edge] (1) to (16);
    \draw[edge] (1) to (17);
    \draw[edge] (1) to (18);
    \draw[edge] (1) to (19);
    \draw[edge] (1) to (20);
    \draw[edge] (1) to (21);
    \draw[edge] (1) to[bend right=20] (22);
    \draw[edge] (1) to[bend right=20] (23);
    \draw[edge] (1) to[bend right=20] (24);
    \draw[edge] (1) to[bend right=20] (25);
            \end{tikzpicture}
            \caption{}
            \label{fig:power-grid-with-diagonals-neighborhood}
    \end{subfigure}
    \caption{Comparison of vertex neighborhoods in different graph structures. (a) Neighborhood of a vertex in a standard grid graph $L_{d\times d}$. (b) Neighborhood of the same vertex in the power graph $L_{d\times d}^2$. (c) Neighborhood of a vertex in a grid graph with diagonals $L_{d\times d}^+$. (d) Neighborhood of the same vertex in the power graph $(L_{d\times d}^+)^2$.}
    \label{fig:grid-power}
\end{figure}

Applying this power graph concept to a grid graph assumes that local patches of images are highly dependent, making no assumptions about conditional independence within a patch. It also implies that distant regions of an image become independent as the distance between them increases, and that these regions are independent when conditioned on a sufficiently wide separating region of pixels.

We can experimentally validate these assumptions using image data. The top row of Figure \ref{fig:scatterplots} shows the grayscale values of pixel (8,8) versus selected other pixels for 100 randomly chosen images from the CIFAR-10 training dataset. The bottom row repeats this experiment, but conditioned on the value of the adjacent pixel (9,8) being near its median value.

These experiments reveal that, when conditioned on the adjacent pixel, the dependence 
% (which can be approximated by correlation) 
(as measured by correlation) 
decreases significantly. Notably, pixels (8,8) and (9,12) appear almost completely independent when conditioned on pixel (9,8). This provides strong evidence for the validity of the MRF model. The MRF model predicts that (8,8) and (9,12) should be independent when conditioned on surrounding pixels (the number of which depends on the graph power of the MRF graph). Remarkably, we observe that pixels appear independent when conditioned on just a \emph{single} adjacent pixel, suggesting that the grid MRF assumption may be even more conservative than necessary.

The power graph extension of path and grid MRFs presents a fundamentally different perspective on modeling high-dimensional data compared to the widely accepted manifold hypothesis. While the manifold hypothesis posits that high-dimensional data concentrates around lower-dimensional structures, our MRF approach embraces the full dimensionality of the data, focusing instead on the independence structure between variables. This model aligns well with the observed structure in various data types, capturing local dependencies while allowing for long-range independencies. For sequential data such as audio or text, it accounts for strong dependencies between nearby elements while acknowledging the decreasing influence of distant context. In spatial data like images, it models high correlation between neighboring pixels and gradual decorrelation as distance increases. Our experimental results provide compelling evidence for this MRF model's validity. The observed conditional independence between distant pixels, given intervening pixels, supports our power graph MRF approach's fundamental assumptions.

It's important to note that this approach is not meant to supersede the manifold hypothesis, but instead to augment it. The manifold hypothesis explains sample efficiency from local structure, while the MRF model adds additional model efficiency from a global perspective. Together, they provide a more comprehensive framework for understanding high-dimensional data.

Remarkably, in the following section, we will demonstrate that under these MRF assumptions, there exist estimators based on neural networks with standard loss functions (e.g. squared loss) that can achieve dimension-independent rates of convergence for density estimation. This result is particularly significant as it suggests a path to overcoming the curse of dimensionality in high-dimensional density estimation tasks without relying on low-dimensional embeddings. By focusing on independence structures rather than dimension reduction, our approach offers a novel explanation for the success of deep learning methods in processing complex, high-dimensional data, complementing and contrasting with the insights provided by the manifold hypothesis.

\section{MRF-Based Density Estimation with Neural Networks}
We begin by presenting the foundational results of this work that demonstrate that one can estimate a density $p$ given its Markov graph, at a rate that depends only on the size of the largest clique of the graph. We will present two results, one using a neural network style architecture using a practical empirical risk minimization style training and a second estimator that is more complex and computationally intractable that achieves approximately optimal rates of convergence.

\subsection{Structured neural density estimation}

Our estimators are based on the classical Hammersley-Clifford Theorem \citep{hammersley1971markov}. Before presenting the theorem we must review a few concepts. A graph $\gG$ is called \emph{complete} if every vertex is adjacent to every other vertex. For a graph $\gG=(V,E)$ a \emph{clique} is a complete subgraph, i.e., $\gG' = (V',E')$ with $V' \subset V$ and $E'\subset E$, that is complete. A \emph{maximal clique} of a graph is a set of cliques which are not contained within another clique. Observe that maximal cliques of the same graph can have different numbers of vertices. See Figure \ref{fig:cliques-example} for examples of maximal cliques. The collection of the maximal cliques of a graph will be denoted $\maxcliques(\gG)$. 


\begin{wrapfigure}{R}{0.3\textwidth}
\centering
\vspace{-5em}
\begin{tikzpicture}[scale =0.75]
    % Define vertices
    \node (A) at (0,0) [circle, draw]{};
    \node (B) at (2,0) [circle, draw]{};
    \node (C) at (2,2) [circle, draw]{};
    \node (D) at (0,2) [circle, draw]{};
    \node (E) at (1,3) [circle, draw]{};
    % Draw edges
    \draw (A) -- (B);
    \draw (B) -- (C);
    \draw (C) -- (D);
    \draw (D) -- (A);
    \draw (D) -- (E);
    \draw (E) -- (C);
    % Rectangles around cliques
    \draw (-0.5,-0.5) rectangle (2.5,0.5);
    \draw (-0.5,1.5) rectangle (2.5,3.5);
    \draw (-0.35,-0.3) rectangle (0.35,2.3);
    \draw (1.65,-0.3) rectangle (2.35,2.3);
\end{tikzpicture}
\caption{A graph with maximal cliques denoted by surrounding rectangles.}
\label{fig:cliques-example}
\end{wrapfigure}
 
\begin{prop}[\citealp{hammersley1971markov}]
    Let $\gG=(V,E)$ be a graph and $p$ be a probability density function satisfying the Markov property with respect to $\gG$. Let $\maxcliques(\gG)$ be the set of maximal cliques in $\gG$. Then 
    \begin{equation*}
        p(\vx) = \prod_{V' \in \maxcliques(\gG)} \psi_{V'}(\vx_{V'}),
    \end{equation*}
    where $\vx_{V'}$ are the indices of $\vx$ corresponding to $V'$. 
\end{prop}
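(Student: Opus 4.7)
The plan is to prove this classical Hammersley--Clifford identity via Möbius inversion applied to $\log p$, which works cleanly under the standard side condition that $p$ is strictly positive (otherwise Moussouris-style counterexamples rule out any such factorization; the proposition as stated tacitly assumes positivity). First, I would fix a reference point $\vx^* \in \R^d$ and, writing $H(\vx) := \log p(\vx)$, introduce the Möbius potentials
\[
    \phi_S(\vx) := \sum_{T \subseteq S} (-1)^{|S \setminus T|} \, H(\vx_T, \vx^*_{V \setminus T}) \qquad \text{for every } S \subseteq V,
\]
where $(\vx_T, \vx^*_{V \setminus T})$ denotes the vector agreeing with $\vx$ on $T$ and with $\vx^*$ off $T$. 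By construction $\phi_S$ depends only on $\vx_S$, and standard inclusion--exclusion on the Boolean lattice gives $H(\vx) = \sum_{S \subseteq V} \phi_S(\vx)$, and hence already a factorization $p(\vx) = \prod_{S \subseteq V} \exp\!\bigl(\phi_S(\vx_S)\bigr)$ with one factor per subset.

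The substance of the theorem is then concentrated in a single vanishing lemma: $\phi_S \equiv 0$ whenever $S$ is not a clique of $\gG$. To prove it, I would pick two non-adjacent vertices $i,j \in S$ and reindex the defining sum by $U := T \cap (S \setminus \{i,j\})$, grouping the four $T$'s with the same $U$ (corresponding to adding neither, only $i$, only $j$, or both of $i,j$) into a single block. For each fixed $U$ that block contributes, up to an overall sign, an alternating combination of four $\log p$ values at configurations that differ only in whether coordinates $i$ and $j$ take their $\vx$- or $\vx^*$-values; since $i$ and $j$ are non-adjacent, the Markov property yields conditional independence of $\ervx_i$ and $\ervx_j$ given $\vx_{V \setminus \{i,j\}}$, and an elementary manipulation shows this conditional independence is exactly the identity that makes each such alternating combination vanish. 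Summing over $U$ then gives $\phi_S \equiv 0$. This vanishing lemma is the main technical obstacle; the rest is bookkeeping.

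With the lemma in hand, $H(\vx) = \sum_{S \text{ a clique of } \gG} \phi_S(\vx_S)$. To pass from cliques to \emph{maximal} cliques, I would assign each clique $S$ to some maximal clique $m(S) \in \maxcliques(\gG)$ containing it (every clique of a finite graph extends to a maximal one) and define
\[
    \psi_{V'}(\vx_{V'}) := \prod_{S \,:\, m(S) = V'} \exp\bigl(\phi_S(\vx_S)\bigr) \qquad \text{for each } V' \in \maxcliques(\gG).
\]
Because $S \subseteq m(S) = V'$ whenever $m(S) = V'$, each $\psi_{V'}$ genuinely depends only on $\vx_{V'}$, and regrouping the factors of $p$ by their assigned maximal clique yields the desired representation $p(\vx) = \prod_{V' \in \maxcliques(\gG)} \psi_{V'}(\vx_{V'})$.
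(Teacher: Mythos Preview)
Your Möbius-inversion argument is correct and is the standard proof of the Hammersley--Clifford theorem under the positivity hypothesis you rightly flag. The paper does not prove this proposition itself---it is stated as a cited classical result---but when the explicit form of the $\psi_{V'}$ is later needed (in the proof of Proposition~\ref{prop:HC-reg}), the construction quoted there is precisely your exponentiated Möbius potentials $\prod_{V''\subseteq V'} p(\gamma_{V''}(\vx))^{(-1)^{|V'\setminus V''|}}$ with $\gamma_{V''}(\vx)=(\vx_{V''},\vx^*_{V\setminus V''})$, so your approach coincides with the one the paper relies on.
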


For neural networks we investigate estimators of the form:
\begin{equation*}
    \hp(\vx) = \prod_{V' \in \maxcliques(\gG)}\hpsi_{V'}(\vx_{V'}),
\end{equation*}
where $\hpsi_{V'}$ are ReLU networks with architectures dependent only on $\gG$, $|V'|$, and the sample size $n$. The weights are constrained to $[-1,1]$, effectively implementing weight decay via constrained optimization rather than norm penalization. 

We analyze an estimator that minimizes the integrated squared error between $p$ and our estimator $\hp$:
\begin{equation}
   \int \left(p\left(\vx\right) - \hp\left(\vx\right)\right)^2 d\vx
   = \int p(\vx)^2 d\vx - 2 \int p(\vx)\hp(\vx) d\vx + \int \hp(\vx)^2 d\vx. \label{eqn:l2-loss-expand}
\end{equation}

In empirical minimization, the first term of \eqref{eqn:l2-loss-expand} is constant. The second term can be estimated using the law of large numbers:
\begin{align*}
    \int p(\vx)\hp(\vx) d\vx 
    =& \E_{\rvx \sim p}\left[\hp(\rvx) \right] 
    \approx \frac{1}{n}\sum_{i=1}^n \hp(\rvx_i) \quad \text{where }\rvx_1,\ldots, \rvx_n \overset{\text{i.i.d.}}{\sim} p.
\end{align*}

The last term can be estimated stochastically. Let $U_d$ be the $d$-dimensional uniform distribution on the unit cube and $\rvepsilon_1, \rvepsilon_2,\ldots, \rvepsilon_{n'} \overset{\text{i.i.d.}}{\sim} U_d$. Then:
\begin{equation*}
    \int \hp(\vx)^2 d\vx 
    =\E_{\epsilon \sim U_d}\left[ \hp(\rvepsilon)^2\right]
    \approx \frac{1}{n'} \sum_{i=1}^{n'}\hp(\rvepsilon_i)^2.
\end{equation*}

\subsection{Main result}

We now present our theorem on the convergence rate for $L^2$-minimizing neural network-based density estimators:\\
% \fbox{
\parbox{0.96\textwidth}{
\begin{thm}\label{thm:maintext-general-nn}
Let $\gG=(V,E)$ be a finite graph and $r$ be the size of the largest clique in $\gG$. 
There exists a known sequence of architectures $\mathcal{F}^*$ such that for
\begin{equation*}
    \hat{p}_n = \arg \min_{f \in \mathcal{F}^*} \left(\left\|f\right\|_2^2 - \frac{2}{n}\sum_{i=1}^n f(\rvx_i)\right),
\end{equation*}
where $\rvx_1,\ldots,\rvx_n \overset{\text{i.i.d.}}{\sim} p$, we have
\begin{equation*}
    \left\Vert p - \hat{p}_n \right\Vert_1 \in \tO_p \left(n^{-1/(4+r)}\right),
\end{equation*}
for any Lipschitz continuous, positive density $p$ satisfying the Markov property with respect to $\gG$.
\end{thm}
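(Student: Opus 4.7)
The plan is to decouple approximation and estimation error in the standard way, leveraging the Hammersley--Clifford factorization to reduce both to $r$-variable subproblems, and then to balance the two via a pseudo-dimension-based Rademacher complexity bound. The rate $n^{-1/(4+r)}$ will arise from balancing an approximation error of order $\epsilon$ against a Rademacher complexity of order $\sqrt{\epsilon^{-r}/n}$; note the square root, which distinguishes this from the optimal rate $n^{-1/(2+r)}$ achievable by more complex estimators.

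For the approximation step, Hammersley--Clifford gives $p = \prod_{V' \in \maxcliques(\gG)} \psi_{V'}$, and since $p$ is positive and Lipschitz on the unit cube, one can arrange the factors to be bounded, bounded away from zero, and Lipschitz (e.g.\ by partitioning $\log p$ symmetrically among cliques and exponentiating). For each $\psi_{V'}$, which depends on at most $r$ coordinates, a Yarotsky-style ReLU approximation theorem produces an $\tilde\psi_{V'}$ with $\tO(\epsilon^{-r})$ parameters, weights in $[-1,1]$, and $\|\tilde\psi_{V'} - \psi_{V'}\|_\infty \leq \epsilon$. The telescoping identity
\begin{equation*}
\prod_{V'} \tilde\psi_{V'} - \prod_{V'} \psi_{V'} = \sum_{V'} \Big(\prod_{V'' \prec V'} \tilde\psi_{V''}\Big)(\tilde\psi_{V'} - \psi_{V'})\Big(\prod_{V'' \succ V'} \psi_{V''}\Big)
\end{equation*}
then gives $\|\prod_{V'} \tilde\psi_{V'} - p\|_\infty \lesssim |\maxcliques(\gG)| \, \epsilon$, and since $|\maxcliques(\gG)| \leq \mathrm{poly}(d)$ this factor sits inside the $\tO$. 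I would take $\mathcal{F}^*$ to be the class of all such products with per-factor size $\tO(\epsilon_n^{-r})$ for an $n$-dependent $\epsilon_n$ chosen below.

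For the estimation step, expanding $\|f - p\|_2^2 = \|f\|_2^2 - 2\int fp + \|p\|_2^2$ shows that $\hat p_n$ minimizes an empirical analog of the integrated squared error whose centered version is a pure empirical process indexed by $f \in \mathcal{F}^*$. Standard symmetrization plus a Talagrand-type concentration inequality yields the oracle bound
\begin{equation*}
\|\hat p_n - p\|_2^2 \leq \inf_{f \in \mathcal{F}^*} \|f - p\|_2^2 + C\bigl(\mathcal{R}_n(\mathcal{F}^*) + \sqrt{\log(1/\delta)/n}\bigr)
\end{equation*}
with probability $1-\delta$. The Rademacher complexity is controlled via pseudo-dimension: each per-clique ReLU class with $P$ parameters has pseudo-dimension $\tO(P)$ by known bounds, and the product class (with uniformly bounded factors) has pseudo-dimension at most the sum of the per-factor pseudo-dimensions up to log factors, giving $\mathcal{R}_n(\mathcal{F}^*) = \tO(\sqrt{\epsilon_n^{-r}/n})$ with $d$-dependence absorbed into logs.

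Combining yields $\|\hat p_n - p\|_2^2 \lesssim \epsilon_n^2 + \sqrt{\epsilon_n^{-r}/n}$; balancing $\epsilon_n^2 \asymp \sqrt{\epsilon_n^{-r}/n}$ gives $\epsilon_n \asymp n^{-1/(4+r)}$, hence $\|\hat p_n - p\|_2 \in \tO_p(n^{-1/(4+r)})$, and on $[0,1]^d$ Cauchy--Schwarz delivers the same rate in $L^1$. The main obstacle I expect is keeping the Rademacher complexity of the product class dimension-independent in the exponent: a naive product bound would reintroduce factors of $|\maxcliques(\gG)|$ or $d$ into the exponent of $\epsilon$, so one must use the $[-1,1]$ weight constraint (which uniformly caps each factor by a constant depending only on the architecture) together with a careful pseudo-dimension argument for products of bounded networks to confine those effects to logarithmic factors.
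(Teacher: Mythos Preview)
Your plan is essentially the paper's proof: Hammersley--Clifford factorization with Lipschitz bounded factors (the paper's Proposition~\ref{prop:HC-reg}), per-clique ReLU approximation via Schmidt-Hieber/Yarotsky with $\tO(\epsilon^{-r})$ parameters, the telescoping $L^\infty$ product bound, the same oracle decomposition of the empirical $L^2$ risk, and the same balance $\epsilon^2 \asymp \sqrt{\epsilon^{-r}/n}$.

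The only methodological difference is in the concentration step. Where you invoke Rademacher complexity via a pseudo-dimension bound for the product class, the paper works directly with $L^\infty$ covering numbers: using the same telescoping identity, an $\epsilon$-cover of $\mathcal{F}^*$ is obtained from $\epsilon/C^{|\maxcliques(\gG)|-1}$-covers of the per-clique classes, so $\log N(\mathcal{F}^*,\epsilon) \le \sum_{V'} \log N(\mathcal{F}_{V'},\cdot) = \tO(\epsilon^{-r})$, and then Hoeffding plus a union bound over the cover finishes. This is exactly the cleaner route around the obstacle you correctly flag at the end---the claim that ``pseudo-dimension of a product of bounded classes is the sum of the factors up to logs'' is not standard and would anyway be proved via covering numbers, so going through them directly is more economical. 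One small slip: $|\maxcliques(\gG)|$ is not $\mathrm{poly}(d)$ in general (Moon--Moser allows $3^{d/3}$), but since $\gG$ is fixed this is a constant and disappears into the $\tO$, as you intend.
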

}

% }
The proof of the theorem, based on results from \cite{schmidt2017nonparametric}, details the architectures and specifies how their parameters scale with the sample size. The proof of this theorem, and all results in this work, can be found in the appendices. The minimax rate for density estimation on $d$-dimensional densities is $O\left(n^{-1/(2+d)}\right)$, so the ``effective dimension'' of an estimating a density using the estimator from the theorem above is $r+2$. Consequently we see that the rate of convergence for density estimation can be greatly improved for MRFs with certain graphs $\gG$. We will discuss the consequences of this in greater detail in Section \ref{sec:consequences}. 

\subsection{Consequences of Main Results} \label{sec:consequences}
Our results indicate that the effective dimension of any density estimation problem under MRF assumptions is the size of its largest clique. The following results demonstrate examples where the largest clique is significantly smaller than the full dimensionality. 

For definitions of the graphs $L_{d\times d'}$ and $L^+_{d\times d'}$, we refer the reader to the examples in Figure \ref{fig:simple-graph-examples}. While these examples should provide intuitive understanding, formal definitions can be found in Appendix \ref{appx:graph-proofs}.
\paragraph{Images}
We begin with the most compelling setting, corresponding to images:

\begin{lemma}\label{lem:grid_max_clique}
    Let $L_{d\times d'}$ be a $d\times d'$ grid graph with $t<d,d'$.
    The size of the largest clique in $L_{d\times d'}^t$ is less than or equal to $\frac{t^2+4t+3}{2}$.
\end{lemma}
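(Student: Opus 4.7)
The plan is to reduce the combinatorial question to a lattice-counting problem via the classical rotation that sends Manhattan distance to Chebyshev ($\ell^\infty$) distance.

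First, I would unpack the definitions. The vertices of $L_{d\times d'}$ can be identified with $[d]\times[d']$, and its edges join grid-neighbors, so the graph distance in $L_{d\times d'}$ is the Manhattan distance $|i-i'|+|j-j'|$. Consequently a clique $C$ in $L_{d\times d'}^t$ is exactly a set of grid points with pairwise Manhattan distance at most $t$. The hypothesis $t < d, d'$ just ensures that the bounding boxes appearing below fit inside the ambient grid, but plays no further role in the upper bound.

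Next, I would apply the bijection $\varphi:(i,j)\mapsto(i+j,\,i-j)$ from $\mathbb{Z}^2$ onto the ``same-parity'' sublattice $\Lambda := \{(s,\delta)\in\mathbb{Z}^2 : s+\delta\text{ is even}\}$. A short case analysis on the signs of $i-i'$ and $j-j'$ gives the familiar identity
\[
|i-i'|+|j-j'| \;=\; \max\bigl(|(i+j)-(i'+j')|,\;|(i-j)-(i'-j')|\bigr),
\]
so $\varphi$ converts Manhattan distance into Chebyshev distance. Therefore $\varphi(C)$ has Chebyshev diameter at most $t$ and so fits inside some axis-aligned integer box $B$ of side $t$; this box contains exactly $(t+1)^2$ integer points.

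Finally, I would count how many points of $B$ lie in $\Lambda$. Since $\Lambda$ picks out same-parity pairs, at most $\lceil(t+1)^2/2\rceil$ of the $(t+1)^2$ lattice points of $B$ are in $\Lambda$, whence $|C|=|\varphi(C)|\le\lceil(t+1)^2/2\rceil$. A trivial parity check (two cases, $t$ even versus $t$ odd) then shows $\lceil(t+1)^2/2\rceil\le(t^2+4t+3)/2$, with slack to spare. There is no real obstacle beyond bookkeeping---verifying the Manhattan-to-Chebyshev identity and carrying out the parity count carefully. I suspect the authors state the looser form $(t^2+4t+3)/2$ rather than the tight $\lceil(t+1)^2/2\rceil$ in order to give a uniform expression that also absorbs related graph families (for instance, grids with diagonals) handled later in the paper.
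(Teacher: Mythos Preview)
Your proof is correct and follows essentially the same approach as the paper: both use the change of variables $(i,j)\mapsto(i+j,\,i-j)$, bound each resulting coordinate range by $t$ via the clique condition, and then invoke a parity count to halve the $(t+1)^2$ lattice points in the resulting box. Your intermediate bound $\lceil(t+1)^2/2\rceil$ is in fact slightly sharper than the paper's $(t+1)\lceil(t+1)/2\rceil$, but both sit comfortably below $(t^2+4t+3)/2$; your closing speculation about the looser form absorbing the diagonal-grid case is off, however, since the paper treats that family in a separate lemma.
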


\begin{lemma}\label{lem:grid_d_max_clique}
    Let $L_{d\times d'}^+$ be the $d \times d'$ grid graph with diagonals, and $t<d,d'$. The size of the largest clique in the graph $\left(L_{d\times d'}^+\right)^t$ is $(t+1)^2$.
\end{lemma}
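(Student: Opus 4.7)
The plan is to identify graph distance in $L_{d\times d'}^+$ with the Chebyshev ($L^\infty$) distance on grid coordinates, which reduces the clique question to counting lattice points in an axis-aligned box. Concretely, identify each vertex with its coordinates $(i,j)\in\{1,\ldots,d\}\times\{1,\ldots,d'\}$. By definition of the grid-with-diagonals graph, two distinct vertices $(i_1,j_1),(i_2,j_2)$ are adjacent iff $|i_1-i_2|\le 1$ and $|j_1-j_2|\le 1$. A short induction (or a direct greedy path that decreases both coordinate gaps by one per step) shows that the graph distance in $L_{d\times d'}^+$ equals $\max(|i_1-i_2|,|j_1-j_2|)$. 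Hence in $\left(L_{d\times d'}^+\right)^t$, two distinct vertices are adjacent iff their Chebyshev distance is at most $t$.

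For the upper bound, let $K$ be any clique in $\left(L_{d\times d'}^+\right)^t$. All pairwise Chebyshev distances are at most $t$, so the projections of $K$ onto each coordinate axis span an interval of length at most $t$. Therefore $K$ is contained in an axis-aligned sub-rectangle of size at most $(t+1)\times(t+1)$, which contains at most $(t+1)^2$ lattice points, giving $|K|\le (t+1)^2$.

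For the lower bound, use the hypothesis $t<d,d'$ to pick any block $\{i_0,\ldots,i_0+t\}\times\{j_0,\ldots,j_0+t\}$ of the grid; every pair of its $(t+1)^2$ vertices has Chebyshev distance at most $t$, so by the characterization above the block is a clique in $\left(L_{d\times d'}^+\right)^t$. Combining the two bounds yields the claimed equality.

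The only mildly delicate step is the identification of $L_{d\times d'}^+$-distance with Chebyshev distance; everything else is direct counting. This identification is the standard "king's move" fact on a chessboard: one can realize any pair $(i_1,j_1)\to(i_2,j_2)$ by $\max(|i_1-i_2|,|j_1-j_2|)$ diagonal/horizontal/vertical steps, and no fewer steps can close the larger coordinate gap since a single edge changes each coordinate by at most one. Once this is in place, the lemma follows immediately.
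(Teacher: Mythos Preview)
Your proof is correct and follows essentially the same route as the paper: both arguments use the bounding-box observation (the projections of a clique onto each coordinate axis have diameter at most $t$, so the clique sits inside a $(t+1)\times(t+1)$ sub-rectangle) for the upper bound, and exhibit an explicit $(t+1)\times(t+1)$ block for the lower bound. The only difference is that you explicitly verify the ``king's move'' identification of $L_{d\times d'}^+$-graph distance with Chebyshev distance, whereas the paper bakes this directly into its appendix definition of $(L_{d\times d'}^+)^t$ and proceeds from there.
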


Judging from the exmaple in Figure \ref{fig:scatterplots}, $t=2$ already gives a fairly reasonable model for images. Thus we have the following dimension-independent rate:
\begin{cor}[Dimension-independent rates]
    The neural density estimator in Theorem~\ref{thm:maintext-general-nn} achieves a rate of 
    \begin{equation*}
        \left\Vert p - \hat{p}_n \right\Vert_1 \in \tO_p \left(n^{-1/7}\right) \quad \text{for the grid graph $L_{d\times d'}^2$}
    \end{equation*}
    and
    \begin{equation*}
        \left\Vert p - \hat{p}_n \right\Vert_1 \in \tO_p \left(n^{-1/9}\right) \quad \text{for the grid with diagonals graph $\left(L_{d\times d'}^+\right)^2$}.
    \end{equation*}
\end{cor}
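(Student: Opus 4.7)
The plan is a pure substitution: combine the rate guarantee of Theorem~\ref{thm:maintext-general-nn} with the clique-size bounds in Lemmas~\ref{lem:grid_max_clique} and \ref{lem:grid_d_max_clique}, instantiated at $t=2$. Since Theorem~\ref{thm:maintext-general-nn} yields the rate $\tO_p(n^{-1/(4+r)})$ and this exponent is monotonically worsening in $r$, any upper bound on the maximum clique size $r$ of the underlying MRF graph immediately yields a convergence bound of the same form. So the only arithmetic needed is to read off $r$ from the two lemmas for the graphs appearing in the statement.

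First, for the standard grid $\gG = L_{d\times d'}^2$, I apply Lemma~\ref{lem:grid_max_clique} with $t=2$ to obtain an upper bound on $r$ that depends only on $t$, not on $d$ or $d'$. Plugging this bound into the exponent $1/(4+r)$ produces the first stated rate, under the standing hypothesis of Theorem~\ref{thm:maintext-general-nn} that $p$ is Lipschitz, positive, and Markov with respect to $\gG$. Second, for the diagonally augmented grid $\gG=(L_{d\times d'}^+)^2$, I apply Lemma~\ref{lem:grid_d_max_clique} with $t=2$, which pins down the exact clique size $r=(t+1)^2$; substituting into Theorem~\ref{thm:maintext-general-nn} gives the second stated rate.

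The real content of the corollary is not the arithmetic but rather the observation that in both cases the bound on $r$ depends only on the locality scale $t=2$ and is entirely independent of the grid dimensions $d$ and $d'$. Hence the rate does not degrade as the ambient dimension $dd'$ grows, justifying the ``dimension-independent'' label. There is no substantive obstacle to overcome: the statistical work is fully encapsulated inside Theorem~\ref{thm:maintext-general-nn}, the combinatorial geometry of cliques in grid powers is fully encapsulated inside the two lemmas, and the corollary is a one-line instantiation for $t=2$. The only genuine hypothesis one has to carry along is the Markov assumption on $p$ with respect to the relevant graph, which is a modeling assumption rather than something to be verified within this proof.
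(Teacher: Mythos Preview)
Your plan is exactly the paper's intended argument: the corollary is presented without proof as an immediate instantiation of Theorem~\ref{thm:maintext-general-nn} together with Lemmas~\ref{lem:grid_max_clique} and~\ref{lem:grid_d_max_clique} at $t=2$. So methodologically you match the paper.

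That said, you assert that ``plugging this bound into the exponent $1/(4+r)$ produces the first stated rate'' without doing the arithmetic, and if you do it you will find it does \emph{not} reproduce the printed exponents. Lemma~\ref{lem:grid_max_clique} with $t=2$ gives $r\le (t^2+4t+3)/2 = 15/2$, hence $r\le 7$, and Theorem~\ref{thm:maintext-general-nn} then yields only $\tO_p(n^{-1/11})$, not $n^{-1/7}$. Lemma~\ref{lem:grid_d_max_clique} with $t=2$ gives $r=(t+1)^2=9$, so Theorem~\ref{thm:maintext-general-nn} yields $\tO_p(n^{-1/13})$, not $n^{-1/9}$. (Indeed the true maximum clique of $L_{d\times d'}^2$ is $5$---e.g.\ $\{(0,0),(1,0),(2,0),(1,1),(1,-1)\}$---which still only gives $n^{-1/9}$ via Theorem~\ref{thm:maintext-general-nn}; the value $r=3$ needed for $n^{-1/7}$ is impossible since $L_{d\times d'}^2$ already contains a $4$-clique on any $2\times 2$ block.) The discrepancy appears to be an arithmetic slip in the corollary as printed rather than a flaw in your reasoning; your substitution strategy is sound and dimension-independence still holds, but you should flag that the specific exponents do not follow from the cited theorem and lemmas.
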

Even when $t>1$, we have $r=O(t^2)$ with $t\ll d$. In practice we expect $t=O(1)$, so even with $t>1$, the rates are still dimension-independent.

Recall that if a density $p$ is an MRF with respect to a graph $\gG = (V,E)$, it is also an MRF with respect to any graph $\gG'=(V,E')$ that contains all the edges from $\gG$, i.e., $E\subseteq E'$. Thus, the \emph{absence} of edges in an MRF represents a stronger condition on $p$. In the graph $L_{d\times d'}^t$, every $(t+1)\times(t+1)$ block of vertices is fully connected. As demonstrated in Figure \ref{fig:scatterplots}, when conditioned on an adjacent pixel, pixels tend to become independent with very little distance between them. Figure \ref{fig:image4c} shows that pixels (8,8) and (9,12) are seemingly independent conditioned on (9,8). Modeling CIFAR-10 as an MRF graph $L_{32\times 32}^+$ would imply that (8,8) and (9,12) are independent conditioned on \emph{every pixel surrounding (8,8)}, a much more stringent requirement than conditioning on one adjacent pixel. Thus, modeling CIFAR-10 as $({L_{32\times 32}^+})^2$ appears to be a conservative approach. Consequently, the effective dimension for estimating CIFAR-10 is $(2+1)^2 = 9$ rather than $32\times 32 = 1024$, an over 100-fold improvement!

\paragraph{Sequences}
For sequential data, we have the following lemma:

\begin{lemma}\label{lem:path_max_clique}
    Let $L_{d}$ be a $d$-length path graph. The size of the largest clique in $L_d^t$ is equal to $\min(t+1,d)$.
\end{lemma}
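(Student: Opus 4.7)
The plan is to show both bounds by a direct combinatorial argument based on the explicit adjacency structure of $L_d^t$. First, I will recall that the path graph $L_d$ has vertex set $\{1,2,\ldots,d\}$ with an edge between $i$ and $j$ iff $|i-j|=1$, so the graph distance between $i$ and $j$ in $L_d$ is exactly $|i-j|$. By definition of the power graph, two distinct vertices $i,j$ are adjacent in $L_d^t$ if and only if $1 \le |i-j| \le t$.

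For the upper bound, let $S \subseteq \{1,\ldots,d\}$ be any clique in $L_d^t$. Set $i_{\min}=\min S$ and $i_{\max}=\max S$. Since $i_{\min}$ and $i_{\max}$ are adjacent (or equal), we have $i_{\max}-i_{\min} \le t$, so $S$ is contained in the window $\{i_{\min},i_{\min}+1,\ldots,i_{\min}+t\} \cap \{1,\ldots,d\}$, which has cardinality at most $\min(t+1,d)$. Hence $|S| \le \min(t+1,d)$.

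For the lower bound, consider the set $S^* = \{1,2,\ldots,\min(t+1,d)\}$. For any distinct $i,j \in S^*$ we have $1 \le |i-j| \le \min(t+1,d)-1 \le t$, so $i$ and $j$ are adjacent in $L_d^t$. Thus $S^*$ is a clique of size exactly $\min(t+1,d)$, matching the upper bound.

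The argument is essentially a one-line observation once the adjacency in $L_d^t$ is identified, so there is no real obstacle; the only point requiring a moment of care is handling the degenerate regime $t+1 > d$, where the whole vertex set is already a clique and the $\min$ with $d$ is needed in the statement. Both the upper and lower bound arguments above handle this case uniformly.
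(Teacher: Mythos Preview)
Your proof is correct and follows essentially the same approach as the paper's: both arguments exhibit $\{1,\ldots,\min(t+1,d)\}$ as a clique for the lower bound, and for the upper bound both take the minimum and maximum elements of an arbitrary clique and use the adjacency condition $|i-j|\le t$ to trap the clique in a window of length at most $\min(t+1,d)$. Your write-up is slightly more explicit about the degenerate case $t+1>d$, but the argument is the same.
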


Again, we observe that the effective dimension can be far less than the ambient dimension for sequential data, such as audio.

The MRF approach can be extended to various data types, yielding similar dimension reduction results. For instance, color images can be modeled as a three-dimensional random tensor $\tX \in \R^{c\times w\times h}$ with a graph $\gG$. In this model, the vertices in $\tX_{:,i,j}\cup\tX_{:,i',j'}$ are fully connected for $|i-i'| \le 1$ and $|j-j'| \le 1$, corresponding to a grid graph with diagonals where all channels are connected. Video data can be represented by four-dimensional graphs corresponding to order-4 tensors in $\R^{t\times c \times w \times h}$, with a similar connective structure. While text data is discrete in nature, once tokenized and passed through $d$-dimensional word embeddings, it resembles spatial data with dimensions $\R^{d\times t}$ and can benefit from independence structure.

In all these cases, the maximum clique size is determined by how quickly independence is achieved spatio-temporally or in the embedding space, rather than by the overall data dimensionality. This approach yields effective dimensions that are orders of magnitude smaller than the ambient dimension, leading to dimension-independent learning rates.

Crucially, this dimension independence is maintained across varying data sizes. For instance, cropping an image would leave the maximum clique size unchanged (provided the cropping isn't too extreme), while expanding an image would create a larger MRF graph but, assuming the underlying pattern holds, the maximum clique size would remain constant. This property results in a dimension-independent rate of learning that remains consistent across different image sizes. Thus, whether dealing with a $100\times 100$ pixel image or a $1000 \times 1000$ pixel image of similar content, the effective learning rate remains tied to the maximum clique size rather than the total number of pixels, exemplifying true dimension independence in the learning process.

These extensions demonstrate the versatility of the MRF approach in modeling complex, high-dimensional data structures across various modalities, while significantly reducing the effective dimensionality of the problem.

\paragraph{Hierarchical models}
Although not the primary focus of this work, our results have potential applications to other data types not typically associated with deep learning. For instance, hierarchical data is often modeled as a rooted tree. For tree-structured MRFs, the following is a well-known:
\begin{lemma}
Let $\gG$ be a tree with at least two vertices. The size of the largest clique in $\gG$ is 2.
\end{lemma}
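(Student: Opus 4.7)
The plan is to exploit the acyclic characterization of trees. Recall that by definition a tree is a connected acyclic graph, so in particular it contains no cycle of any length, and in particular no $3$-cycle (triangle). The proof will have two short parts: an upper bound ruling out cliques of size $\geq 3$, and a matching lower bound showing a clique of size $2$ exists.

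For the upper bound, I would argue by contradiction. Suppose $\gG$ contains a clique on vertices $\{v_1,v_2,v_3\}$ (any clique of size at least $3$ contains such a triple). By definition of a clique, all three edges $v_1 v_2$, $v_2 v_3$, $v_1 v_3$ are present in $\gG$, so $v_1 v_2 v_3 v_1$ is a cycle of length $3$ in $\gG$. This contradicts the acyclicity of the tree $\gG$. Hence every clique has size at most $2$.

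For the lower bound, I would observe that since $\gG$ is a tree on at least two vertices, it is connected, and any connected graph on at least two vertices contains at least one edge $uv$. Then $\{u,v\}$ with the edge $uv$ is a complete subgraph, i.e., a clique of size $2$. Combined with the upper bound, the largest clique has size exactly $2$.

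The main ``obstacle'' here is essentially nonexistent: the result is a one-line consequence of the definition of a tree (acyclic $+$ connected), and the only care needed is to remember that a clique of size $k \geq 3$ automatically contains a triangle, so ruling out triangles suffices to rule out all larger cliques.
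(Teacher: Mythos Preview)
Your proof is correct. The paper itself does not supply a proof of this lemma, stating only that it is well-known; your argument via acyclicity (no triangles, hence no cliques of size $\geq 3$) together with connectedness (at least one edge, hence a clique of size $2$) is the standard justification and fills this gap cleanly.
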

Estimating densities with a tree MRF has been studied previously and is called ``tree density estimation.'' The largest clique size being 2 and yielding a $\tO(n^{-1/4})$ rate of convergence approximately matches previous work on this problem. In \citet{liu2011forest,gyorfi2022tree} it was found that one can estimate a density with an unknown tree MRF, without the strong density assumption at a rate $O(n^{-1/4})$. Compared to Theorem~\ref{thm:maintext-general-nn}, this is an improvement by a factor of $n^2$, but these estimators are not based on neural networks, which is our focus. In the next section, we show that the $O(n^{-1/4})$ is not only optimal for trees, but can be generalized to arbitrary MRFs.

\subsection{Approximately Optimal Estimator}
Although not the primary focus of this work, we present the following result which approximately matches (up to $\log$ terms) the best possible rate for MRFs. 

% \fbox{
\parbox{0.97\textwidth}{
\begin{thm}\label{thm:maintext-optimal}
    Let $\gG=(V,E)$ be a finite graph. There exists an estimator $V_n$ such that for any Lipschitz continuous density $p$ satisfying the strong density assumption and Markov property with respect to $\gG$, we have
    \begin{equation*}
        \left\Vert p - V_n \right\Vert_1 \in \tO_p \left(n^{-1/(2+r)}\right),
    \end{equation*}
    where $V_n$ is a function of $n$ i.i.d. samples from $p$, and $r$ is the size of the largest clique in $\gG$.
\end{thm}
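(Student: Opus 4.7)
The plan is to construct $V_n$ as a minimum-distance (Yatracos) estimator over a carefully chosen $\epsilon$-net of the class
\[\mathcal{P}_\gG \;:=\; \{\,q : q \text{ is Lipschitz, satisfies the strong density assumption, and is Markov w.r.t.\ } \gG\,\},\]
and to combine this with a covering-number bound that exploits the clique decomposition of Hammersley--Clifford. By the standard theory of minimum-distance density estimation (Yatracos / Devroye--Lugosi), selecting an element of an $\epsilon$-net $\mathcal{N}_\epsilon \subset \mathcal{P}_\gG$ that minimizes the supremum discrepancy with the empirical measure over the associated Yatracos class yields
\[\|V_n - p\|_1 \;\lesssim\; \epsilon + \sqrt{\log |\mathcal{N}_\epsilon|/n}\]
with high probability. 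So the whole problem reduces to bounding $\log |\mathcal{N}_\epsilon|$.

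To get that bound I would work in log space. Writing $f := \log p$ and fixing a reference point $\vx^{*}\in[0,1]^d$, the M\"{o}bius inversion
\[f_A(\vx_A) \;=\; \sum_{B \subseteq A} (-1)^{|A|-|B|}\, f(\vx_B,\vx^{*}_{A\setminus B})\]
gives $f = \sum_{A \subseteq V} f_A$. The Markov property combined with Proposition~4.1 forces $f_A \equiv 0$ whenever $A$ is not a clique, so $f = \sum_{C \in \maxcliques(\gG)} g_C$ where each $g_C$ depends only on coordinates in $C$, i.e., on at most $r$ variables. The strong density assumption $0 < c \le p \le C$ keeps $\log p$ uniformly bounded, and together with Lipschitz continuity makes each $g_C$ a uniformly bounded, uniformly Lipschitz function on $[0,1]^{|C|}$. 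I would then apply the standard Kolmogorov bound $\log N(\epsilon;\mathrm{Lip}([0,1]^k)) \lesssim \epsilon^{-k}$ clique by clique, multiply the exponentiated approximants, and renormalize to a density. Since $|\maxcliques(\gG)|$ is a finite graph-dependent constant and $|C|\le r$, this produces $\log|\mathcal{N}_\epsilon| \lesssim \epsilon^{-r}$, and balancing with $\epsilon \asymp n^{-1/(2+r)}$ then yields $\|V_n - p\|_1 \in \tO_p(n^{-1/(2+r)})$.

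The hard part will be the covering step. Producing \emph{uniformly} Lipschitz and bounded potentials $g_C$ from the M\"{o}bius decomposition requires simultaneous control of the modulus of continuity and sup-norm of $\log p$, which is exactly where the strong density assumption earns its keep (keeping $\log p$ away from $-\infty$ so the alternating sum does not blow up). A related subtlety is that the candidate $\prod_C \exp(\tilde g_C)$ built from an $\epsilon$-net element is not automatically a probability density, let alone an element of $\mathcal{P}_\gG$; one must verify that renormalization distorts the $L^1$ distance by at most $O(\epsilon)$ and that the perturbed potentials still define a density satisfying the strong-density and Lipschitz constraints (possibly after enlarging $\mathcal{P}_\gG$ by constants). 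These bookkeeping steps are technical but standard, and they determine the $\log$ factors hidden inside $\tO_p$.
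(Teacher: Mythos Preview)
Your proposal is correct and matches the paper's proof: both use the Devroye--Lugosi/Yatracos minimum-distance estimator over a finite candidate class built by covering the clique potentials in the Hammersley--Clifford factorization, obtain $\log|\mathcal{N}_\epsilon|\lesssim \epsilon^{-r}$, and balance to get $\epsilon\asymp n^{-1/(2+r)}$. The only cosmetic differences are that the paper works multiplicatively with the factors $\psi_{V'}$ rather than additively in log space (its Proposition~A.1 is exactly your M\"obius formula written as a ratio of products), and it instantiates the cover by explicit histograms on $b\asymp n^{1/(2+r)}$ bins per coordinate rather than invoking the abstract Kolmogorov entropy bound for Lipschitz balls; the normalization and bookkeeping you flag as ``the hard part'' are handled there just as you outline.
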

}
% }

For this estimator, the effective dimension is $r$. The estimator analyzed in this theorem is based on Scheffé Tournaments over functions akin to histograms \citep{scheffe47,yatracos85}. This estimator is not computationally tractable and is presented solely to demonstrate the theoretical possibility of this optimal rate. An open question remains as to whether this optimal rate is achievable with neural networks and a tractable loss/algorithm. This rate cannot be substantially improved for any MRF graph which we argue in Appendix \ref{appx:lower-bound}.
% \cite{cole2024score} shows that score-based diffusion can achieve dimension-free rates for estimating densities in the Barron space \cite{barron1993universal} with subgaussian tails.
% There has also been recent interest in developing density estimators that exploit graphical structure \citep{germain2015made,johnson2016composing,khemakhem2021causal,wehenkel2021graphical,chen2024structured}.
% Other neural density estimation work: \cite{magdon1998neural,liu2021density}
\section{Conclusion}
Neural density estimation has been the subject of intense study over the past few decades, dating at least back to \cite{magdon1998neural}. There has recently been interest in designing structured neural density estimators that exploit graphical structure \citep{germain2015made,johnson2016composing,khemakhem2021causal,wehenkel2021graphical,chen2024structured}.
In this work, we have presented a novel perspective on the success of neural networks in density estimation problems.
% handling high-dimensional data, particularly in the context of density estimation. 
Our approach, based on Markov Random Field (MRF) structures, offers an alternative explanation to the widely accepted manifold hypothesis for why deep learning methods can circumvent the curse of dimensionality, and aligns with these recent developments on structured density estimation.

We have demonstrated that leveraging MRF assumptions can achieve dimension-independent convergence rates for density estimation, with the effective dimensionality determined by the largest clique in the MRF graph rather than the ambient data dimension. This potentially explains the efficacy of neural networks in domains like image and sequential data processing.

Our MRF-based approach complements, rather than replaces, the manifold hypothesis. We envision a combination of local manifold-like structures and global MRF-like independence properties at play in real-world scenarios, with the manifold hypothesis explaining local features and our MRF approach capturing broader independence structures.

This work opens avenues for future research, including investigating the interplay between local manifold structures and global MRF properties, and developing practical algorithms exploiting these structures within deep learning frameworks.

In conclusion, our work provides a novel theoretical framework for understanding neural networks in high-dimensional spaces, offering an alternative to the manifold hypothesis and potentially stimulating new directions in both the theory and practice of machine learning for high-dimensional tasks.
% \subsubsection*{Author Contributions}
% If you'd like to, you may include  a section for author contributions as is done
% in many journals. This is optional and at the discretion of the authors.

% \subsubsection*{Acknowledgments}
% Use unnumbered third level headings for the acknowledgments. All
% acknowledgments, including those to funding agencies, go at the end of the paper.

\bibliography{refs}
\bibliographystyle{iclr2025_conference}

\appendix

\section{Notations and Preliminaries}

Before proving the main theorem we will first establish some notation and auxiliary results. 
For a pair of functions $f,g: \sX \to \R$ where $\sX$ is an arbitrary domain, we define the $f\cdot g$ to be pointwise function multiplication so $(f\cdot g)(x) = f(x)g(x)$ for all $x\in \sX$. 
For a tuple of functions $f_1,\ldots, f_m:\sX \to \R$, the product symbol $\prod_{i=1}^m f_i$ is defined to be pointwise function multiplication, i.e., $f_1(x)\cdot f_2(x) \cdot \cdots \cdot f_m(x)$ for all $x\in \sX$. 
Let $\N$ be the set of positive integers. For any $d \in \N$, let $[d] = \left\{1,2,\ldots,d \right\}$.

For a set $V\subset [d]$ with $V = \{v_1,\ldots, v_{|V|}\}$ where $v_i <v_j$ for all $i<j$, let $\sel_{d,V}:\R^d \to \R^{|V|}; x \mapsto [x_{v_1},\ldots,x_{v_{|V|}}]$, i.e., $e_{d,V}$ accepts a $d$-dimensional vector and outputs the indices at $V$, in order. The function $e_{V,d}$ can be thought of as selecting some indices from a vector. As a slight abuse of notation, the $d$ subscript will be omitted.

For a graph $\gG=(V,E)$, the set of maximal cliques in $\gG$ will be denoted $\maxcliques(\gG)$, and is a set of subsets of $V$.

\textbf{All of our results will assume the domain of the data is the unit cube $[0,1]^d$.} A density $p$ will be called \emph{positive} if $p(x)>0$ for all $x\in[0,1]^d$. Since $[0,1]^d$ is compact, a direct consequence of this is that there exists $c>0$ such that $p(x)>c$ for all $x$.
\subsection{Preliminary Results}
\begin{prop}\label{prop:HC-reg}
Let \( p \) be a Lipschitz continuous probability density \([0,1]^d\),which is everywhere positive on \([0,1]^d\) and satisfies the Markov property with respect to a graph \( \gG = (V,E) \). 
Then, for all $x \in [0,1]^d$,
\[
    p(x) = \prod_{V' \in \maxcliques(\gG)} \psi_{V'}\left(e_{V'}(x)\right),
\]
where each \( \psi_{V'} \) is Lipschitz continuous, and there exist constants $c,C$  such that \( 0< c \leq C \) and \( c \le \psi_{V'} \le C \) for all \( V' \in \maxcliques(\gG) \).
\end{prop}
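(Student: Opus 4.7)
The plan is to make the Hammersley–Clifford factorization explicit through Möbius inversion on $\log p$ and then read off Lipschitz continuity and two-sided bounds on the clique potentials directly from the same properties of $\log p$. First I would observe that positivity of $p$ together with continuity on the compact set $[0,1]^d$ gives constants $0 < c_0 \le C_0$ with $c_0 \le p(x) \le C_0$; hence $\log p$ is Lipschitz on $[0,1]^d$ (composition of the Lipschitz $p$ with $\log$, which is itself Lipschitz on $[c_0,C_0]$) and bounded.

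Next I would fix a reference point $x^* \in [0,1]^d$ and, for each $A\subseteq[d]$, define $x^A$ to be the vector agreeing with $x$ on coordinates in $A$ and with $x^*$ elsewhere. Set
\[
    H_A(x) = \sum_{B \subseteq A} (-1)^{|A\setminus B|} \log p(x^B).
\]
Möbius inversion on the subset lattice yields $\log p(x) = \sum_{A \subseteq [d]} H_A(x)$, and $H_A(x)$ manifestly depends only on $\sel_A(x)$. The classical Hammersley–Clifford cancellation lemma — which uses the Markov property to show that whenever $A$ contains a non-adjacent pair the alternating sum telescopes to zero — gives $H_A \equiv 0$ unless $A$ is a clique in $\gG$ (with $H_\emptyset$ being the constant $\log p(x^*)$). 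Hence $\log p(x) = \sum_{A\text{ clique}} H_A(x)$. Since every nonempty clique lies inside at least one maximal clique, I would fix an arbitrary assignment $A \mapsto V'(A) \in \maxcliques(\gG)$ with $A \subseteq V'(A)$, absorbing $H_\emptyset$ into any one chosen maximal clique, and define
\[
    \tilde{\phi}_{V'}(\sel_{V'}(x)) = \sum_{A:\, V'(A)=V'} H_A(x), \qquad \psi_{V'} = \exp\circ \tilde{\phi}_{V'},
\]
so that $p(x) = \prod_{V' \in \maxcliques(\gG)} \psi_{V'}(\sel_{V'}(x))$, each factor depending only on the coordinates indexed by $V'$ as required.

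For the regularity claims, each $H_A$ is a finite linear combination of $\log p$ precomposed with the obviously Lipschitz coordinate-replacement maps $x \mapsto x^B$, hence Lipschitz. Each $\tilde{\phi}_{V'}$ is then a finite sum of Lipschitz functions, so it is Lipschitz on the compact set $[0,1]^{|V'|}$ and bounded in some interval $[m_{V'}, M_{V'}]$. Since $\exp$ is Lipschitz on bounded intervals, $\psi_{V'}$ is Lipschitz, and $e^{m_{V'}} \le \psi_{V'} \le e^{M_{V'}}$. Taking $c = \min_{V'} e^{m_{V'}} > 0$ and $C = \max_{V'} e^{M_{V'}} < \infty$, which are attained because $\maxcliques(\gG)$ is finite, yields the uniform constants in the statement.

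The only genuinely non-routine ingredient is the Hammersley–Clifford cancellation lemma itself, which is classical and can be cited from any standard reference; everything else is bookkeeping. A minor subtlety worth flagging is that a clique can be contained in several maximal cliques, which is why the assignment $A \mapsto V'(A)$ must be made by arbitrary choice — any choice gives a valid factorization with the stated regularity, since the desired properties of $\psi_{V'}$ depend only on finite sums of the Lipschitz, bounded functions $H_A$.
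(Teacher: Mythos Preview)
Your proposal is correct and follows essentially the same route as the paper: both use the explicit M\"obius/inclusion--exclusion construction of the Hammersley--Clifford potentials via coordinate replacement to a fixed reference point, then read off Lipschitz continuity and two-sided bounds from those of $p$. The only cosmetic difference is that you work additively on $\log p$ and then exponentiate (making the grouping of sub-cliques into maximal cliques explicit), whereas the paper writes the same potentials multiplicatively as products and quotients of $p(\gamma_{V''}(x))$, citing an external reference for the construction and invoking a product/quotient Lipschitz lemma for the regularity.
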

Before proving this proposition we first prove the following support lemma.
\begin{lemma}\label{lem:lip-prod-quot}
    Let $f,g: [0,1]^d \to \R$ be Lipschitz continuous with $f\ge \delta$ and $g \ge \delta$ for some $\delta >0$. Then $f \cdot g$ and $ 1/f$ are both Lipschitz continuous and there exists $\delta'>0$ such that $f \cdot g \ge \delta'$ and $1/f \ge \delta'$.
\end{lemma}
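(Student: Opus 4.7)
The plan is a routine application of standard estimates for Lipschitz functions on a compact domain. The key observation is that continuity combined with compactness of $[0,1]^d$ guarantees that $f$ and $g$ are bounded above, which is what one needs in addition to the Lipschitz constants $L_f, L_g$ and the lower bound $\delta$.

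First I would note that since Lipschitz functions are continuous, and $[0,1]^d$ is compact, there exist finite constants $M_f, M_g$ such that $f \le M_f$ and $g \le M_g$ on $[0,1]^d$. With both two-sided bounds in hand, the Lipschitz bound for the product follows from the standard add-and-subtract argument:
\begin{equation*}
|f(x)g(x) - f(y)g(y)| \;\le\; |f(x)||g(x)-g(y)| + |g(y)||f(x)-f(y)| \;\le\; (M_f L_g + M_g L_f)\|x-y\|_2,
\end{equation*}
so $f \cdot g$ is Lipschitz with constant $M_f L_g + M_g L_f$. The lower bound $f\cdot g \ge \delta^2$ is immediate from $f,g \ge \delta$, so we may take $\delta' = \delta^2$ for this part.

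For $1/f$, I would use the elementary identity $1/f(x) - 1/f(y) = (f(y)-f(x))/(f(x)f(y))$ together with $f \ge \delta$, yielding
\begin{equation*}
\left|\tfrac{1}{f(x)} - \tfrac{1}{f(y)}\right| \;=\; \frac{|f(x)-f(y)|}{f(x)f(y)} \;\le\; \frac{L_f}{\delta^2}\|x-y\|_2,
\end{equation*}
which gives the Lipschitz property for $1/f$. The lower bound $1/f \ge 1/M_f > 0$ again follows from compactness, so setting $\delta' = \min(\delta^2, 1/M_f)$ handles both statements simultaneously.

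There is no real obstacle here; this is a textbook exercise. The only subtlety worth flagging is that Lipschitz continuity alone does not yield the upper bound $M_f$ on a non-compact domain, so the argument does genuinely use that the domain is $[0,1]^d$ rather than all of $\R^d$. Everything else reduces to the two algebraic identities above.
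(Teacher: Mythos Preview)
Your proof is correct and follows essentially the same approach as the paper: both use compactness to get upper bounds, the same add-and-subtract trick for the product, and the same lower bounds $\delta^2$ and $1/M_f$. The only cosmetic difference is that for $1/f$ the paper invokes the composition-of-Lipschitz-functions principle (since $t\mapsto 1/t$ is Lipschitz on $[\delta, M_f]$) rather than writing out the explicit identity $\tfrac{1}{f(x)}-\tfrac{1}{f(y)} = \tfrac{f(y)-f(x)}{f(x)f(y)}$ as you do; your version is slightly more direct and even yields the explicit constant $L_f/\delta^2$.
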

\begin{proof}[Proof of Lemma \ref{lem:lip-prod-quot}]
     Let $f$ be $L_f$-Lipschitz and $g$ be $L_g$-Lipschitz. Because $f$ and $g$ are Lipschitz on a  bounded set there exists $C_f>0$ and $C_g>0$ such that $f\le C_f$ and $g \le C_g$. Let $x,y \in [0,1]^d$ be arbitrary.
     
     We will begin by proving the product portion of the lemma:
    \begin{align*}
        \left|f(x)g(x) - f(y)g(y) \right| 
        &\le \left|f(x)g(x) -f(x)g(y)\right| + \left| f(x)g(y)- f(y)g(y) \right| \\
        &\le C_f\left|g(x) -g(y)\right| + C_g\left| f(x)- f(y) \right| \\
        &\le C_fL_g\left\Vert x -y\right\Vert_2 + C_gL_f\left\Vert x -y\right\Vert_2 \\
        &\le2\max\left(C_fL_g,C_gL_f \right) \left\Vert x -y\right\Vert_2.
    \end{align*}
    The existence of a positive lower bound for $f\cdot g$ follows immediately from $f\cdot g \geq \delta^2$.

    To prove the reciprocal portion of the lemma, observe that the function $x \mapsto 1/x$ is Lipschitz on the range of $f$. Since the composition of Lipschitz functions is itself Lipschitz, it follows that $1/f$ is Lipschitz. Finally we have that $1/f \ge 1/C_f>0$, finishing the proof.
\end{proof}

\begin{proof}[Proof of Proposition \ref{prop:HC-reg}]
This proof utilizes results from \citet{jchang-stochastics}, a work-in-progress book currently used primarily as lecture notes. This work has a constructive proof of the Hammersley-Clifford Theorem.
For $S\subset [d]$, let $\gamma_S: \R^d \to \R^d; x \mapsto [x_i\mathbf{1}(i \in S)]_i$, i.e., indices outside of $S$ are set to zero (the zero vector could actually be set to any arbitrary but fixed vector, e.g., set a vector $y\in \R^d$ and $[\gamma_S]_{i \not \in S} = y_i$). From the \citet{jchang-stochastics} (3.15), the proof of the Hammersley-Clifford Theorem, it is shown that:
\begin{equation*}
    p(x) = \prod_{V'\in \maxcliques(\gG)} \psi_{V'}(x),
\end{equation*}
where,
\begin{equation}\label{eqn:HS-construction}
    \psi_{V'}(x) 
    =\frac
    {
    \prod_{V''\subset V': \left|V''\setminus V' \right|\mod 2 = 0} p\left(\gamma_{V''}\left(x\right)\right)
    }
    {
    \prod_{V'' \subset V': \left|V''\setminus V' \right|\mod 2 = 1}p\left(\gamma_{V''}\left(x\right)\right)
    }.
\end{equation}
For clarity we do an example of the index set of the product; so
$$V''\subset V': \left|V''\setminus V' \right|\mod 2 = 0$$
denotes a product over all subsets of $\Vp$ where the set $V''$ where $V'' \cap {V'}^C$ contains an even number of elements. From \eqref{eqn:HS-construction} it is clear that $\psi_\Vp$ only depends on the indices of $x$ in $\Vp$. The regularity conditions hold due to Lemma \ref{lem:lip-prod-quot}.

\end{proof}

Given any set $S$.
For any subset $S'\subseteq S$, let $\mathbf{1}_{S'}$ be the indicator function from $S$ to $\{0,1\}$, i.e.
\begin{align*}
    \mathbf{1}_{S'}(x)
    & =
    \begin{cases}
        0 & \text{if $x\notin S'$} \\
        1 & \text{if $x\in S'$}
    \end{cases}
    \quad \text{for all $x\in S$.}
\end{align*}

Given any $d,b\in \N$, $V =\{v_1,\ldots,v_{|V|}\} \subset [d]$ and $C\ge 1$.
For any $A\in [b]^{|V|}$, let $\Lambda_{d,b,A,V}$ be the subset of $[0,1]^d$
\begin{align*}
    \Lambda_{d,b,A,V}
    & :=
    \bigg\{x\in [0,1]^d \mid x_{v_i}\in \left[\frac{A_i-1}{b}, \frac{A_i}{b}\right] \quad \text{for all $i\in [|V|]$} \bigg\}. \numberthis\label{eq:Lambda_def}
\end{align*}
Let $\sQ_{d,b,V,C}$ be the set of functions from $[0,1]^d \to \R$
\begin{align*}
    \sQ_{d,b,V,C}
    & :=
    \bigg\{ x \mapsto \sum_{A\in [b]^{|V|}}  w_A \mathbf{1}_{\Lambda_{d,b,A,V}}(x) \mid w_A\in [0,C] \bigg\}. \numberthis\label{eq:sq_def}
\end{align*}

% For a $d \in \N $, $V =\{v_1,\ldots,v_{|V|}\} \subset [d]$ and $C\ge 1$, let $\sQ_{d,b,V,C}$ be the set of functions from $[0,1]^d \to \R$
% \begin{align*}
%     \sQ_{d,b,V,C}
%     & :=
%     \bigg\{x \mapsto \sum_{A \in {[b]^{|V|}}} w_A \prod_{i=1}^{|V|}\mathbf{1}\left(x_{v_i} \in \left[\frac{A_i-1}{b}, \frac{A_i}{b}\right]\right) \mid w_A \in [0,C]\bigg\}.
% \end{align*}

% let $\sQ_{d,b,V,C}$ be the set of functions from $[0,1]^d \to \R$ be the set of functions
% \begin{equation*}
%     x \mapsto \sum_{A \in {[b]^{|V|}}} w_A \prod_{i=1}^{|V|}\mathbf{1}\left(x_{v_i} \in \left[\frac{A_i-1}{b}, \frac{A_i}{b}\right]\right) \quad \text{with } 0\le w_A \le C.
% \end{equation*}

For a set $\mathcal{L} \subset L^\beta\left([0,1]^d\right)$ where $1\leq \beta < \infty$ and $\epsilon >0$, a subset $\mathcal{C}\subseteq \mathcal{L}$ is called an $\epsilon$-cover of $\mathcal{L}$ in $L^\beta$ norm if, for any $f\in \mathcal{L}$, there exists a $g\in \mathcal{C}$ such that $\lVert f-g\rVert_\beta \leq \epsilon$.
Also, we define $\covernum(\mathcal{L},\epsilon)$ to be the cardinality of the smallest subset of $\mathcal{L}$ that is a (closed) $\epsilon$-cover of $\mathcal{L}$ in $L^\beta$ norm.
Note that $\covernum(\mathcal{L},\eps)$ depends on $\beta$.
We will not specify it when it is clear in the context.

\section{Proof of Theorem \ref{thm:maintext-general-nn}}

\fbox{
\parbox{0.97\textwidth}{
\begin{thm}\label{thm:main-nn-main-l2}
    Let $\gG=(V,E)$ be a finite graph and $r$ is the size of the largest clique in $\gG$. There exists a 
    % sequence of neural network architectures 
    neural network architecture $\mathcal{F}^*$, such that, for 
    \begin{equation*}
        \hat{p}_n = \arg \min_{f \in \mathcal{F}^*} \left\|f\right\|_2^2 - \frac{2}{n}\sum_{i=1}^n f(X_i)
    \end{equation*}
    where $X_1,\ldots,X_n \overset{\text{i.i.d.}}{\sim} p$, then
    \begin{equation*}
        \left\Vert p - \hat{p}_n \right\Vert_2 \in \tO_p \left(n^{-1/(4+r)}\right),
    \end{equation*}
    for any Lipschitz continuous, positive density $p$ satisfying the Markov property with $\gG$.
\end{thm}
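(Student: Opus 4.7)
The plan is to combine the Hammersley-Clifford factorization of Proposition~\ref{prop:HC-reg} with ReLU approximation theory to reduce the $d$-dimensional density estimation problem to estimating a product of low-dimensional Lipschitz factors. By Proposition~\ref{prop:HC-reg} we may write $p = \prod_{V'\in\maxcliques(\gG)} \psi_{V'}\circ \sel_{V'}$ on $[0,1]^d$, where each $\psi_{V'}:[0,1]^{|V'|}\to [c,C]$ is Lipschitz and depends on $|V'|\le r$ coordinates. This structural reduction is the crux of the argument: the ambient dimension $d$ is replaced by the clique size $r$ in every factor.

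First, I would design the sieve $\mathcal{F}^*$ as the class of functions computable as $x\mapsto \prod_{V'\in\maxcliques(\gG)} f_{V'}(\sel_{V'}(x))$, where each $f_{V'}$ is a ReLU subnetwork on $[0,1]^{|V'|}$ with output clipped to $[0,C+1]$ and weights bounded by $1$ in absolute value. The products are implemented inside the network using the standard ReLU approximation of multiplication (e.g., via $4xy=(x+y)^2-(x-y)^2$ combined with the Yarotsky approximation of the square on a bounded interval), composed pairwise across cliques. Depth $L_n$ and width $W_n$ are chosen as a function of $n$. Because the combinatorial skeleton (which coordinates each subnetwork reads, and how the pairwise multiplications are chained) depends only on $\gG$, the resulting architecture depends on $n$ only through $L_n$ and $W_n$.

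Second, I would bound the approximation error. By the ReLU approximation results used in \citet{schmidt2017nonparametric}, each Lipschitz $\psi_{V'}$ admits an approximant $\hpsi_{V'}$, realizable as a bounded-weight ReLU subnetwork, with $\|\hpsi_{V'}-\psi_{V'}\|_\infty \le \epsilon_n$, where the network size is polynomial in $1/\epsilon_n$ with exponent depending on $|V'|\le r$ but not on $d$. Since each $\psi_{V'}$ is uniformly bounded, telescoping the product gives
\begin{equation*}
\Bigl\|\textstyle\prod_{V'}\hpsi_{V'}-\prod_{V'}\psi_{V'}\Bigr\|_\infty \;\le\; |\maxcliques(\gG)|\cdot C^{|\maxcliques(\gG)|-1}\,\epsilon_n,
\end{equation*}
and the ReLU implementation of multiplication contributes only a logarithmic overhead. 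Hence there exists $f^\ast\in\mathcal{F}^\ast$ with $\|p-f^\ast\|_2\le\|p-f^\ast\|_\infty \lesssim \epsilon_n$. Third, since $\E\,\hat R(f)=\|f\|_2^2-2\langle f,p\rangle=\|f-p\|_2^2-\|p\|_2^2$, minimizing $\hat R(f):=\|f\|_2^2-\tfrac{2}{n}\sum_i f(X_i)$ over $\mathcal{F}^\ast$ is, in expectation, minimizing squared $L^2$ distance to $p$. A standard oracle inequality for ERM over a uniformly bounded class yields
\begin{equation*}
\|\hat p_n-p\|_2^2 \;\lesssim\; \inf_{f\in\mathcal{F}^\ast}\|f-p\|_2^2 \;+\; \frac{\log \covernum(\mathcal{F}^\ast,1/n)}{n},
\end{equation*}
where the sup-norm covering number is polynomial in $L_nW_n$ up to logs. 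Balancing $\epsilon_n^2$ against the complexity term and tuning $L_n,W_n$ appropriately delivers the claimed $\tO_p(n^{-1/(4+r)})$ rate in $L^2$.

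The main obstacle is twofold. First, $\mathcal{F}^\ast$ is not a class of standard feed-forward networks but a composition of subnetworks with pairwise ReLU-approximated multiplications; ensuring its sup-norm covering number scales only polynomially in the total parameter count requires rewriting the whole product-of-networks as a single ReLU network with explicit bounds on depth, width, and weights, so that the Schmidt-Hieber-style complexity machinery applies off the shelf. Second, transferring uniform approximation of each $\psi_{V'}$ to uniform approximation of the product relies crucially on the two-sided bounds $c\le \psi_{V'}\le C$ from Proposition~\ref{prop:HC-reg}, which is precisely where the positivity hypothesis on $p$ enters; without it, errors could compound exponentially in the number of cliques rather than linearly.
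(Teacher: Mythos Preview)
Your proposal is essentially correct and tracks the paper's argument closely: Hammersley--Clifford factorization via Proposition~\ref{prop:HC-reg}, ReLU approximation of each low-dimensional Lipschitz factor, a telescoping product bound, and covering-number control of the empirical process. Two points of divergence are worth noting.

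First, the paper sidesteps your ``main obstacle'' entirely. Rather than embedding the product $\prod_{V'} f_{V'}\circ \sel_{V'}$ into a single ReLU network via approximate multiplication, the paper simply \emph{defines} $\mathcal{F}^*$ to be the class of such products, with each $f_{V'}$ ranging over a Schmidt-Hieber class $\mathcal{F}(\ell_{V'},w_{V'},s,C)$. The $L^\infty$ covering number of $\mathcal{F}^*$ is then bounded directly: if $\tcF_{V'}$ is an $\epsilon/C^{|\maxcliques(\gG)|-1}$-cover of each factor class, the same telescoping lemma (Lemma~\ref{lem:infty-prod-bound}) shows that $\{\prod_{V'} \tq_{V'}\circ \sel_{V'}:\tq_{V'}\in\tcF_{V'}\}$ is an $\epsilon$-cover of $\mathcal{F}^*$, so $\log \covernum(\mathcal{F}^*,\epsilon)\le \sum_{V'}\log \covernum(\mathcal{F}_{V'},\epsilon/C^{|\maxcliques(\gG)|-1})$, and Lemma~\ref{lem:covering} applies to each factor off the shelf. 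No ReLU multiplication layer is needed; your route would also work but adds a layer of approximation bookkeeping for no gain.

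Second, the oracle inequality you invoke is sharper than what the paper actually uses. The paper bounds the stochastic term by $4\max_{f\in\mathcal{F}^*}\bigl|\E_p f-\tfrac{1}{n}\sum_i f(X_i)\bigr|$ and controls this via Hoeffding plus a union bound over the cover, yielding a contribution of order $\sqrt{\log\covernum(\mathcal{F}^*,\epsilon)/n}$ to $\|\hat p_n-p\|_2^2$, not $\log\covernum/n$. Balancing $\sqrt{N\,\mathrm{polylog}/n}$ against the squared approximation error $N^{-2/r}$ gives $N=n^{r/(r+4)}$ and the stated $n^{-2/(r+4)}$ rate for the squared $L^2$ error. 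With your $\log\covernum/n$ bound the balance would land at $n^{-2/(r+2)}$, strictly better than claimed; so either you mean the cruder Hoeffding route (in which case the display should read $\sqrt{\log\covernum/n}$), or you would need to justify the fast rate via localization over a nonconvex product class, which the paper does not attempt.
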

}
}
This is stronger than $L^1$ convergence since, through Hölder's inequality, we get $L^1$ convergence at the same rate.

\begin{lemma} \label{lem:infty-prod-bound}
    Let $(\Omega, \Sigma, \mu)$ be a measure space, and let \( f_1, \dots, f_m \) and \( g_1, \dots, g_m \) be measurable and absolutely integrable functions on \( \Omega \). Further suppose there exists a constant \( C \geq 0 \) such that, for all \( i \in [m] \),
    \[
    \left\Vert f_i \right\Vert_\infty \leq C \quad \text{and} \quad \left\Vert g_i \right\Vert_\infty \leq C.
    \]
    Then the following inequality holds:
    \[
    \left\Vert \prod_{i=1}^m f_i - \prod_{i=1}^m g_i \right\Vert_\infty \leq 
    C^{m-1} \sum_{i=1}^m \left\Vert f_i - g_i \right\Vert_\infty.
    \]
\end{lemma}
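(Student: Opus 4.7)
The plan is to use a standard telescoping decomposition, which turns the multi-factor difference into a sum of single-factor differences, each of which is then easy to bound using the uniform bound $C$ on the remaining factors.

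First I would define the hybrid products $h_k := g_1 \cdots g_k \, f_{k+1} \cdots f_m$ for $k = 0, 1, \ldots, m$, with the natural conventions $h_0 = \prod_{i=1}^m f_i$ and $h_m = \prod_{i=1}^m g_i$. Then the total difference telescopes as
\begin{equation*}
\prod_{i=1}^m f_i - \prod_{i=1}^m g_i \;=\; h_0 - h_m \;=\; \sum_{k=1}^m (h_{k-1} - h_k),
\end{equation*}
and each consecutive difference factors cleanly as
\begin{equation*}
h_{k-1} - h_k \;=\; \Bigl(\prod_{i<k} g_i\Bigr) \,\bigl(f_k - g_k\bigr)\, \Bigl(\prod_{i>k} f_i\Bigr).
\end{equation*}

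Next, I would take $L^\infty$ norms and use submultiplicativity $\|uv\|_\infty \le \|u\|_\infty \|v\|_\infty$ together with the hypothesis $\|f_i\|_\infty, \|g_i\|_\infty \le C$. Each of the $m-1$ factors other than $f_k - g_k$ contributes at most $C$, so
\begin{equation*}
\|h_{k-1} - h_k\|_\infty \;\le\; C^{k-1} \cdot \|f_k - g_k\|_\infty \cdot C^{m-k} \;=\; C^{m-1} \|f_k - g_k\|_\infty.
\end{equation*}
Summing over $k = 1, \ldots, m$ and invoking the triangle inequality for $\|\cdot\|_\infty$ gives the claimed bound. (One minor edge case to note: if $C = 0$ then all $f_i$ and $g_i$ vanish a.e., the inequality reduces to $0 \le 0$, and $C^{m-1}$ with $m=1$ should be read as $1$ so the base case $\|f_1 - g_1\|_\infty \le \|f_1 - g_1\|_\infty$ is trivial.)

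There is really no conceptual obstacle here; the only thing to watch is bookkeeping, namely that the empty products at the endpoints ($k=1$ and $k=m$) are handled correctly and that the exponent on $C$ comes out to exactly $m-1$ regardless of $k$. An alternative route would be a short induction on $m$: the base case $m=1$ is trivial, and for the inductive step one writes $\prod_{i=1}^{m+1} f_i - \prod_{i=1}^{m+1} g_i = f_{m+1}\bigl(\prod_{i=1}^m f_i - \prod_{i=1}^m g_i\bigr) + \bigl(\prod_{i=1}^m g_i\bigr)(f_{m+1} - g_{m+1})$ and applies the inductive hypothesis plus the uniform bound. I would prefer the telescoping presentation since it exhibits the structure of the bound directly and avoids nested inductive constants.
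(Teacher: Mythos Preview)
Your proposal is correct. The paper's own proof uses the induction on $m$ that you describe as an alternative at the end (with exactly the decomposition $f_{m+1}\bigl(\prod_{i\le m} f_i - \prod_{i\le m} g_i\bigr) + \bigl(\prod_{i\le m} g_i\bigr)(f_{m+1}-g_{m+1})$), so your telescoping argument is simply the unrolled form of the same idea and the two are essentially equivalent.
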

\begin{proof}[Proof of Lemma \ref{lem:infty-prod-bound}]
We will proceed by induction on $m$.\\
\textbf{Case $m=1$:} Trivial.\\
\textbf{Induction:} Suppose the lemma holds for some value of \( m \). 
 From the inductive hypothesis we have that 
\begin{align*}
         \left\Vert \prod_{i=1}^{m+1} f_i - \prod_{i=1}^{m +1} g_i \right\Vert_\infty 
         \le & \left\Vert \prod_{i=1}^{m} f_i \cdot f_{m+1}- \prod_{i=1}^{m} g_i \cdot f_{m+1} \right\Vert_\infty
         + \left\Vert  \prod_{i=1}^{m} g_i \cdot f_{m+1} - \prod_{i=1}^{m} g_i \cdot g_{m+1} \right\Vert_\infty\\
         \le & \left\Vert \prod_{i=1}^{m} f_i - \prod_{i=1}^{m} g_i  \right\Vert_\infty \left\Vert f_{m+1}\right\Vert_\infty + \left\Vert\prod_{i=1}^m g_i\right\Vert_\infty \left\Vert f_{m+1} - g_{m+1} \right\Vert_\infty\\
         \le & \left\Vert \prod_{i=1}^{m} f_i - \prod_{i=1}^{m} g_i  \right\Vert_\infty C + C^m \left\Vert f_{m+1} - g_{m+1} \right\Vert_\infty\\
         \le & C^{m-1}\sum_{i=1}^m \left\Vert f_i -g_i \right\Vert_\infty  C + C^m \left\Vert f_{m+1} - g_{m+1} \right\Vert_\infty\\
         \le & C^m\sum_{i=1}^{m+1} \left\Vert f_i -g_i \right\Vert_\infty.
         \end{align*}
\end{proof}

%%% SCHMIDT-HIEBER STUFF MOVED TO MAIN PAPER, SAVED HERE IN CASE WE WANT TO MOVE IT BACK
% \paragraph{Space of Neural Network Architectures}
% Our analysis will employ results from \cite{schmidt2017nonparametric}. 
% Using the notations from that work, we define a space of neural networks as follows.
% Let $\sigma$ be the ReLU activation function with will act element-wise on vectors.
% For any $\ell\in \N$, $w = (w_0,\dots,w_{\ell+1})$ with $w_i\in \N$, $s\in \N$ and $F>0$, the space $\mathcal{F}\left(\ell,w,s,F \right)$ is defined by the functions $f:[0,1]^{w_0} \to \R^{w_{\ell+1}}$ which have the form:
% $$
%     f(x) =  W_\ell \sigma_{v_\ell}W_{\ell-1} \sigma_{v_{\ell-1}} \cdots W_1\sigma_{v_1}W_0x,
% $$
% where $\sigma_{v_i}(y) = \sigma(y-v_i)$, $W_i \in \R^{w_{i+1}\times w_{i}}$, where every entry in $W_i$ and $v_i$ have absolute value less than or equal to 1, $\left\|f\right\|_\infty \le F$, and sum of the total number of nonzero entries of $W_i$ and $v_i$ is less than or equal to $s$. 
% In this work the output dimension of all neural networks will be 1, i.e. $w_{\ell+1}$ will always be assumed to be $1$. 

% The following Theorem gives a covering number for the above spaces of neural networks with the $L^\infty$-norm.

\paragraph{Space of Neural Network Architectures}
Define a space of neural networks as follows.
Let $\sigma$ be the ReLU activation function with will act element-wise on vectors.
For any $\ell\in \N$, $w = (w_0,\dots,w_{\ell+1})$ with $w_i\in \N$, $s\in \N$ and $F>0$, the space $\mathcal{F}\left(\ell,w,s,F \right)$ is defined by the functions $f:[0,1]^{w_0} \to \R^{w_{\ell+1}}$ which have the form:
$$
    f(x) =  W_\ell \sigma_{v_\ell}W_{\ell-1} \sigma_{v_{\ell-1}} \cdots W_1\sigma_{v_1}W_0x,
$$
where $\sigma_{v_i}(y) = \sigma(y-v_i)$, $W_i \in \R^{w_{i+1}\times w_{i}}$, where every entry in $W_i$ and $v_i$ have absolute value less than or equal to 1, $\left\|f\right\|_\infty \le F$, and sum of the total number of nonzero entries of $W_i$ and $v_i$ is less than or equal to $s$. 
In this work the output dimension of all neural networks will be 1, i.e. $w_{\ell+1}$ will always be assumed to be $1$. This is the same space of neural network models employed by \cite{schmidt2017nonparametric}.

\begin{thm}[Theorem~5, \citealp{schmidt2017nonparametric}]\label{thm:approx}
    For any $f\in C^{\beta}_{d}([0,1]^{d},K)$ and any integers $m\ge 1$ and $N\ge\max((\beta+1)^{d},(K+1)e^{d})$, there exists a ReLU network $\tf \in\mathcal{F}(\ell,w,s,\infty)$ with depth
    \begin{align}
        \label{thm:approx:depth}
        \ell=8+(m+5)(1+\lceil\log_{2}(\max(d,\beta))\rceil),
    \end{align}
    widths
    \begin{align}\label{thm:approx:width}
        w=(d,6(d+\lceil\beta\rceil)N,\ldots,6(d+\lceil\beta\rceil)N,1),
        \end{align}
        and sparsity
    \begin{align}
    \label{thm:approx:sparsity}
    s\le141(d+\beta+1)^{d+3}N(m+6)
    \end{align}
    such that
    \begin{align}\label{thm:approx:bound}
        \left\|\tf-f\right\|_{L^{\infty}([0,1]^{d})}
        \le (2K+1)(1+d^{2}+\beta^{2})6^{d}N2^{-m}+K3^{\beta}N^{-\beta/d}.
    \end{align}
\end{thm}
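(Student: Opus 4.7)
The plan is to approximate $f$ first by a piecewise Taylor polynomial on a fine grid and then realize that piecewise polynomial by a ReLU network. Since $f\in C^\beta_d([0,1]^d,K)$, the standard local Taylor expansion of degree $\lceil\beta\rceil-1$ on a grid of $\sim N$ cubes of side length $\sim N^{-1/d}$ produces a uniform approximation error of at most $K 3^\beta N^{-\beta/d}$, matching the second term of \ref{thm:approx:bound}. Hence the remaining task is to implement this piecewise Taylor polynomial with a ReLU network of the stated depth, width, and sparsity, losing at most $(2K+1)(1+d^2+\beta^2)6^d N 2^{-m}$ in the sup norm.

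The core building block is a ReLU subnetwork $M_m$ that approximates bivariate multiplication $(x,y)\mapsto xy$ on $[0,1]^2$ to sup-norm accuracy $2^{-2m}$ via Yarotsky's trick: approximate $x\mapsto x^2$ by iteratively refined piecewise-linear ``tent'' interpolations, which a ReLU net implements in $O(m)$ layers of constant width; then polarize via $xy=\tfrac14((x+y)^2-(x-y)^2)$. Chaining $M_m$ in a binary tree of depth $\lceil\log_2\max(d,\beta)\rceil$ approximates any monomial $x^\alpha$ with $|\alpha|\le\lceil\beta\rceil$, which is precisely the source of the depth formula \ref{thm:approx:depth}.

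For the localization step I would isolate each grid cell by a product of $d$ univariate trapezoidal bumps, each assembled from a constant number of ReLUs, and multiply the bump by its associated local Taylor polynomial (a bounded linear combination of approximated monomials in shifted variables) using the same multiplication circuit. Laying the $N$ local subnetworks out in parallel within a single network gives width $6(d+\lceil\beta\rceil)N$ as in \ref{thm:approx:width}, and each subnetwork contributes $O((d+\beta)^{d+3}(m+6))$ active weights, yielding the claimed sparsity $s\le 141(d+\beta+1)^{d+3}N(m+6)$.

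The main technical obstacle is not the overall rate but the careful propagation of error \emph{constants}. Each factor in $(2K+1)(1+d^2+\beta^2)6^d N$ can be traced to a specific source: $2K+1$ bounds local Taylor values, $6^d$ bounds the number of multi-indices $\alpha$ with $|\alpha|\le\lceil\beta\rceil$, the $(1+d^2+\beta^2)$ accumulates from multiplicative error composition through a multiplication tree of depth $\log_2\max(d,\beta)$, and the final $N$ reflects summation over grid cells with bounded bump overlap. Keeping these constants sharp, rather than merely tracking $O(\cdot)$ behavior, is where the proof becomes bookkeeping-heavy but conceptually routine, with the only genuinely delicate piece being the inductive sup-norm control for $M_m$ composed recursively inside the monomial tree.
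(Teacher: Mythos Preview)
The paper does not prove this statement: Theorem~\ref{thm:approx} is quoted verbatim from \citet{schmidt2017nonparametric} as an external black box and is invoked without argument inside the proof of Theorem~\ref{thm:main-nn-main-l2}. There is therefore no ``paper's own proof'' to compare your proposal against.

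For what it is worth, your sketch is faithful to how Schmidt-Hieber actually establishes the result in the cited reference: local Taylor expansion on a grid of $\sim N$ cells gives the $K3^\beta N^{-\beta/d}$ bias term, Yarotsky's iterated-tent construction yields an $O(m)$-layer ReLU multiplier with error $2^{-m}$, a binary multiplication tree of depth $\lceil\log_2\max(d,\beta)\rceil$ builds the monomials (hence the depth formula), and parallelizing the $N$ local pieces gives the stated width and sparsity. Your attribution of the individual constants in the first error term is also broadly correct. If anything, the one place your sketch is slightly loose is the localization step: Schmidt-Hieber does not use explicit trapezoidal bump functions multiplied against the local polynomial, but rather a partition-of-unity construction based on products of $(1/M-|x_j-\ell_j/M|)_+$ hats, combined into a single weighted sum so that the network error does not pick up an extra factor of $N$ beyond the one already appearing in the bound. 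The distinction matters only for the precise constant $141$ in the sparsity bound, not for the structure of the argument.
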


\begin{lemma}[Lemma~5, Remark~1, \citealp{schmidt2017nonparametric}]\label{lem:covering}
    For any $\delta>0$,
    \begin{align*}
    \log\covernum(\mathcal{F}(\ell,w,s,\infty), \eps, \left\|\cdot\right\|_{\infty})
    &\le (s+1)\log(2^{2\ell+5}\eps^{-1}(\ell+1)w_{0}^{2}w_{\ell+1}^{2}s^{2\ell}).
    \end{align*}
\end{lemma}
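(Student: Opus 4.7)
The plan is to follow the standard Schmidt-Hieber style argument: build an $\epsilon$-cover of $\mathcal{F}(\ell,w,s,\infty)$ by discretizing the parameters on a fine grid, and use a Lipschitz-type perturbation lemma to control how much the output function can change under such a discretization. The sparsity constraint (only $s$ nonzero weights/biases in total) is what prevents the covering number from blowing up combinatorially with the nominal parameter count $\sum_i w_i w_{i+1} + w_i$.

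First I would prove a perturbation lemma of the following shape. Given two networks $f_\theta, f_{\theta'} \in \mathcal{F}(\ell,w,s,\infty)$ whose parameter vectors (weights and biases) differ entrywise by at most $\delta$, I would argue inductively over the layers that $\|f_\theta - f_{\theta'}\|_\infty \le \delta \cdot \prod_{i=0}^{\ell}(\|W_i\|_\infty + 1)\cdot (\ell + 1)$ (up to constants and using that $|x|\le 1$ on $[0,1]^{w_0}$ and the weights are bounded by $1$). Using that each row of $W_i$ has at most $s$ nonzero entries and each entry has absolute value $\le 1$, one gets a bound of the form $\|W_i\|_\infty \le s$, so the overall propagation factor is at most $(s+1)^\ell(\ell+1) \le 2^\ell (\ell+1) s^\ell$ or similar. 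The ReLU, being 1-Lipschitz and acting coordinatewise, plays only a benign role here. Collecting everything, a perturbation of size $\delta$ in parameter space produces an $L^\infty$ perturbation of order $\delta\cdot(\ell+1) s^\ell$ (times small absolute constants).

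Next I would construct the cover. Let $T = \sum_{i=0}^{\ell}(w_i w_{i+1} + w_{i+1})$ denote the total number of possible parameters (nonzero or not), so $T \le (\ell+1)w_0^2 w_{\ell+1}^2 \cdot $ something polynomial after crude bounding via widths. Fix a grid of resolution $\delta$ on $[-1,1]$, yielding $\lceil 2/\delta\rceil$ allowed scalar values. A discretized network is specified by (i) a choice of which at most $s$ of the $T$ positions are nonzero ($\binom{T}{s}\le T^s$ choices) and (ii) a grid value for each of those positions ($(2/\delta)^s$ choices), plus one extra bit for the signs / structure. The total number of discretized parameter settings is therefore at most $T^s(2/\delta)^s$, and by the perturbation lemma, projecting any $f\in \mathcal{F}(\ell,w,s,\infty)$ to its nearest grid point produces an $L^\infty$ approximation of order $\delta(\ell+1)s^\ell$.

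Finally I would set $\delta = \epsilon / \bigl(2^{2\ell+5}(\ell+1)s^\ell\bigr)$, so that the approximation is $\le \epsilon$; substituting back and using $T \le (\ell+1) w_0^2 w_{\ell+1}^2 s^\ell$ (after absorbing widths in a loose but correct way via the sparsity constraint, which forces $w_i$'s to appear only multiplicatively in a controlled way) yields
$$
\log\covernum \le (s+1)\log\!\Bigl(2^{2\ell+5}\epsilon^{-1}(\ell+1)w_0^2 w_{\ell+1}^2 s^{2\ell}\Bigr),
$$
matching the stated bound. The main obstacle I expect is bookkeeping: tracking the absolute constants and the exponents on $s$ and $\ell$ carefully so that the propagation factor $(\ell+1)s^\ell$ from the perturbation lemma and the counting factor $T^s \le \bigl((\ell+1)w_0^2 w_{\ell+1}^2 s^\ell\bigr)^s$ combine to give exactly $2^{2\ell+5}(\ell+1)w_0^2w_{\ell+1}^2 s^{2\ell}$ inside the logarithm, rather than an extra polylog or polynomial slack. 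The sparsity constraint being on the \emph{total} count $s$ rather than per-layer is what lets the widths $w_i$ enter only through $w_0$ and $w_{\ell+1}$, and this has to be used crucially when bounding $T$.
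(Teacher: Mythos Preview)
The paper does not give its own proof of this lemma: it is quoted verbatim as Lemma~5 and Remark~1 of \citet{schmidt2017nonparametric} and invoked as a black box, so there is nothing in the paper to compare against beyond the citation. Your sketch is exactly the Schmidt--Hieber argument (parameter-space Lipschitz bound via layerwise induction, then grid discretization with the sparsity constraint controlling both the propagation factor and the number of active parameter slots), so you are reproducing the cited proof rather than offering an alternative; the only caveat is that your bound $T \le (\ell+1)w_0^2 w_{\ell+1}^2 s^\ell$ is not literally how the widths collapse---the actual mechanism is that one may assume each hidden width $w_i \le s$ (any extra neurons have all-zero incoming weights and can be pruned), which is what makes only $w_0$ and $w_{\ell+1}$ survive and produces the $s^{2\ell}$ factor.
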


\subsection{Squared-\texorpdfstring{$L^2$} Version}
\begin{proof}[Proof of Theorem \ref{thm:main-nn-main-l2}]

% \note{do we need to have some restrictions on $\mathcal{F}_i$ in the theorem statement?}

% \note{need to revisit}
%     \begin{align*}
%         N &= \lfloor n^{r/(r+2)}\rfloor \quad \text{\note{RV: Should maybe be $n^{r/(4+r)}$}}\\
%         m &= \lfloor \frac{r+1}{r+2}\log(n)\rfloor\\
%         % w_\Vp &= \left(|V'|,6(|V'|+1)N,6(|V'|+1)N,\ldots, 6(|V'|+1)N,1 \right)\\
%         % L_{|V'|} &= 8+(m+5)(1+\log_{2}(\lceil|V'|\rceil)),\\
%         % C &= \max(\log\log(n),1)\\
%         % s &=  \lfloor141(r+2)^{r+3}N(m+6) \rfloor\\
%         \epsilon & = n^{-1/(r+2)}
%     \end{align*}

Recall that, given a graph $\gG$, $\maxcliques(\gG)$ is the set of maximal cliques in $\gG$.
For any $V'\in\maxcliques(\gG)$, let $\mathcal{F}_{V'} = \mathcal{F}\left(\ell_\Vp,w_\Vp,s,C\right)$ where $\ell_{\Vp},w_{\Vp},s,C$ will be determined later.
% \note{need to revisit. perhaps define $\mathcal{F}$ in a labeled equation then we can refer to it.}.
Also, let 
\begin{align*}
    \mathcal{F}^* = \bigg\{ \prod_{V' \in \maxcliques(\gG)} q_\Vp \circ e_\Vp \Big|\, q_\Vp \in \mathcal{F}_\Vp   \bigg\}. \numberthis\label{eq:nn_q_def}
\end{align*}
We shall show that $\mathcal{F}^*$ is the neural network architecture satisfying the desired guarantees in Theorem \ref{thm:main-nn-main-l2}.

For any set of $n$ i.i.d. samples $X_1,\dots,X_n$ drawn from $p$, let
\begin{align*}
    p^*_n
    & =
    \arg\min_{f\in \mathcal{F}^*}\left\|p-f\right\|_2^2
    \quad \text{and}\quad 
    \hp_n 
    = 
    \arg\min_{f \in \mathcal{F}^*} \bigg(\left\|f\right\|_2^2 -\frac{2}{n}\sum_{i=1}^n f(X_i)\bigg). \numberthis\label{eq:nn_f_def}
\end{align*}
Now, we would like to bound the term $\lVert \hp_n - p \rVert_2^2$.
We first express it as
\begin{align*}
    \lVert \hp_n - p \rVert_2^2
    & =
    \big(\lVert \hp_n - p \rVert_2^2 - \lVert p^*_n - p \rVert_2^2\big) + \lVert p^*_n - p \rVert_2^2.
\end{align*}
For the term $\lVert \hp_n - p \rVert_2^2 - \lVert p^*_n - p \rVert_2^2$, we further express it as
\begin{align*}
    \lVert \hp_n - p \rVert_2^2 - \lVert p^*_n - p \rVert_2^2 
    & =
    \underbrace{\lVert \hp_n - p \rVert_2^2 - \bigg(\left\|p\right\|_2^2+\|\hp_n\|_2^2 -\frac{2}{n}\sum_{i=1}^n \hp_n(X_i) \bigg)}_{:=A} \\
    & \qquad\qquad +
    \underbrace{\bigg(\left\|p\right\|_2^2+\|\hp_n\|_2^2 -\frac{2}{n}\sum_{i=1}^n \hp_n(X_i) \bigg)- \lVert p^*_n - p \rVert_2^2}_{:=B} \numberthis\label{eq:nn_ab_error}
\end{align*}
Before we bound $A$ and $B$, we first provide a useful inequality.
For any $p'\in \mathcal{F}^*$, we have
\begin{align*}
    & \lVert p' - p \rVert_2^2 - \bigg(\left\|p\right\|_2^2+\|p'\|_2^2 -\frac{2}{n}\sum_{i=1}^n p'(X_i) \bigg) \\
    & =
    \frac{2}{n}\sum_{i=1}^n p'(X_i) -2\left<p',p\right> \\
    & \leq
    2 \max_{f\in \mathcal{F}^*}\bigg\lvert \E_p(f) - \frac{1}{n}\sum_{i=1}^nf(X_i) \bigg\rvert \quad \text{since $\left< p',p \right> = \E_p(p')$ and $p'\in  \mathcal{F}^*$.} \numberthis\label{eq:nn_useful_ineq}
\end{align*}
For the term $A$, we immediately have
\begin{align*}
    A
    & =
    \lVert \hp_n - p \rVert_2^2 - \bigg(\left\|p\right\|_2^2+\|\hp_n\|_2^2 -\frac{2}{n}\sum_{i=1}^n \hp_n(X_i) \bigg) \\
    & \leq
    2 \max_{f\in \mathcal{F}^*}\bigg\lvert \E_p(f) - \frac{1}{n}\sum_{i=1}^nf(X_i) \bigg\rvert
    \quad \text{since $\hp_n\in \mathcal{F}^*$.}
\end{align*}
For the term $B$, we have
\begin{align*}
    B
    & =
    \bigg(\left\|p\right\|_2^2+\|\hp_n\|_2^2 -\frac{2}{n}\sum_{i=1}^n \hp_n(X_i) \bigg)- \lVert p^*_n - p \rVert_2^2 \\
    & \leq
    \bigg(\left\|p\right\|_2^2+\|p^*_n\|_2^2 -\frac{2}{n}\sum_{i=1}^n p^*_n(X_i) \bigg)- \lVert p^*_n - p \rVert_2^2 \quad \text{by the optimality of $\hp_n$}\\
    & \leq
    2 \max_{f\in \mathcal{F}^*}\bigg\lvert \E_p(f) - \frac{1}{n}\sum_{i=1}^nf(X_i) \bigg\rvert
    \quad \text{since $\hp_n\in \mathcal{F}^*$.}
\end{align*}
By plugging them into \eqref{eq:nn_ab_error}, we have \begin{align*}
    \lVert \hp_n - p \rVert_2^2
    & \leq
    4\max_{f\in \mathcal{F}^*}\bigg\lvert \E_p(f) - \frac{1}{n}\sum_{i=1}^nf(X_i) \bigg\rvert + \lVert p^*_n - p \rVert_2^2. \numberthis\label{eq:nn_main_ineq}
\end{align*}

We first analyze the term $\lVert p^*_n - p \rVert_2^2$ in \eqref{eq:nn_main_ineq}.
From Proposition \ref{prop:HC-reg}, we have that
$$
    p = \prod_{\Vp \in \maxcliques(\gG)} \psi_\Vp \circ e_\Vp
$$ 
and there exists some $C_\psi>0$ so that $\psi_\Vp \le C_\psi$ for all $\Vp$ and that, for some $L_\psi$, all $\psi_\Vp$ are $L_\psi$-Lipschitz continuous. 
We pick a sufficiently large $C$ that is greater than $C_\psi$.
Also, by the definition of $\mathcal{F}^*$ in \eqref{eq:nn_q_def}, we can pick a $q_{\Vp}\in \mathcal{F}_{\Vp}$ for each $\Vp\in \maxcliques(\gG)$ and form an $f \in \mathcal{F}^*$ such that
\begin{align*}
    f
    & =
    \prod_{V'\in \maxcliques(\gG)} q_{V'}\circ e_{V'}.
\end{align*}
We will specify each $q_{\Vp}$ later.
Then, we have
\begin{align*}
    \left\|f - p\right\|_\infty
    & =
    \bigg\| \prod_{V'\in \maxcliques(\gG)} q_{V'}\circ e_{V'}  - \prod_{\Vp \in \maxcliques(\gG)} \psi_\Vp \circ e_\Vp \bigg\|_\infty \\
    & \leq
    C^{|\maxcliques(\gG)|-1} \sum_{V'\in \maxcliques(\gG)} \bigg\| q_{V'}\circ e_{V'}  - \psi_\Vp \circ e_\Vp \bigg\|_{\infty} \quad \text{by Lemma \ref{lem:infty-prod-bound}} \numberthis\label{eq:nn_sum_nn_approx}
\end{align*}

Recall that $\mathcal{F}_{\Vp} = \mathcal{F}(\ell_\Vp,w_\Vp,s,C)$.
For any sufficiently large $m,N\in \N$ which we will determine later, we pick
\begin{align*}
    \ell_\Vp
    & =
    8+(m+5)(1+\lceil\log_{2}|V'|\rceil), \\
    w_\Vp
    & =
    \left(|V'|,6(|V'|+1)N,6(|V'|+1)N,\ldots, 6(|V'|+1)N,1 \right),\\
    s
    & =
    \lfloor141(r+2)^{r+3}N(m+6) \rfloor
\end{align*}
and recall that we have picked $C$ to be a constant larger than $C_{\psi}$ before.
It is easy to check that the hypotheses of Theorem \ref{thm:approx} are satisfied with $K=L_{\psi}$, $\beta=1$ and $d=|\Vp|$ and hence, by Theorem \ref{thm:approx}, if we pick 
\begin{align*}
    q_{\Vp}
    & =
    \arg\min_{q'_{\Vp}\in \mathcal{F}_{\Vp}}\bigg\| q'_{V'}\circ e_{V'}  - \psi_\Vp \circ e_\Vp \bigg\|_{\infty}
\end{align*}
then we have
\begin{align*}
    \bigg\| q_{V'}\circ e_{V'}  - \psi_\Vp \circ e_\Vp \bigg\|_{\infty}
    & \leq
    (2L_\psi+1)(1+|V'|^{2}+1)6^{|V'|}N2^{-m}+L_\psi3N^{-1/|V'|} \\
    & =
    O(N2^{-m}+N^{-1/r}) 
    % \note{need to revisit}
    \numberthis\label{eq:nn_each_nn_approx}
\end{align*}
By plugging \eqref{eq:nn_each_nn_approx} into \eqref{eq:nn_sum_nn_approx}, we have
\begin{align*}
    \left\|f - p\right\|_\infty
    & \leq
    C^{|\maxcliques(\gG)|-1} \sum_{V'\in \maxcliques(\gG)} O(N2^{-m}+N^{-1/r})
    =
    O(N2^{-m}+N^{-1/r})
    % \tO(n^{-\frac{1}{r+2}}) \quad \text{if we plug in $m,N$.}
\end{align*}
Recall that the domain is $[0,1]^d$ and hence we have
\begin{align*}
    \left\|f - p\right\|^2_2 
    & =
    \int_{[0,1]^d} |f(x) - p(x)|^2 dx
    \le
    \left\|f - p\right\|_\infty^2.
\end{align*}
Now, by the optimality of $p^*_n$ in \eqref{eq:nn_f_def}, we have
\begin{align*}
    \left\|p^*_n - p\right\|_2^2
    & \leq
    \left\|f - p\right\|_2^2 
    \leq
    \left\|f - p\right\|_\infty^2
    =
    O(N^22^{-2m}+N^{-2/r}).
    % \tO(n^{-\frac{2}{r+2}}).
    \numberthis\label{eq:nn_f*_error}
\end{align*}

Now, we take care of the term $\max_{f\in \mathcal{F}^*}\bigg| \E_p(f) - \frac{1}{n}\sum_{i=1}^nf(X_i) \bigg|$ in \eqref{eq:nn_main_ineq}.
To bound this term for all $f\in \mathcal{F}^*$, we first construct an $\eps$-cover of $\mathcal{F}^*$ in $L^\infty$.
Then, we use the Hoeffding's inequality to bound this term for each $f$ in the $\eps$-cover and use the union bound to control the total failure probability.
To construct an $\eps$-cover, we define the following notations.
For any $\Vp\in \maxcliques(\gG)$, let $\tcF_\Vp$ be a minimal $\frac{\epsilon}{C^{|\maxcliques(\gG)|-1}}$-cover of $\mathcal{F}_{\Vp}$ in $L^\infty$ where $\eps$ is a sufficiently small value and we will determine it later.
% \note{need to revisit to define $\eps$}
Also, let
\begin{align*}
    \tcF^* = \left\{ \prod_{V' \in \maxcliques(\gG)} \tq_\Vp \circ e_\Vp \mid \tq_\Vp \in \tcF_\Vp   \right\}.\numberthis\label{eq:nn_tq_def}
\end{align*}
We will show that $\tcF^*$ is an $\eps$-cover of $\mathcal{F}$ in $L^\infty$.
For any $f\in \mathcal{F}$, it can be expressed as
\begin{align*}
    f
    & =
    \prod_{V' \in \maxcliques(\gG)} q_{\Vp}\circ e_{\Vp}
    \quad \text{for some $q_{\Vp}\in Q_{\Vp}$}
\end{align*}
Since $\tcF_{\Vp}$ is an $\frac{\eps}{C^{|\maxcliques(\gG)| - 1}}$-cover of $\mathcal{F}_{\Vp}$ in $L^{\infty}$ for all $\Vp\in \maxcliques(\gG)$, there exists a $\tq_{\Vp} \in \tcF_{\Vp}$ such that
\begin{align*}
    \| q_{\Vp} - \tq_{\Vp} \|_\infty
    \leq
    \frac{\eps}{C^{|\maxcliques(\gG)| - 1}}.
\end{align*}
By the definition of $\tcF$, we set $\tf\in \tcF$ to be
\begin{align*}
    \tf
    & =
    \prod_{\Vp\in \maxcliques(\gG)} \tq_{\Vp}\circ e_{\Vp}
\end{align*}
By Lemma \ref{lem:infty-prod-bound}, we check that
\begin{align*}
    \left\| f- \tf \right\|_\infty
    & =
    \bigg\| \prod_{V' \in \maxcliques(\gG)}q_\Vp- \prod_{V' \in \maxcliques(\gG)}\tq_\Vp \bigg\|_\infty
    = 
    C^{|\maxcliques(\gG)|-1} \cdot \sum_{V' \in \maxcliques(\gG)}\left\| q_\Vp- \tq_\Vp \right\|_\infty \\
    & \leq
    C^{|\maxcliques(\gG)|-1} \cdot \frac{\epsilon}{C^{|\maxcliques(\gG)|-1}}\\
    & =
    \epsilon.
\end{align*}

Now, we return to the term $\max_{f\in \mathcal{F}}\bigg| \E_p(f) - \frac{1}{n}\sum_{i=1}^nf(X_i) \bigg|$.
Since $\tcF^*$ is an $\eps$-cover of $\mathcal{F}^*$ in $L^\infty$, for any $f\in \mathcal{F}^*$, there exists a $\tf\in \tcF^*$ such that $\| f- \tf \|_{\infty} \leq \eps$ and we have
\begin{align*}
    & \bigg| \E_p(f) - \frac{1}{n}\sum_{i=1}^nf(X_i) \bigg|  \\
    & \leq
    \bigg| \E_p(f) - \E_p(\tf) \bigg| + \bigg| \E_p(\tf) - \frac{1}{n}\sum_{i=1}^n\tf(X_i) \bigg| + \bigg| \frac{1}{n}\sum_{i=1}^n\tf(X_i) - \frac{1}{n}\sum_{i=1}^nf(X_i) \bigg|  \\
    & \leq
    2\eps + \bigg| \E_p(\tf) - \frac{1}{n}\sum_{i=1}^n\tf(X_i) \bigg|
\end{align*}
which implies
\begin{align*}
    \max_{f\in \mathcal{F}^*}\bigg| \E_p(f) - \frac{1}{n}\sum_{i=1}^nf(X_i) \bigg|  
    & \leq
    2\eps + \max_{\tf\in \tcF^*}\bigg| \E_p(\tf) - \frac{1}{n}\sum_{i=1}^n\tf(X_i) \bigg|. \numberthis\label{eq:hoeffding_bound_forall}
\end{align*}
By Hoeffding's inequality and the union bound, for any $t>0$, the probability of 
\begin{align*}
     \max_{\tf\in \tcF^*}\bigg| \E_p(\tf) - \frac{1}{n}\sum_{i=1}^n\tf(X_i) \bigg| > t
\end{align*}
is bounded by $|\tcF^*|\cdot e^{-\Omega(nt^2)}$.

To bound the term $|\tcF^*|$, by the definition of $\tcF^*$ in \eqref{eq:nn_tq_def}, we first have
\begin{align*}
    \log|\tcF^*|
     = 
    \sum_{\Vp\in \maxcliques(\gG)} \log |\tcF_{\Vp}|.
\end{align*}
For each term $\log |\tcF_{\Vp}|$, by Lemma \ref{lem:covering}, we have
\begin{align*}
    \log |\tcF_{\Vp}|
    & \leq
    (s+1)\log(2^{2L_\Vp+5}\eps^{-1}(L_\Vp+1)|\Vp|^{2}s^{2L_\Vp}).
    % =
    % O(s\log \frac{s}{\eps})
    % =
    % \tO(Nm).
\end{align*}
We now bound the architecture parameters.
Recall that
\begin{align*}
    \ell_{V'} 
    & = 
    8+(m+5)(1+\lceil\log_{2}|V'|\rceil)
    \text{ for any $\Vp\in \maxcliques(\gG)$ and } \\
    s 
    & = 
    \lfloor 141(r+2)^{r+3}N(m+6) \rfloor.
\end{align*}
% where $N=\lfloor n^{r/(r+2)}\rfloor$ and $m = \lfloor \frac{r+1}{r+2}\log(n)\rfloor$.\note{need to revisit}
Namely, we have
\begin{align*}
    \ell_{\Vp}
    =
    O(m)
    \quad\text{and}\quad
    s
    =
    O(Nm)
    \quad\text{which implies}\quad 
    \log |\tcF_{\Vp}|
    \leq
    O(Nm^2\log \frac{Nm}{\eps}).
\end{align*}

That means we have
\begin{align*}
    \log|\tcF^*|
    & \leq
    \sum_{\Vp\in \maxcliques(\gG)} O(Nm^2\log \frac{Nm}{\eps})
    =
    O(Nm^2\log \frac{Nm}{\eps}).
\end{align*}
By setting $t=O(\sqrt{\frac{Nm^2}{n}\log \frac{Nnm}{\eps}})$, we have 
\begin{align*}
    \max_{\tf\in \tcF^*}\bigg| \E_p(\tf) - \frac{1}{n}\sum_{i=1}^n\tf(X_i) \bigg| < O(\sqrt{\frac{Nm^2}{n}\log \frac{Nnm}{\eps}})
\end{align*}
with at least probability $1 - |\tcF^*|\cdot e^{-\Omega(nt^2)} \to 1$ as $n\to \infty$.
Plugging it into \eqref{eq:hoeffding_bound_forall}, we have
\begin{align*}
    \max_{f\in \mathcal{F}^*}\bigg| \E_p(f) - \frac{1}{n}\sum_{i=1}^nf(X_i) \bigg|  
    & \leq
    2\eps + O(\sqrt{\frac{Nm^2}{n}\log \frac{Nnm}{\eps}}) .
    \numberthis\label{eq:hoeffding_bound_forall_final}
\end{align*}
Furthermore, by plugging \eqref{eq:nn_f*_error} and \eqref{eq:hoeffding_bound_forall_final} into \eqref{eq:nn_main_ineq}, we have
\begin{align*}
    \lVert \hp_n - p \rVert_2^2
    & \leq
    4\max_{f\in Q}\bigg\lvert \E_p(f) - \frac{1}{n}\sum_{i=1}^nf(X_i) \bigg\rvert + \lVert p^*_n - p \rVert_2^2 \\
    & <
    O(\eps + \sqrt{\frac{Nm^2}{n}\log \frac{Nnm}{\eps}} + N^22^{-2m}+N^{-2/r}).
\end{align*}
By picking
\begin{align*}
    \eps
    & =
    n^{-\frac{2}{r+4}}, \quad 
    N
    =
    n^{\frac{r}{r+4}}
    \quad\text{and}\quad
    m
    =
    \frac{r+1}{r+4}\log n,
\end{align*}
we have
\begin{align*}
    \lVert \hp_n - p \rVert_2^2
    & \leq
    \tO(n^{-\frac{2}{r+4}}).
\end{align*}

\end{proof}

\section{Proof of Theorem \ref{thm:maintext-optimal}}
\fbox{
\parbox{0.97\textwidth}{
\begin{thm}\label{thm:main}
    Let $\gG=(V,E)$ be a finite graph. There exists an estimator $V_n$ such that for any positive Lipschitz continuous density $p$ satisfying the Markov property with respect to a graph $\gG$, we have that
    \begin{equation*}
        \left\Vert p - V_n \right\Vert_1 \in \tO_p \left(n^{-1/(2+r)}\right),
    \end{equation*}
    where $V_n$ is a function of $n$ iid samples from $p$, and $r$ is the size of the largest clique in $\gG$.
\end{thm}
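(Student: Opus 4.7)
The plan is to combine the Hammersley--Clifford factorization with a histogram approximation on each clique factor, then use a Scheff\'e/Yatracos tournament over a finite discretization to select the estimator. This trades the ambient dimension $d$ for the clique dimension $r$, mirroring the neural network argument but with a piecewise-constant function class instead of a ReLU network class, which is what allows the improved exponent.

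First I would invoke Proposition~\ref{prop:HC-reg} to write $p = \prod_{V' \in \maxcliques(\gG)} \psi_{V'} \circ e_{V'}$ with each $\psi_{V'}$ Lipschitz on $[0,1]^{|V'|}$ and bounded between two positive constants $c$ and $C$. For each $V'$, I would approximate $\psi_{V'}$ by a piecewise constant function $\tilde\psi_{V'} \in \sQ_{d,b,V',C}$ (from \eqref{eq:sq_def}) that equals $\psi_{V'}$ evaluated at the center of each of the $b^{|V'|}$ cubes $\Lambda_{d,b,A,V'}$. Lipschitz continuity gives $\lVert \tilde\psi_{V'} - \psi_{V'}\rVert_\infty = O(1/b)$, and Lemma~\ref{lem:infty-prod-bound} then yields $\lVert \prod_{V'} \tilde\psi_{V'}\circ e_{V'} - p \rVert_\infty = O(1/b)$; since the domain is the unit cube, the same $O(1/b)$ bound holds in $L^1$.

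Next I would construct a finite candidate class. Discretize the admissible bin values in $[0,C]$ to a grid of spacing $\eta$, producing finite subclasses $\tsQ_{V'} \subset \sQ_{d,b,V',C}$, and let
\begin{equation*}
    \tsQ^* = \Big\{ \prod_{V' \in \maxcliques(\gG)} q_{V'} \circ e_{V'} \;\Big|\; q_{V'} \in \tsQ_{V'} \Big\}.
\end{equation*}
A direct count gives $\log |\tsQ^*| = O\bigl(|\maxcliques(\gG)| \cdot b^r \log(C/\eta)\bigr)$, where $r$ is the largest clique size. Each element is a nonnegative function on $[0,1]^d$; normalize each by its integral to obtain a family of densities $\tsQ^*_{\mathrm{norm}}$ of the same cardinality, and note that normalization inflates the $L^1$-approximation error to $p$ by at most a constant multiple of $O(1/b)$ (because the unnormalized approximants already integrate to $1 + O(1/b)$). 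Taking $V_n$ to be the output of the Scheff\'e/Yatracos tournament over $\tsQ^*_{\mathrm{norm}}$ with the $n$ i.i.d.\ samples yields the oracle inequality
\begin{equation*}
    \lVert V_n - p\rVert_1 \le 3 \min_{f \in \tsQ^*_{\mathrm{norm}}} \lVert f - p\rVert_1 + O\Big(\sqrt{\tfrac{\log |\tsQ^*|}{n}}\Big)
\end{equation*}
with high probability. Setting $\eta$ polynomially small in $n$ (so the $\log(C/\eta)$ factor is only logarithmic in $n$) and balancing $1/b$ against $\sqrt{b^r \log n / n}$ gives $b \asymp (n/\log n)^{1/(r+2)}$ and the claimed $\tO_p(n^{-1/(r+2)})$ rate.

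The main obstacle I expect is handling the normalization step cleanly: products of histograms on overlapping index sets are histograms on the refined joint partition, but they are not probability densities, and the Scheff\'e tournament oracle inequality is traditionally stated for density-valued candidates. I would either renormalize and carefully account for the $O(1/b)$ perturbation the normalization induces, or work with the minimum-distance estimator in its Yatracos-class form, where the guarantee is expressed directly through the supremum of empirical-deviation processes over the Yatracos sets and the candidates are only required to be nonnegative and integrable. A secondary, more bookkeeping-level issue is combining the $L^\infty$ approximation bound from Lemma~\ref{lem:infty-prod-bound} with the fact that each $\sQ_{d,b,V',C}$ only indexes coordinates in $V'$; but since different cliques index different coordinate sets, the product is well-defined on $[0,1]^d$ and the bound carries through unchanged.
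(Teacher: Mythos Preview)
Your proposal is correct and follows essentially the same route as the paper: Hammersley--Clifford factorization via Proposition~\ref{prop:HC-reg}, per-clique histogram approximation with $b$ bins per coordinate, discretization of the bin weights to a finite candidate set of log-cardinality $O(b^r \log(1/\eta))$, normalization, and selection by the Scheff\'e/Yatracos tournament (the paper's Theorem~\ref{thm:estimator-algorithm}), with the same parameter balance $b \asymp n^{1/(r+2)}$. The only cosmetic differences are that the paper combines the clique-factor errors in $L^1$ via Lemma~\ref{lem:holder-triangle} rather than in $L^\infty$ via Lemma~\ref{lem:infty-prod-bound}, and it sets the discretization scale to $\eta = 1/b$ rather than leaving it as a free polynomial-in-$n$ parameter; your anticipated ``main obstacle'' with normalization is exactly what the paper addresses, by removing the zero function and bounding $\lvert \lVert \tq^* \rVert_1 - 1\rvert \le \lVert p - \tq^*\rVert_1 = \tO(n^{-1/(r+2)})$.
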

}
}

\begin{lemma} \label{lem:holder-triangle}
    Let $(\Omega, \Sigma, \mu)$ be a measure space, and let \( f_1, \dots, f_m \) and \( g_1, \dots, g_m \) be measurable and absolutely integrable functions on \( \Omega \). Further suppose there exists a constant \( C \geq 0 \) such that, for all \( i \in [m] \),
    \[
    \left\Vert f_i \right\Vert_\infty \leq C \quad \text{and} \quad \left\Vert g_i \right\Vert_\infty \leq C.
    \]
    Then the following inequality holds:
    \[
    \left\Vert \prod_{i=1}^m f_i - \prod_{i=1}^m g_i \right\Vert_1 \leq 
    C^{m-1} \sum_{i=1}^m \left\Vert f_i - g_i \right\Vert_1.
    \]
\end{lemma}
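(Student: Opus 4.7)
The plan is to mirror the inductive proof of Lemma~\ref{lem:infty-prod-bound}, replacing the pointwise bound $\|fg\|_\infty \le \|f\|_\infty \|g\|_\infty$ with the mixed Hölder-type estimate $\|fg\|_1 \le \|f\|_\infty \|g\|_1$ (which is immediate from $|f(x)g(x)| \le \|f\|_\infty |g(x)|$ and monotone integration, and is presumably why the lemma is named \texttt{holder-triangle}). Induction on $m$ is the natural device since the product structure is built up one factor at a time.

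First I would dispatch the base case $m=1$, which is just $\|f_1 - g_1\|_1 \le C^0 \|f_1-g_1\|_1$. For the inductive step, assuming the claim for $m$, I would introduce the telescoping intermediate $\prod_{i=1}^m g_i \cdot f_{m+1}$ and apply the triangle inequality in $L^1$:
\begin{align*}
\Bigl\|\prod_{i=1}^{m+1} f_i - \prod_{i=1}^{m+1} g_i\Bigr\|_1
&\le \Bigl\|\Bigl(\prod_{i=1}^m f_i - \prod_{i=1}^m g_i\Bigr)\cdot f_{m+1}\Bigr\|_1
+ \Bigl\|\prod_{i=1}^m g_i \cdot (f_{m+1}-g_{m+1})\Bigr\|_1.
\end{align*}
Then on each term I would pull out an $L^\infty$ factor: the first term is bounded by $\|f_{m+1}\|_\infty \le C$ times $\bigl\|\prod_{i=1}^m f_i - \prod_{i=1}^m g_i\bigr\|_1$, and the second by $\bigl\|\prod_{i=1}^m g_i\bigr\|_\infty \le C^m$ times $\|f_{m+1}-g_{m+1}\|_1$.

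Applying the inductive hypothesis to the first piece gives
\begin{align*}
\Bigl\|\prod_{i=1}^{m+1} f_i - \prod_{i=1}^{m+1} g_i\Bigr\|_1
&\le C \cdot C^{m-1}\sum_{i=1}^m \|f_i-g_i\|_1 + C^m \|f_{m+1}-g_{m+1}\|_1
= C^m \sum_{i=1}^{m+1} \|f_i-g_i\|_1,
\end{align*}
which closes the induction.

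There is no real obstacle; the only point that could cause a slip is making sure the $L^\infty$ factor applied to the difference is the correct one, so that the recursion delivers exactly $C^{m-1}$ and not $C^m$ at stage $m$. As long as one bounds the difference-times-$f_{m+1}$ term by $\|f_{m+1}\|_\infty$ (using the already-established inductive bound on the difference) and the $\prod g_i$-times-difference term by $\prod \|g_i\|_\infty \le C^m$, the constants line up cleanly.
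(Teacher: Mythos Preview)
Your proposal is correct and is essentially identical to the paper's own proof: same induction on $m$, same telescoping intermediate $\prod_{i=1}^m g_i \cdot f_{m+1}$, same application of the $1$-$\infty$ H\"older bound to each piece, and the same bookkeeping of the $C^{m-1}$ constant.
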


The proof of this lemma will rely heavily on the $1$-$\infty$ form of Hölder's Inequality. The following is such a version of Hölder's Inequality, as stated in \cite[Theorem 6.8a]{folland99}.

\begin{thm}[Hölder's Inequality]\label{thm:holder}
    If \( f \) and \( g \) are measurable functions on a measure space \( (\Omega, \Sigma, \mu) \), then
    \[
    \left\Vert f\cdot g \right\Vert_1 \leq \left\Vert f \right\Vert_1 \left\Vert g \right\Vert_\infty.
    \]
\end{thm}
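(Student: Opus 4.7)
The plan is to mimic the proof of Lemma \ref{lem:infty-prod-bound} but with the single key modification that wherever that argument used $\lVert fg\rVert_\infty \le \lVert f\rVert_\infty \lVert g\rVert_\infty$, we instead invoke the $1$-$\infty$ Hölder inequality (Theorem \ref{thm:holder}) to bound $\lVert fg\rVert_1 \le \lVert f\rVert_1\lVert g\rVert_\infty$. The argument goes by induction on $m$, with the $m=1$ case being trivial.

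For the inductive step, assuming the bound for $m$, I would apply the standard telescoping trick
\begin{equation*}
\prod_{i=1}^{m+1} f_i - \prod_{i=1}^{m+1} g_i
=
\Bigl(\prod_{i=1}^{m} f_i - \prod_{i=1}^{m} g_i\Bigr) f_{m+1}
+
\Bigl(\prod_{i=1}^{m} g_i\Bigr)\bigl(f_{m+1} - g_{m+1}\bigr),
\end{equation*}
take $L^1$-norms, and apply the triangle inequality. On the first summand I would pull out $f_{m+1}$ in $L^\infty$ via Hölder, leaving an $L^1$-norm of the $m$-fold difference to which the inductive hypothesis applies, yielding $C^{m-1}\sum_{i=1}^m\lVert f_i-g_i\rVert_1 \cdot C$. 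On the second summand I would pull out $\prod_{i=1}^m g_i$ in $L^\infty$ via Hölder; since $\lVert\prod_{i=1}^m g_i\rVert_\infty \le \prod_{i=1}^m \lVert g_i\rVert_\infty \le C^m$, this summand is bounded by $C^m \lVert f_{m+1}-g_{m+1}\rVert_1$. Combining the two gives exactly $C^{m}\sum_{i=1}^{m+1}\lVert f_i - g_i\rVert_1$.

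There is no real obstacle: the only thing to be slightly careful about is which side of the Hölder split receives the $L^1$-norm at each stage, so that the inductive hypothesis (which yields an $L^1$ bound on the $m$-fold difference) is applied to the correct factor. As long as one consistently uses the product of the $g_i$'s as the $L^\infty$ factor in the second summand and $f_{m+1}$ as the $L^\infty$ factor in the first summand, all the constants line up and the statement follows with no additional hypotheses beyond the uniform bound $C$.
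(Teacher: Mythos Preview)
Your proposal does not prove the stated theorem. Theorem~\ref{thm:holder} is the $1$--$\infty$ H\"older inequality $\lVert fg\rVert_1 \le \lVert f\rVert_1 \lVert g\rVert_\infty$; what you have written is instead a proof of Lemma~\ref{lem:holder-triangle}, the $L^1$ product--difference bound
\[
\Bigl\lVert \prod_{i=1}^m f_i - \prod_{i=1}^m g_i \Bigr\rVert_1 \le C^{m-1}\sum_{i=1}^m \lVert f_i - g_i\rVert_1.
\]
Worse, your argument explicitly \emph{invokes} Theorem~\ref{thm:holder} at the step ``pull out $f_{m+1}$ in $L^\infty$ via H\"older,'' so as a purported proof of H\"older's inequality it is circular.

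For the record, the paper does not prove Theorem~\ref{thm:holder} at all: it is quoted verbatim from \cite[Theorem~6.8a]{folland99} and used as a black box. Your write-up, once redirected to the correct target, is essentially identical to the paper's proof of Lemma~\ref{lem:holder-triangle}: same induction on $m$, same telescoping decomposition, same placement of the $L^\infty$ factor on $f_{m+1}$ in the first summand and on $\prod_{i=1}^m g_i$ in the second. So the mathematics is fine --- you have simply attached it to the wrong label.
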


\begin{proof}[Proof of Lemma \ref{lem:holder-triangle}]
We will proceed by induction on $m$.\\
\textbf{Case $m=1$:} Trivial.\\
\textbf{Induction:} Suppose the lemma holds for some value of \( m \). A consequence of Hölder's Inequality is that for general functions \( f \) and \( g \), with \( \left\Vert f \right\Vert_1, \left\Vert f \right\Vert_\infty, \left\Vert g \right\Vert_1, \left\Vert g \right\Vert_\infty \) finite, both \( \left\Vert f \cdot g \right\Vert_1 \) and \( \left\Vert f \cdot g \right\Vert_\infty \) are also finite.
 From the inductive hypothesis we have that 
\begin{align*}
         \left\Vert \prod_{i=1}^{m+1} f_i - \prod_{i=1}^{m +1} g_i \right\Vert_1 
         \le & \left\Vert \prod_{i=1}^{m} f_i \cdot f_{m+1}- \prod_{i=1}^{m} g_i \cdot f_{m+1} \right\Vert_1
         + \left\Vert  \prod_{i=1}^{m} g_i \cdot f_{m+1} - \prod_{i=1}^{m} g_i \cdot g_{m+1} \right\Vert_1\\
         \le & \left\Vert \prod_{i=1}^{m} f_i - \prod_{i=1}^{m} g_i  \right\Vert_1 \left\Vert f_{m+1}\right\Vert_\infty + \left\Vert\prod_{i=1}^m g_i\right\Vert_\infty \left\Vert f_{m+1} - g_{m+1} \right\Vert_1\\
         \le & \left\Vert \prod_{i=1}^{m} f_i - \prod_{i=1}^{m} g_i  \right\Vert_1 C + C^m \left\Vert f_{m+1} - g_{m+1} \right\Vert_1\\
         \le & C^{m-1}\sum_{i=1}^m \left\Vert f_i -g_i \right\Vert_1  C + C^m \left\Vert f_{m+1} - g_{m+1} \right\Vert_1\\
         \le & C^m\sum_{i=1}^{m+1} \left\Vert f_i -g_i \right\Vert_1.
\end{align*}

\end{proof}
\begin{lemma}\label{lem:lip-density-inf-bound}
    Let $p$ be an $L$-Lipschitz probability density on $[0,1]^d$ then $\left\Vert p \right\Vert_\infty \le 1+L\sqrt{d}$.
\end{lemma}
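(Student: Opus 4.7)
The plan is a one-line integration argument leveraging the fact that the diameter of $[0,1]^d$ under the Euclidean metric is exactly $\sqrt{d}$. First I would pick a point $x_0 \in [0,1]^d$ at which $p$ attains its supremum $M := \|p\|_\infty$ (such a point exists because $[0,1]^d$ is compact and $p$, being Lipschitz, is continuous). The key observation is that Lipschitz continuity forces $p$ to be almost as large as $M$ on the entire cube, not just near $x_0$, because the cube has bounded diameter.

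Next I would turn this into a pointwise lower bound: for every $x \in [0,1]^d$,
\begin{equation*}
    p(x) \ge p(x_0) - L\,\|x - x_0\|_2 \ge M - L\sqrt{d},
\end{equation*}
since $\|x - x_0\|_2 \le \sqrt{d}$. Combining this with the trivial bound $p \ge 0$ gives $p(x) \ge (M - L\sqrt{d})_+$ on the whole cube.

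Finally, I would integrate against the probability density constraint. Since $p$ is a density on $[0,1]^d$,
\begin{equation*}
    1 = \int_{[0,1]^d} p(x)\, dx \ge \int_{[0,1]^d} (M - L\sqrt{d})_+\, dx = (M - L\sqrt{d})_+,
\end{equation*}
which rearranges to $M \le 1 + L\sqrt{d}$, as claimed. There is no real obstacle here; the only subtlety worth mentioning is handling the case $M \le L\sqrt{d}$, where the lower bound becomes vacuous but the conclusion is immediate, which is precisely why the positive-part $(\cdot)_+$ formulation above handles both regimes uniformly.
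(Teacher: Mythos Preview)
Your proposal is correct and follows essentially the same approach as the paper: both use that the cube has Euclidean diameter $\sqrt d$, apply the Lipschitz bound to compare the maximum of $p$ to every other value, and then invoke $\int p=1$ over a unit-volume domain. The only cosmetic difference is that the paper finishes by noting $\min_x p(x)\le 1$ and then $\max_x p(x)\le \min_x p(x)+L\sqrt d$, which avoids the $(\cdot)_+$ case split you introduced.
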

\begin{proof}[Proof of Lemma \ref{lem:lip-density-inf-bound}]
Since $p$ is $L$-Lipschitz, we have
\begin{align*}
    \lvert p(x) - p(y)\rvert
    & \leq 
    L\cdot \lVert x-y \rVert_2 \quad \text{for any $x,y\in [0,1]^d$} \\
    & \leq
    L\sqrt{d}.
\end{align*}
Also, since $p$ is a probability density, we have
\begin{align*}
    1
    & =
    \lVert p \rVert_1
    =
    \int_{x\in [0,1]^d} p(x) dx
    \geq 
    \min_{x\in [0,1]^d} p(x).
\end{align*}
Combining these two inequalities, we have
\begin{align*}
    \lVert p \rVert_{\infty}
    & =
    \max_{x\in [0,1]^d} p(x)
    \leq
    \min_{x\in [0,1]^d} p(x) + L\sqrt{d}
    \leq 
    1+L\sqrt{d}.
\end{align*}
% \begin{align*}
%     1 
%     \ge \min_{x}p(x) 
%     \ge \max_{x}p(x) - L\sqrt{d}.
% \end{align*}
\end{proof}

\begin{lemma}\label{lem:hist-cover}
    Let $1\ge \epsilon> 0$ and  $C\ge 1$. 
    Then,
    $$
        N(\sQ_{d,b,V,C},\epsilon)\le \left(2C/\epsilon\right)^{\left(b^{|V|}\right)}.
    $$
\end{lemma}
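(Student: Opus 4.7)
The plan is to exploit the fact that the family $\{\Lambda_{d,b,A,V}\}_{A\in[b]^{|V|}}$ is a partition of $[0,1]^d$ into $b^{|V|}$ axis-aligned boxes, each of Lebesgue measure $b^{-|V|}$, indexed by the bin into which each of the $|V|$ selected coordinates falls. Every element of $\sQ_{d,b,V,C}$ is therefore determined by a weight vector $w\in[0,C]^{b^{|V|}}$, and on each piece $\Lambda_{d,b,A,V}$ the difference of two such functions $f_w-f_{w'}$ is the constant $w_A-w'_A$. Consequently, regardless of which $L^\beta$ norm is used ($1\le\beta\le\infty$), one has a clean comparison of the form $\|f_w-f_{w'}\|_\beta \le \max_A |w_A - w'_A|$; for $\beta=\infty$ this is immediate, and for finite $\beta$ it follows because $\sum_A b^{-|V|} = 1$.

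The first step, then, is to reduce the covering problem for $\sQ_{d,b,V,C}$ to the covering problem for the hypercube $[0,C]^{b^{|V|}}$ in the $\ell^\infty$ metric with radius $\epsilon$. The second step is to cover each coordinate interval $[0,C]$ by an $\epsilon$-net of size at most $\lceil C/\epsilon\rceil + 1$ (say, a uniform grid of spacing $\epsilon$), and take the product net. This yields a cover of $[0,C]^{b^{|V|}}$ of cardinality $(\lceil C/\epsilon\rceil+1)^{b^{|V|}}$, which pulls back to an $\epsilon$-cover of $\sQ_{d,b,V,C}$ of the same size under the parametrization $w\mapsto f_w$. The final step is to absorb the additive one into the factor using the hypotheses $C\ge 1$ and $0<\epsilon\le 1$, which give $\lceil C/\epsilon\rceil + 1 \le C/\epsilon + 2 \le 2C/\epsilon$, yielding the advertised bound.

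There is essentially no substantive obstacle here: the argument is a routine product-grid covering of an axis-aligned cube, and the only mild care needed is the verification that the piecewise-constant structure of $\sQ_{d,b,V,C}$ on the partition makes the map $w\mapsto f_w$ an $\ell^\infty$-to-$L^\beta$ contraction, so that an $\epsilon$-net for the parameter cube transfers directly to an $\epsilon$-net in function space.
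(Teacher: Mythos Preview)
Your proposal is correct and follows essentially the same route as the paper: both take the uniform $\epsilon$-grid on each weight coordinate, form the product set, and verify it is an $\epsilon$-cover by using that the cells $\Lambda_{d,b,A,V}$ partition $[0,1]^d$ into pieces of measure $b^{-|V|}$ (the paper does the $L^1$ computation explicitly; you state the slightly more general $L^\beta$ contraction, which is fine). The one slip is your final arithmetic: $\lceil C/\epsilon\rceil+1\le C/\epsilon+2\le 2C/\epsilon$ fails when $C/\epsilon<2$ (e.g.\ $C=1$, $\epsilon=0.6$). The fix is immediate---use the floor, so that $\lfloor C/\epsilon\rfloor+1\le C/\epsilon+1\le 2C/\epsilon$ since $C\ge 1\ge\epsilon$---and this is exactly what the paper does.
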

\begin{proof}[Proof of Lemma \ref{lem:hist-cover}]
    Given an $\epsilon\in (0,1]$, consider the set
    \begin{align*}
        \tsQ_{d,b,V,C,\epsilon}
        & :=
        \bigg\{ x \mapsto \sum_{A\in [b]^{|V|}} w_A \mathbf{1}_{\Lambda_{d,b,A,V}}(x) \mid w_A \in \left\{0, \epsilon, 2\epsilon,\ldots, \lfloor C/\epsilon \rfloor\epsilon \right\} \bigg\}.
    \end{align*}
% \begin{equation*} 
% \tQ = 
%     \left\{ x \mapsto \sum_{A \in {[b]^{|V|}}} w_A \prod_{i=1}^{|V|}\mathbf{1}\left(x_{v_i} \in \left[\frac{A_i-1}{b}, \frac{A_i}{b}\right]\right) \mid w_A \in \left\{0, \epsilon, 2\epsilon,\ldots, \lfloor C/\epsilon \rfloor\epsilon \right\} \right\}.
% \end{equation*}
Clearly, $\tsQ_{d,b,V,C,\epsilon} \subset \sQ_{d,b,V,C}$. We have that $\left|\left\{0, \epsilon, 2\epsilon,\ldots, \lfloor C/\epsilon \rfloor\epsilon \right\}\right| \le 1+C/\epsilon\le 2C/\epsilon$. Thus, from a simple combinatorics argument, it follows that
\begin{align*}
    \left|\tsQ_{d,b,V,C,\epsilon}\right| \le \left(2C/\epsilon\right)^{\left(b^{|V|}\right)}. 
\end{align*}
Now, we argue that $\tsQ_{d,b,V,C,\epsilon}$ is an $\epsilon$-cover of $\sQ_{d,b,V,C}$ in $L^1$ distance.
Let $q \in \sQ_{d,b,V,C}$. 
From the definition of $\sQ_{d,b,V,C}$, it follows that
\begin{align*}
    q(x) 
    & =  
    \sum_{A\in [b]^{|V|}} w_A \mathbf{1}_{\Lambda_{d,b,A,V}}(x) \quad \text{for all $x\in [0,1]^d$}.
\end{align*}
From the definition of $\tsQ_{d,b,V,C,\epsilon}$, there exists a $\tq \in \tsQ_{d,b,V,C,\epsilon}$, with 
\begin{equation*}
    \tq(x) =  \sum_{A\in [b]^{|V|}} \tw_A \mathbf{1}_{\Lambda_{d,b,A,V}}(x) \quad \text{for all $x \in [0,1]^d$},
\end{equation*}
where $\left|w_A - \tw_A\right|\le \epsilon$ for all $A$. It therefore follows that
\begin{align*}
    \left\Vert q -\tq\right\Vert_1 
    &=
    \int_{[0,1]^d} \left|\sum_{A\in [b]^{|V|}}  w_A \mathbf{1}_{\Lambda_{d,b,A,V}}(x)  - \sum_{A\in [b]^{|V|}}  \tw_A \mathbf{1}_{\Lambda_{d,b,A,V}}(x) \right|dx\\
    &=
    \int_{[0,1]^d} \left|\sum_{A \in {[b]^{|V|}}} \left(w_A- \tw_A\right)   \mathbf{1}_{\Lambda_{d,b,A,V}}(x)\right|dx\\
    &=
    \sum_{A \in {[b]^{|V|}}} \left|w_A- \tw_A\right|\int_{[0,1]^d}  \mathbf{1}_{\Lambda_{d,b,A,V}}(x)dx\\
    &\le
    \sum_{A \in {[b]^{|V|}}} \epsilon \cdot \frac{1}{b^{|V|}}\\
    &= 
    \epsilon.
\end{align*}
\end{proof}
\begin{lemma}\label{lem:susbset-hist-approx}
    Let $V \subset [d]$ and $f:[0,1]^{|V|} \mapsto \R$ be an $L$-Lipschitz function with $0\le f\le C$ for some $C$. Then, 
    \begin{equation*}
        \min_{q\in \sQ_{d,b,V,C}}\left\Vert f\circ e_V - q \right\Vert_1 \le \sqrt{|V|}L/(2b). 
    \end{equation*}
\end{lemma}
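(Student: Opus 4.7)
The plan is to construct an explicit candidate $q \in \sQ_{d,b,V,C}$ by sampling $f$ at the centroid of each hypercube of the partition, and then bound the error pointwise using the Lipschitz property.

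First I would define, for each multi-index $A \in [b]^{|V|}$, the centroid $c_A = \bigl(\tfrac{A_1 - 1/2}{b}, \ldots, \tfrac{A_{|V|} - 1/2}{b}\bigr) \in [0,1]^{|V|}$ of the hypercube $\prod_{i=1}^{|V|}[(A_i-1)/b, A_i/b]$, and set the weight $w_A := f(c_A)$. The assumption $0 \le f \le C$ ensures $w_A \in [0, C]$, so the resulting function $q(x) = \sum_{A\in [b]^{|V|}} w_A \mathbf{1}_{\Lambda_{d,b,A,V}}(x)$ lies in $\sQ_{d,b,V,C}$ by definition \eqref{eq:sq_def}.

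Next I would observe that the collection $\{\Lambda_{d,b,A,V}\}_{A \in [b]^{|V|}}$ partitions $[0,1]^d$ up to a Lebesgue-null boundary set: for almost every $x \in [0,1]^d$, there is a unique $A$ with $x \in \Lambda_{d,b,A,V}$, and on this set $q(x) = f(c_A)$. Moreover $\sel_V(x)$ lies in the hypercube centred at $c_A$ with side length $1/b$, so every coordinate of $\sel_V(x) - c_A$ has absolute value at most $1/(2b)$, giving $\lVert \sel_V(x) - c_A \rVert_2 \le \sqrt{|V|}/(2b)$. The $L$-Lipschitz property of $f$ then yields the pointwise bound $|f(\sel_V(x)) - q(x)| = |f(\sel_V(x)) - f(c_A)| \le L\sqrt{|V|}/(2b)$.

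Finally I would integrate this uniform pointwise bound over $[0,1]^d$; since the domain has unit volume, this immediately gives
\begin{equation*}
    \lVert f \circ \sel_V - q \rVert_1 = \int_{[0,1]^d} |f(\sel_V(x)) - q(x)|\, dx \le \frac{\sqrt{|V|}\,L}{2b},
\end{equation*}
and taking the minimum over $\sQ_{d,b,V,C}$ preserves the inequality. There is no real obstacle here: the proof is essentially just the standard Lipschitz-plus-midpoint-quantization estimate, and the only mild subtlety is confirming that the $\Lambda$-cells tile $[0,1]^d$ (modulo measure zero) so that the chosen $q$ matches $f(c_A)$ almost everywhere on each cell.
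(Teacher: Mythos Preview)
Your proposal is correct and follows essentially the same approach as the paper: construct $q$ by evaluating $f$ at the centroid of each cell, check membership in $\sQ_{d,b,V,C}$ via $0\le f\le C$, and bound the error pointwise by $L\sqrt{|V|}/(2b)$ using the Lipschitz property and the $\ell_2$-radius of a $1/b$-side hypercube. The only cosmetic difference is that the paper first applies Tonelli to reduce the integral to $[0,1]^{|V|}$ before summing over cells, whereas you work directly in $[0,1]^d$ using the a.e.\ partition property; both routes are equivalent.
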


\begin{proof}[Proof of Lemma \ref{lem:susbset-hist-approx}]

For any $A\in [b]^{|V|}$, recall that $\Lambda_{|V|, b, A, [|V|]}$ is defined in \eqref{eq:Lambda_def} as
\begin{align*}
    \Lambda_{|V|, b, A, [|V|]}
    & =
    \prod_{i=1}^{|V|}\bigg[\frac{A_i-1}{b}, \frac{A_i}{b}\bigg]
\end{align*}
and let $\lambda_A$ be the centroid of $\Lambda_{|V|,b,A,[|V|]}$
Also, let $q$ be the function on $[0,1]^d$ such that
\begin{align*}
    q(x)
    & =
    \sum_{A\in [b]^{|V|}} f(\lambda_{A}) \mathbf{1}_{\Lambda_{d,b,A,V}}(x) \quad \text{for all $x\in [0,1]^d$.}
\end{align*}
By the assumption of $f \leq C$, we have $q \in \sQ_{d,b,V,C}$.
Now, we bound $\lVert f\circ e_V - q \rVert_1$.
\begin{align*}
    \lVert f\circ e_V - q \rVert_1
    & =
    \int_{x\in [0,1]^d} \bigg| f(e_v(x)) - \sum_{A\in [b]^{|V|}} f(\lambda_{A}) \mathbf{1}_{\Lambda_{d,b,A,V}}(x) \bigg| dx \\
    & =
    \int_{x\in [0,1]^{|V|}} \bigg| f(x) - \sum_{A\in [b]^{|V|}} f(\lambda_{A}) \mathbf{1}_{\Lambda_{|V|,b,A,[|V|]}}(x) \bigg| dx \quad \text{by Tonelli's Theorem} \\
    & \leq
    \sum_{A\in [b]^{|V|}} \int_{x\in \Lambda_{|V|,b,A,[|V|]}} | f(x) - f(\lambda_A)\mathbf{1}_{\Lambda_{|V|,b,A,[|V|]}}(x) | dx. \numberthis\label{eq:min_q_expr}
\end{align*}
Note that if $x\in \Lambda_{|V|,b,A,[|V|]}$ then $\mathbf{1}_{\Lambda_{|V|,b,A,[|V|]}}(x) = 1$.
Hence, for $x\in \Lambda_{|V|,b,A,[|V|]}$, we have
\begin{align*}
    | f(x) - f(\lambda_A)\mathbf{1}_{\Lambda_{|V|,b,A,[|V|]}}(x) |
    & = 
    | f(x) - f(\lambda_A) | 
    \leq
    L\| x - \lambda_A \|_2
    \leq
    L\cdot \frac{\sqrt{|V|}}{2b}.
\end{align*}
Plugging this into \eqref{eq:min_q_expr}, we have
\begin{align*}
    \lVert f\circ e_V - q \rVert_1
    & \leq
    \sum_{A\in [b]^{|V|}} \int_{x\in \Lambda_{|V|,b,A,[|V|]}}L\cdot \frac{\sqrt{|V|}}{2b} dx
    =
    L\cdot \frac{\sqrt{|V|}}{2b}.
\end{align*}

    % Let $\Lambda_{b,A} = \prod_{i=1}^{|V|}[(A_i-1)/b, A_i/b]$ and let $q \in \sQ_{d,b,V,C}$ be defined via $w_A = f\left(\centroid(\Lambda_{b,A})\right)$. Now we have
    % \begin{align*}
    %     \left\| f - q \right\|_1
    %     = & \int_{[0,1]^d} \left|f(e_{V}(x)) -\sum_{A \in [b]^{|V|}} f\left(\centroid\left(\Lambda_{b,A} \right) \right)\mathbf{1}\left(\left(x_{v_1},\ldots,x_{v_{|V|}}\right) \in \Lambda_{b,A} \right)\right| dx\\
    %     = & \int_{[0,1]^{|V|}} \left|f(x) -\sum_{A \in [b]^{|V|}} f\left(\centroid\left(\Lambda_{b,A} \right)\right) \mathbf{1}\left((x) \in \Lambda_{b,A} \right) \right|dx \quad \text{(Tonelli's Theorem)}\\ 
    %     = & \sum_{A \in [b]^{|V|}}\int_{\Lambda_{b,A}} \left|f(x) - f\left(\centroid\left(\Lambda_{b,A} \right)\right) \mathbf{1}\left((x) \in \Lambda_{b,A} \right)\right| dx \\ 
    %     = & \sum_{A \in [b]^{|V|}}\int_{\Lambda_{b,A}} \left|f(x) - f\left(\centroid\left(\Lambda_{b,A} \right)\right) \right| dx \\ 
    %     \le & \sum_{A \in [b]^{|V|}}\int_{\Lambda_{b,A}} L\left\|x - \centroid\left(\Lambda_{b,A} \right) \right\|_2 dx \\ 
    %     \le & \sum_{A \in [b]^{|V|}}\int_{\Lambda_{b,A}} L\sqrt{|V|}/(2b)  dx \\ 
    %     = & L\sqrt{|V|}/(2b).
    % \end{align*}
\end{proof}

\begin{thm}[Theorem 3.4 page 7 of \cite{ashtiani18}, Theorem 3.6 page 54 of~\cite{devroye01}] \label{thm:estimator-algorithm}
    There exists a deterministic algorithm that, given a collection $\mathcal{C}$ of distributions $\{p_1,\ldots,p_M\}$, a parameter $\varepsilon >0$ and at least $\frac{\log \left(3M^2/\delta \right)}{2\varepsilon^2}$ iid samples from an unknown distribution $p$, outputs an index $j\in \left[M\right]$ such that
      \begin{align*} 
        \left\Vert p_j - p \right\Vert_1 \le 3 \min_{i \in \left[M\right]} \left\Vert p_i - p\right\Vert_1 + 4 \varepsilon \numberthis\label{eq:main_alg_bound}
      \end{align*}
  with probability at least $1 - \delta/ 3$.
\end{thm}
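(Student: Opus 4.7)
The plan is to follow the classical Scheff\'e/Yatracos minimum-distance estimator construction. For each ordered pair $(i,k)\in [M]^2$ with $i\ne k$, I would define the \emph{Scheff\'e set}
\begin{equation*}
    A_{i,k} \;=\; \{x : p_i(x) > p_k(x)\},
\end{equation*}
and record the identity $\|p_i-p_k\|_1 = 2\bigl(p_i(A_{i,k})-p_k(A_{i,k})\bigr)$, which is the only special property of these sets that the argument will use. Let $\mu_n$ denote the empirical measure of the samples. The algorithm is the following purely deterministic procedure: compute $\mu_n(A_{i,k})$ for every Scheff\'e set, then output
\begin{equation*}
    \hat\jmath \;=\; \arg\min_{j\in [M]}\; \max_{i,k\in [M]}\, \bigl| p_j(A_{i,k}) - \mu_n(A_{i,k})\bigr|.
\end{equation*}

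Next I would set up the high-probability event. By Hoeffding's inequality, for any fixed set $A$, $\Pr\bigl(|\mu_n(A)-p(A)|>\varepsilon\bigr)\le 2e^{-2n\varepsilon^2}$. Since there are only $\binom{M}{2}\le M^2/2$ distinct Scheff\'e sets (noting $A_{i,k}$ and $A_{k,i}$ are complements and so produce identical deviations), a union bound gives
\begin{equation*}
    \Pr\Bigl(\exists\, i,k:\ |\mu_n(A_{i,k})-p(A_{i,k})|>\varepsilon\Bigr) \;\le\; M^2 e^{-2n\varepsilon^2}.
\end{equation*}
Requiring the right-hand side to be at most $\delta/3$ yields exactly the sample-size hypothesis $n\ge \log(3M^2/\delta)/(2\varepsilon^2)$. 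From here I work on the good event $\gE$ that every $|\mu_n(A_{i,k})-p(A_{i,k})|\le \varepsilon$.

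On $\gE$, let $i^*\in \arg\min_i \|p_i-p\|_1$ and write $\Delta^* = \|p_{i^*}-p\|_1$. For an arbitrary measurable $A$ the elementary bound $|p_j(A)-p(A)|\le \tfrac12\|p_j-p\|_1$ holds (this is just $|\int_A(p_j-p)|\le \int (p_j-p)_+$). Combining with $\gE$ gives, for every Scheff\'e set $A_{i,k}$,
\begin{equation*}
    |p_{i^*}(A_{i,k})-\mu_n(A_{i,k})| \;\le\; \tfrac12\Delta^* + \varepsilon,
\end{equation*}
so the objective value of the algorithm at $i^*$ is at most $\tfrac12\Delta^*+\varepsilon$, and by optimality the same bound holds at $\hat\jmath$. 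Applying the triangle inequality to the single Scheff\'e set $A_{\hat\jmath,i^*}$ and invoking the Scheff\'e identity,
\begin{align*}
    \|p_{\hat\jmath}-p_{i^*}\|_1
    &= 2\bigl(p_{\hat\jmath}(A_{\hat\jmath,i^*})-p_{i^*}(A_{\hat\jmath,i^*})\bigr) \\
    &\le 2\bigl|p_{\hat\jmath}(A_{\hat\jmath,i^*})-\mu_n(A_{\hat\jmath,i^*})\bigr| + 2\bigl|\mu_n(A_{\hat\jmath,i^*})-p_{i^*}(A_{\hat\jmath,i^*})\bigr|
    \le 2\Delta^* + 4\varepsilon.
\end{align*}
A final triangle inequality $\|p_{\hat\jmath}-p\|_1 \le \|p_{\hat\jmath}-p_{i^*}\|_1 + \Delta^* \le 3\Delta^* + 4\varepsilon$ gives \eqref{eq:main_alg_bound}.

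The main obstacle is conceptual rather than computational: the supremum defining the algorithm's objective is taken only over Scheff\'e sets drawn from the candidate family (not over all measurable sets), so one must verify that this restricted family is rich enough to both (a) force the optimal candidate's objective to be small, via the trivial bound $|p_j(A)-p(A)|\le \tfrac12\|p_j-p\|_1$ on any set, and (b) translate closeness of $p_{\hat\jmath}$ and $p_{i^*}$ on the single set $A_{\hat\jmath,i^*}$ back into an $L^1$ bound, via the Scheff\'e identity. Once the Scheff\'e identity is in hand, the combinatorial counting for the union bound, together with Hoeffding, directly matches the stated sample complexity.
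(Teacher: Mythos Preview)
The paper does not give its own proof of this theorem: it is simply quoted as a black box from \cite{ashtiani18} and \cite{devroye01} and then applied in the proof of Theorem~\ref{thm:main}. Your argument is correct and is precisely the classical Yatracos/minimum-distance construction that those references present, so there is nothing to compare---you have supplied exactly the proof the paper is citing.
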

\begin{proof}[Proof of Theorem \ref{thm:main}]

For this proof, we will be employing Theorem \ref{thm:estimator-algorithm}. 
For a nonnegative integrable function $f$ on $[0,1]^d$, define $\norm: f \mapsto f/\left\Vert f\right\Vert_1$, with $\norm(0)$ being set to the constant uniform density. 
% We will begin by setting some parameters. 
% Given $n,r\in \N$, let\note{why do we need $\log\log n$ in $C$ and $\log^2 n$ in $\varepsilon$? RV: We don't necessarily, I just wanted to make sure that the rates were slow enough so that the proof would work out. I'm happy to tighten these up.}
% \begin{align*}
%     C 
%     &= 
%     \max \left(\log\log n, 1\right),\,
%     b 
%     = 
%     \left\lfloor n^{\frac{1}{r+2}}\right\rfloor,\,
%     \varepsilon 
%     = 
%     n^{-\frac{1}{r+2}} \max\left(\log^2(n),1\right)
%     \quad \text{and}\quad
%     \delta 
%     = 
%     1/ n. \numberthis\label{eq:thm_para_def}
% \end{align*}

For any $d,b\in \N$, $V'\subset [d]$ and sufficiently large constant $C>0$, recall that $\sQ_{d,b,V',C}$ is the set of histograms of width $b$ on $V'$ whose maximum weight is at most $C$ as defined in \eqref{eq:sq_def} and let $\tsQ_{d,b,V',C,b^{-1}}$ be a minimal $b^{-1}$ cover of $\sQ_{d,b,V',C}$.
Also, recall that $\maxcliques(\gG)$ is the set of maximal cliques in $\gG$.
Let
\begin{align*}
    Q_n = \bigg\{\prod_{V'\in \maxcliques(\gG)} q_{V'} \mid q_{V'} \in \sQ_{d,b,V',C}\bigg\}
    \quad 
    \text{and}
    \quad
    \tQ_n = \bigg\{\prod_{V'\in \maxcliques(\gG)} \tq_{V'} \mid \tq_{V'} \in \tsQ_{d,b,V',C,b^{-1}}\bigg\}.\numberthis\label{eq:q_n_def}
\end{align*}
The collection $\mathcal{C}$ of densities from Theorem \ref{thm:estimator-algorithm} correspond to the set $\norm(\tQ_n) := \{\norm(q)\mid q\in \tQ_n\}.$

% $\norm\left(\tQ_n\right)$.

To show that Theorem \ref{thm:estimator-algorithm} applies, we will first show that, for sufficiently large $n$,
$$n \ge \frac{\log \left(3M^2/\delta \right)}{2\varepsilon^2}.$$
We first give a bound on $M$.
Note that 
\begin{align*}
    M
    & =
    \left| \norm\left(\tQ_n\right)\right|
    \leq
    \left| \tQ_n \right|
    =
    \prod_{V'\in \maxcliques(\gG)} \left| \tsQ_{d,b,V',C}\right|.
\end{align*}
Since each $\tsQ_{d,b,V',C}$ is a minimal $b^{-1}$ cover of $\sQ_{d,b,V',C}$, we have
\begin{align*}
    \left| \tsQ_{d,b,V',C}\right|
    & =
    \covernum(\sQ_{d,b,V',C},b^{-1}) \quad \text{by the definition of a $b^{-1}$-cover.}
\end{align*}
By Lemma \ref{lem:hist-cover} and $|V'|\leq r$, we have
\begin{align*}
    \covernum(\sQ_{d,b,V',C},b^{-1})
    & \leq
    (2Cb)^{\left(b^{|V'|}\right)}
    \leq
    (2Cb)^{b^{r}}.
\end{align*}
It implies that
\begin{align*}
    M
    & \leq
    \prod_{V'\in \maxcliques(\gG)}(2Cb)^{b^{r}}
    \leq
    (2Cb)^{|\maxcliques(\gG)|\cdot b^{r}}
    \leq
    (2Cb)^{2^d\cdot b^{r}} \quad \text{since $|\maxcliques(\gG)|\leq 2^{|V|}= 2^d$.}
\end{align*}

% Recall that we set $C = \max \left(\log\log n, 1\right)$ and $b = \left\lfloor n^{\frac{1}{r+2}}\right\rfloor$ in \eqref{eq:thm_para_def}.
By applying a logarithm yields, we have
\begin{align*}
    \log M
    & \leq
    2^db^r \log (2C b)
    =
    O(b^r\log b)
    % \leq
    % 2^dn^{\frac{r}{r+2}} \log\left(2n^{\frac{1}{r+2}}\log \log n\right)
    % \leq
    % C_0 \cdot n^{\frac{r}{r+2}} \log n
\end{align*}
% for some large constant $C_0$.
% Recall that we set $\varepsilon  = n^{-\frac{1}{r+2}} \max\left(\log^2(n),1\right)$ and $\delta = 1/ n$ in \eqref{eq:thm_para_def}.
Now, we have
\begin{align*}
    \frac{\log \left(3M^2/\delta \right)}{2\varepsilon^2}
    & =
    \frac{1}{2\eps^2}\cdot (\log 3 + 2\log M + \log(1/\delta)) 
    \leq
    O(\frac{1}{\eps^2}(b^r\log b + \log \frac{1}{\delta})).
    % & =
    % \frac{1}{2n^{-\frac{2}{r+2}}\log^4(n)}\cdot (\log 3 + 2C_Q \cdot n^{\frac{r}{r+2}} \log n + \log n) \\
    % & \leq
    % C_1\cdot\frac{ n }{ \log^3(n)}  \quad \text{for some large constant $C_1$.}
\end{align*}
By picking 
\begin{align*}
    \eps
    =
    n^{-\frac{1}{r+2}}\log n, \quad 
    b
    =
    n^{-\frac{1}{r+2}}
    \quad \text{and}\quad
    \delta
    =
    \frac{1}{n},\numberthis\label{eq:thm_para_def}
\end{align*}
we have, for a sufficiently large $n$, $\frac{\log \left(3M^2/\delta \right)}{2\varepsilon^2} \leq n$.

Now, we are going to examine the RHS of \eqref{eq:main_alg_bound} in Theorem \ref{thm:estimator-algorithm}, i.e. bound the term $\min_{\tq \in \norm(\tQ_n)} \|p - \tq\|_1$.
We first bound the term $\min_{\tq \in \tQ_n} \|p - \tq\|_1$ and return to $\norm(\tQ_n)$ later.
Note that
\begin{align*}
    \min_{\tq \in \tQ_n }\left\Vert p -\tq \right\Vert_1
    \le  \min_{
    q\in Q_n }\bigg(\left\Vert p -q \right\Vert_1 + \min_{\tq \in \tQ_n }\left\Vert q -\tq \right\Vert_1\bigg).
\end{align*}
Recall the definition of $Q_n$ and $\tQ_n$ in \eqref{eq:q_n_def}.
We have, for each $q\in Q_n$,  there is a $\tq \in \tQ_n$ such that,  by Lemma \ref{lem:holder-triangle}, 
\begin{align*}
    \|\tq - q\|_1 
    & =
    O\bigg(\frac{C^{|\maxcliques(\gG)|-1}}{b}\bigg)
    =
    \tO(n^{-\frac{1}{2+r}}). \numberthis\label{eq:Q_tQ_prop}
\end{align*}
Therefore, we have
\begin{align*}
    \min_{\tq \in \tQ_n}\left\Vert p -\tq \right\Vert_1
    & \le  
    \min_{
    q\in Q_n }\left\Vert p -q \right\Vert_1 +\tO(n^{-\frac{1}{2+r}}).
\end{align*}
Now, we investigate the term $\min_{
    q\in Q_n }\left\Vert p -q \right\Vert_1$.
From Proposition \ref{prop:HC-reg} it follows that
\begin{equation*}
    p(x) 
    = 
    \prod_{V' \in \maxcliques(\gG)} \psi_{V'}\circ e_{V'} \quad \text{where all $\psi_{V'}$ are all Lipschitz continuous for some $L$.}\numberthis\label{eq:p_prod_form}
\end{equation*}
Because $\psi_{V'}$ are all Lipschitz continuous on a bounded set, they must all be bounded and, for sufficiently large $n$, $\psi_{V'} \le C$ for all $V' \in \maxcliques(\gG)$. 
By \eqref{eq:p_prod_form} and the definition in \eqref{eq:q_n_def}, we can express
\begin{align*}
    \min_{q\in Q_n }\left\Vert p -q \right\Vert_1
    & =
    \min_{q\in Q_n}\bigg\Vert  \prod_{V' \in \maxcliques(\gG)} \psi_{V'} \circ e_{V'} - q \bigg\|_1 
    =
    \min_{q_{V'} \in \sQ_{d,b,V',C}}\bigg\Vert  \prod_{V' \in \maxcliques(\gG)} (\psi_{V'} \circ e_{V'}  - q_{V'} )\bigg\|_1\numberthis\label{eq:p_q_norm}
\end{align*}
By Lemma \ref{lem:holder-triangle}, for any $q_{V'}\in \sQ_{d,b,V',C}$, we have
\begin{align*}
    \bigg\Vert  \prod_{V' \in \maxcliques(\gG)} (\psi_{V'} \circ e_{V'}  - q_{V'} )\bigg\|_1
    & \leq
    C^{d-1}\sum_{V' \in \maxcliques(\gG)}\bigg\Vert   \psi_{V'} \circ e_{V'} - q_{V'} \bigg\|_1
\end{align*}
and, by Lemma \ref{lem:susbset-hist-approx}, we have
\begin{align*}
    \min_{q_{V'}\in \sQ_{d,b,V',C}}\bigg\Vert   \psi_{V'} \circ e_{V'} - q_{V'} \bigg\|_1
    & \leq
    \frac{L}{b}.
\end{align*}
Plugging them into \eqref{eq:p_q_norm}, we have
\begin{align*}
    \min_{q\in Q_n}\|p-q\|_1
    & \leq
    \sum_{V'\in \maxcliques(\gG)} C^{d-1}\cdot \frac{L}{b}
    =
    \tO(n^{-\frac{1}{r+2}}).
\end{align*}
If $q^*$ is minimizer  of $\argmin_{\tq\in Q_n}\|p-\tq\|_1$, it also implies that
\begin{align*}
    |\|q^*\|_1- 1|
    \leq
    \|p-q^*\|_1
    =
    \tO(n^{-\frac{1}{r+2}})\numberthis\label{eq:bias-non-normalized}
\end{align*}
Combining with \eqref{eq:Q_tQ_prop}, if $\tq^*$ is a minimizer of $\argmin_{q\in \tQ_n}\|p-q\|_1$, we have
\begin{align*}
    |\|\tq^*\|_1- 1|
    =
    \tO(n^{-\frac{1}{r+2}})
    \quad \text{which means $\|\tq^*\|_1 \to 1$ as $n\to \infty$.}
\end{align*}
Note that $\tQ_n$ may contain the $0$ function.
The fact that $\|\tq^*\|_1 \to 1$ suggests that, for a sufficiently large $n$, the $0$ function is not a minimizer of $\arg \min_{\tq\in \tQ_n}\left\Vert  p - \tq \right\|_1 $.
For any $n\in \N$, let $\tQ'_n$ the set $\tQ_n$ with $0$ removed, i.e.,
\begin{equation*}
    \tQ'_n = 
    \begin{cases}
        \tQ_n & \text{if $0 \not \in \tQ_n$} \\
        \tQ_n\setminus\{0\} &\text{if $0  \in \tQ_n$}.
    \end{cases}
\end{equation*}
Hence, for a sufficiently large $n$, we have $\min_{\tq\in \tQ'_n}\left\Vert  p - \tq \right\|_1 = \min_{\tq\in \tQ_n}\left\Vert  p - \tq \right\|_1 $.
Now, we are ready to analyze the term $\min_{\tq \in \norm\left(\tQ_n\right) }\left\Vert p -\tq \right\Vert_1$.
We have
\begin{align*}
    \min_{\tq \in \norm\left(\tQ_n\right) }\left\Vert p -\tq \right\Vert_1
    & \leq
    \min_{\tq \in \norm\left(\tQ'_n\right) }\left\Vert p -\tq \right\Vert_1
    \leq
    \min_{\tq \in \tQ'_n }\left\Vert p -\norm(\tq) \right\Vert_1
    \leq
    \min_{\tq \in \tQ'_n }\bigg(\left\Vert p -\tq \right\Vert_1 + \left\Vert \tq -\norm(\tq) \right\Vert_1\bigg).
\end{align*}
For any $\tq\in \tQ'_n$, we have
\begin{align*}
    \left\Vert \tq -\norm(\tq) \right\Vert_1
    & =
    \left\Vert \tq -\frac{\tq}{\|\tq\|_1} \right\Vert_1
    =
    \left| 1 -\left\Vert \tq\right\Vert_1 \right|
    =
    \left| \|p\|_1 -\left\Vert \tq\right\Vert_1 \right|
    \leq
    \|p-\tq\|_1.
\end{align*}
Hence, by $\min_{\tq\in \tQ'_n}\left\Vert  p - \tq \right\|_1 = \min_{\tq\in \tQ_n}\left\Vert  p - \tq \right\|_1 $ and \eqref{eq:bias-non-normalized}, we have
\begin{align*}
    \min_{\tq \in \norm\left(\tQ_n\right) }\left\Vert p -\tq \right\Vert_1
    & \leq
    2\min_{\tq \in \tQ'_n }\|p-\tq\|_1
    =
    \tO(n^{-\frac{1}{r+2}}).
\end{align*}

Recall that we set $\varepsilon  =  n^{-\frac{1}{r+2}} \log(n)$ in \eqref{eq:thm_para_def}.
Finally, suppose $q'$ is the output of the algorithm, we have
\begin{align*}
    \|q' - p\|_1
    & \leq
    3 \cdot \min_{\tq \in \norm\left(\tQ_n\right) }\left\Vert p -\tq \right\Vert_1 + 4 \cdot \eps
    =
    \tO(n^{-\frac{1}{r+2}}).
\end{align*}

\end{proof}

\section{Graph Proofs}
\label{appx:graph-proofs}
For any $d,d',t\in \N$, define $L_{d\times d'}$ to be the graph whose vertex set is $[d]\times [d']$ and edge set is
\begin{align*}
    \{((i,j),(i',j'))\mid i,i'\in [d],j,j'\in [d'], (i,j)\neq (i',j'), |i-j|+|i'-j'|\leq 1\}
\end{align*}
and $L_{d\times d'}^t$ to be the graph whose vertex set is $[d]\times [d']$ and edge set is
\begin{align*}
    \{((i,j),(i',j'))\mid i,i'\in [d],j,j'\in [d'], (i,j)\neq (i',j'), |i-j|+|i'-j'|\leq t\}.
\end{align*}

For any $d,d',t\in \N$, define $L^+_{d\times d'}$ to be the graph whose vertex set is $[d]\times [d']$ and edge set is 
\begin{align*}
    \{((i,j),(i',j'))\mid i,i'\in [d],j,j'\in [d'], (i,j)\neq (i',j'), \max\{|i-j|,|i'-j'|\}\leq 1\}
\end{align*}
and $(L^+_{d\times d'})^t$ to be the graph whose vertex set is $[d]\times [d']$ and edge set is 
\begin{align*}
    \{((i,j),(i',j'))\mid i,i'\in [d],j,j'\in [d'], (i,j)\neq (i',j'), \max\{|i-j|,|i'-j'|\}\leq t\}.
\end{align*}

For any $d,t\in \N$, define $L_d$ to be the graph whose vertex set is $[d]$ and edge set is $\{(i,j)\mid i\neq j, |i-j|\geq 1\}$ and $L^t_d$ to be the graph whose vertex set is $[d]$ and edge set is $\{(i,j)\mid i\neq j, |i-j|\geq t\}$.

\begin{proof}[Proof of Lemma \ref{lem:grid_max_clique}]

For any clique $C$ in $(L_{d\times d'})^t$, let $(i_0,j_0)$ (resp. $(i_1,j_1)$, $(i_0',j_0')$ and $(i_1',j_1')$) be the vertex in $C$ such that $i_0+j_0$ is maximal (resp. $i_1+j_1$ is minimal, $i'_0-j'_0$ is maximal and $i'_1-j'_1$ is minimal).
Namely, the vertex set of $C$ is a subset of 
\begin{align*}
    S := \{(i,j) | i\in [d],j\in [d'], i_1+j_1 \leq i+j\leq i_0+j_0, i'_1-j'_1 \leq i-j\leq i'_0-j'_0\}.
\end{align*}

By the definition of cliques and $(L_{d\times d'})^t$, we have
\begin{align*}
    (i_0 + j_0) - (i_1+j_1) & \leq |i_0-i_1|+|j_0-j_1| \leq t \quad \text{since there is an edge between $(i_0,j_0)$ and $(i_1,j_1)$}\\
    (i'_0 - j'_0) - (i'_1-j'_1) & \leq |i'_0-i'_1|+|j'_0-j'_1| \leq t\quad \text{since there is an edge between $(i'_0,j'_0)$ and $(i'_1,j'_1)$}
\end{align*}
To bound the size of $S$, we observe that, for each of the at most $t+1$ possible values $i_1+j_1,i_1+j_1+1,\dots, i_0+j_0$ equal to $i+j$, there are at most $\lceil\frac{t+1}{2}\rceil$ possible values among $i'_1+j'_1,i'_1+j'_1+1,\dots, i'_0+j'_0$ equal to $i-j$ by considering the parity.
Therefore, $|S|$ is at most $(t+1)\cdot\lceil\frac{t+1}{2}\rceil$.

Hence, the size of the largest clique in $(L_{d\times d'})^t$ is at most $(t+1)\cdot\lceil\frac{t+1}{2} \rceil \leq \frac{t^2+4t+3}{2}$.
\end{proof}

\begin{proof}[Proof of Lemma \ref{lem:grid_d_max_clique}]

It is easy to check that the subgraph of $(L^+_{d\times d})^t$ induced by the vertex set $[t+1]\times [t+1]$ is a clique.
Hence, the size of the largest clique in $(L^+_{d\times d'})^t$ is at least $(t+1)^2$.

For any clique $C$ in $(L^+_{d\times d'})^t$,  let $i_0$ (resp. $i_0'$) be the smallest (resp. largest) first index of the vertices in $C$ and $j_0$ (resp. $j_0'$) be the smallest (resp. largest) second index of the vertices in $C$.
Namely, the vertex set of $C$ is a subset of 
\begin{align*}
    S
    & :=
    \{(i,j)|i\in [d],j\in [d'],i_0\leq i\leq i_0',j_0\leq j\leq j_0'\}.
\end{align*}
To bound the size of $S$, by the definition of cliques and $(L^+_{d\times d'})^t$, we have
\begin{align*}
    i_0'-i_0\leq t
    \quad \text{and}\quad 
    j_0'-j_0\leq t
\end{align*}
Therefore, $|S|$ is at most $(t+1)^2$.

Hence, the size of the largest clique in $(L^+_{d\times d'})^t$ is $(t+1)^2$.

% It is easy to check that the subgraph of $(L^+_{d\times d})^t$ induced by the vertex set $[\min\{t+1,d\}]\times [\min\{t+1,d'\}]$ is a clique.
% Hence, the size of the largest clique in $(L^+_{d\times d'})^t$ is at least $\min\{t+1,d\}\cdot \min\{t+1,d'\}$.

% For any clique $C$ in $(L^+_{d\times d'})^t$,  let $i_0$ (resp. $i_0'$) be the smallest (resp. largest) first index of the vertices in $C$ and $j_0$ (resp. $j_0'$) be the smallest (resp. largest) second index of the vertices in $C$.
% By the definition of cliques and $(L^+_{d\times d'})^t$, we have
% \begin{align*}
%     i_0'-i_0\leq \min\{t,d-1\}
%     \quad \text{and}\quad 
%     j_0'-j_0\leq \min\{t,d'-1\}.
% \end{align*}
% Namely, the vertex set of $C$ is a subset of $\{(i,j)|i\in [d],j\in [d'],i_0\leq i\leq i_0',j_0\leq j\leq j_0'\}$ whose size is at most $\min\{t+1,d\}\cdot \min\{t+1,d'\}$

% Hence, the size of the largest clique in $(L^+_{d\times d'})^t$ is $\min\{t+1,d\}\cdot \min\{t+1,d'\}$.

\end{proof}

\begin{proof}[Proof of Lemma \ref{lem:path_max_clique}]

It is easy to check that the subgraph of $L_d^t$ induced by the vertex set $[\min\{t+1,d\}]$ is a clique.
Hence, the size of the largest clique in $L_d^t$ is at least $\min\{t+1,d\}$.

For any clique $C$ in $L_d^t$, let $i_0$ (resp. $j_0$) be the smallest (resp. largest) index of the vertex in $C$.
Namely, the vertex set of $C$ is be a subset of $S:=\{i|i\in [d],i_0\leq i\leq j_0\}$.
By the definition of cliques and $L_d^t$, we have $|i-j|\leq \min\{t,d-1\}$.
Therefore, $|S|$ is at most $\min\{t+1,d\}$.

Hence, the size of the largest clique in $L_d^t$ is $\min\{t+1,d\}$.

\end{proof}

\section{Lower Bound for MRF Rates}\label{appx:lower-bound}

We approach this problem assuming the data domain is $[0,1]^d$. For any MRF graph $\gG$ with maximum clique size $r$, no estimator can achieve a rate of $O\left(n^{-1/(2+r-\varepsilon)}\right)$ for the set of all Lipschitz continuous densities, for any $\varepsilon > 0$. We prove this by contradiction.

Suppose there exists a graph $\gG$, $\varepsilon > 0$, and an estimator $\hp$ that achieves this rate on Lipschitz continuous densities satisfying the Markov property with respect to $\gG$. Without loss of generality, assume the first $r$ entries of the random vector, $X_1,\ldots,X_r$, form a maximal clique in $\gG$.

Consider an arbitrary $r$-dimensional Lipschitz continuous density $q$ and let $q'$ be the density where, for $Y \sim q'$, $(Y_1,\ldots,Y_r) \sim q$ and $Y_{r+1},\ldots,Y_d$ are i.i.d. uniform random variables on $[0,1]$, jointly independent of $(Y_1,\ldots,Y_r)$. Note that $q'$ is a Lipschitz continuous density satisfying the Markov property with respect to $\gG$.

Using $\hp$ to estimate $q'$, we get $\left\|q' - \hp\right\|_1 \in O\left(n^{-1/(2+r-\varepsilon)}\right)$. Let $\mathcal{L}$ denote the law of a random variable, e.g., $\mathcal{L}(Y) = q'$.

It is well-known that applying the same function to a pair of random variables never increases their $L^1$ distance (see \cite{devroye01}, Section 5.4). Let $f: \left(x_1,\ldots,x_d\right) \mapsto \left(x_1,\ldots,x_r\right)$. Let $\hat{X} \sim \hp$. We then have $\mathcal{L}(f(Y)) = q$ and:

\begin{align*}
    \left\|q - \mathcal{L}(f(\hat{X})) \right\|_1
    &= \left\|\mathcal{L}(f(Y)) - \mathcal{L}(f(\hat{X})) \right\|_1 \\
    &\le \left\|\mathcal{L}(Y) - \mathcal{L}(\hat{X}) \right\|_1 \\
    &= \left\|q' - \hp \right\|_1 \\
    &= O\left(n^{-1/(2+r-\varepsilon)}\right)
\end{align*}

Thus, $\mathcal{L}(f(\hat{X}))$ is an estimator that achieves a rate of $O\left(n^{-1/(2+r-\varepsilon)}\right)$ on $r$-dimensional Lipschitz continuous densities. However, it is known that no estimator can achieve this rate, leading to a contradiction.

\section{COCO Scatter Plots} \label{appx:coco}

\begin{figure}[h]
    \centering
    % First image
    \begin{subfigure}[b]{0.24\textwidth}
        \centering
        \includegraphics[width=\textwidth]{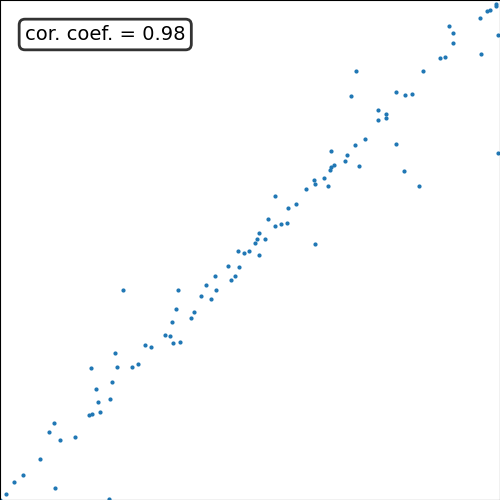}
        \caption{$(120,160)$v$(120,161)$}
    \end{subfigure}
    \hfill
    % Second image
    \begin{subfigure}[b]{0.24\textwidth}
        \centering
        \includegraphics[width=\textwidth]{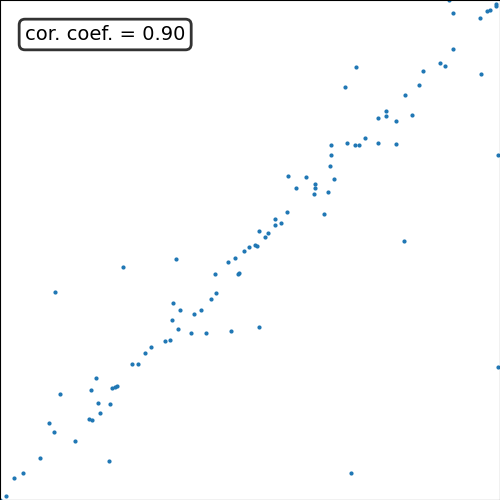}
        \caption{$(120,160)$v$(120,162)$}
    \end{subfigure}
    \hfill
    % Third
    \begin{subfigure}[b]{0.24\textwidth}
        \centering
        \includegraphics[width=\textwidth]{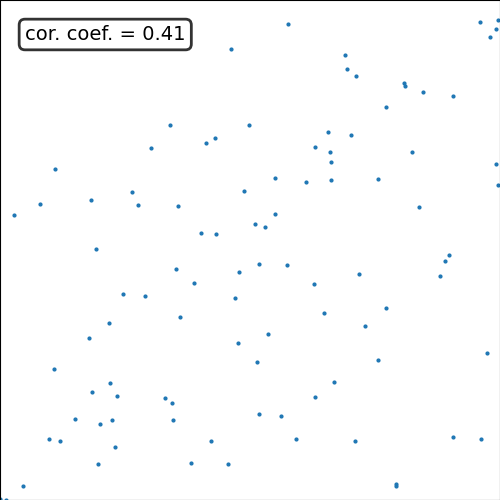}
        \caption{$(120,160)$v$(120,400)$}
    \end{subfigure}
    \hfill
    % Fourth
    \begin{subfigure}[b]{0.24\textwidth}
        \centering
        \includegraphics[width=\textwidth]{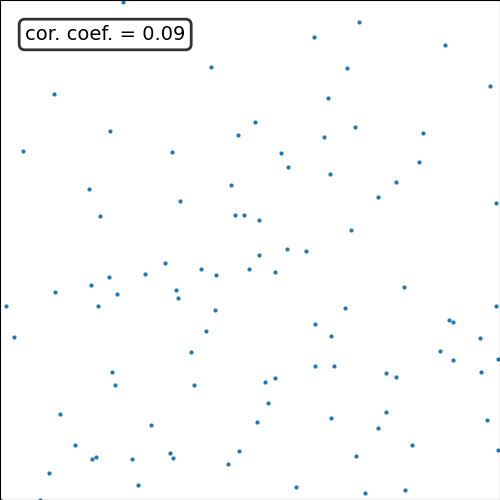}
        \caption{$(120,160)$v$(320,520)$}
    \end{subfigure}

    %Second row
    \begin{subfigure}[b]{0.24\textwidth}
        \centering
        \includegraphics[width=\textwidth]{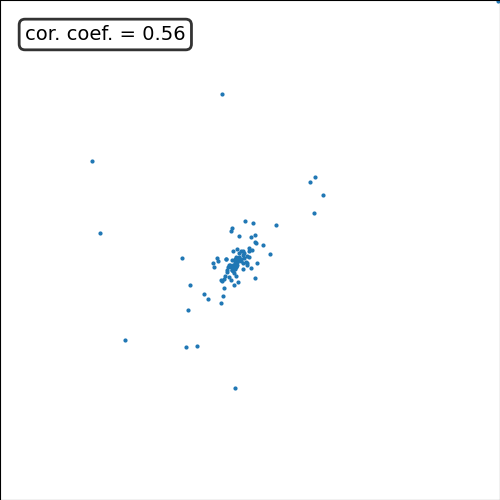}
        \caption{$(120,160)$v$(120,161)$ cond.}
    \end{subfigure}
    \hfill
    % Second image
    \begin{subfigure}[b]{0.24\textwidth}
        \centering
        \includegraphics[width=\textwidth]{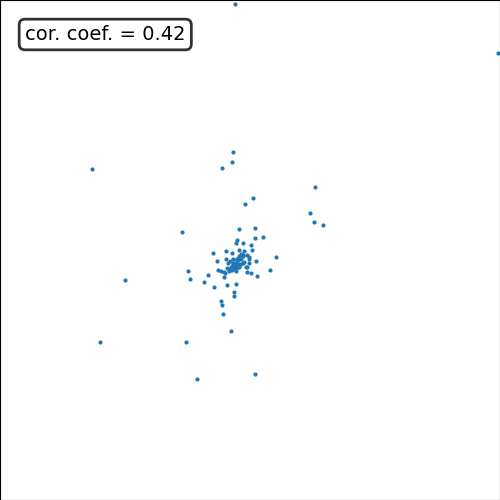}
        \caption{$(120,160)$v$(120,162)$ cond.}
    \end{subfigure}
    \hfill
    % Third
    \begin{subfigure}[b]{0.24\textwidth}
        \centering
        \includegraphics[width=\textwidth]{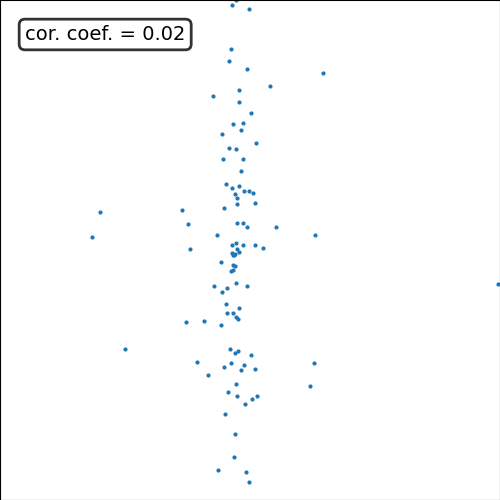}
        \caption{$(120,160)$v$(120,400)$ cond.}
    \end{subfigure}
    \hfill
    % Fourth
    \begin{subfigure}[b]{0.24\textwidth}
        \centering
        \includegraphics[width=\textwidth]{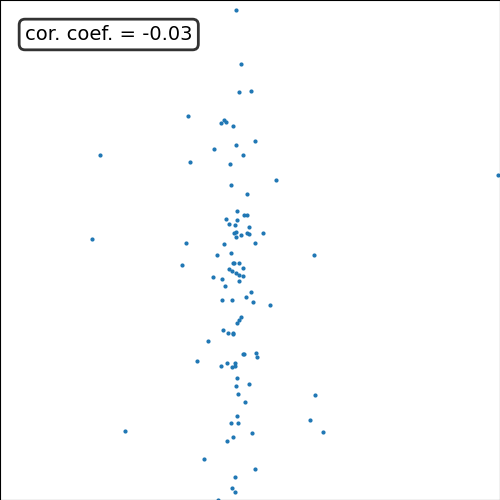}
        \caption{$(120,160)$v$(320,520)$ cond.}
    \end{subfigure}
    \caption{This figure presents scatter plots analogous to those in Figure \ref{fig:scatterplots} of the main text, but derived from the COCO training set \citep{coco}. The conditional scatter plots are based on pixel (121,160) being near its median value.}

    \label{fig:coco-scatterplots}
\end{figure}
Due to memory constraints, we used a subset of the data:
\begin{enumerate}
    \item 4000 random samples were initially selected.
    \item From these, 100 images with pixel (121,160) nearest to the median were chosen for the conditional plots.
\end{enumerate}

Note that increasing the sample size for conditioning resulted in lower observed correlation. This is because a larger sample allows for a more precise conditioning, better approximating the true conditional distribution. The wider the range of values for the conditioning pixel (121,160), the more the selected points resemble the unconditional distribution, potentially introducing spurious correlation.

These experiments provide an approximation of the conditional data. In our observations, using larger datasets consistently and significantly reduced the observed correlation. This suggests that using an even larger dataset would likely further reduce the observed correlation, bringing the results closer to the true conditional independence structure.

\end{document}